\definecolor{color1}{RGB}{128, 0, 0}
\definecolor{color2}{RGB}{0, 0, 128}
\definecolor{color3}{RGB}{0, 128, 0}
\newtheorem{remark}{Remark}
\newcommand{\equa}[1]{
\begin{equation}
\begin{aligned}
	#1
\end{aligned}	
\end{equation}
}
\newcommand{\snorm}[1]{
\left\|
	#1
\right\|^2
}
\newcommand{\bracket}[1]{
\left(
	#1
\right)
}
\newcommand{\sbracket}[1]{
\left[
	#1
\right]
}
\newtheorem{theorem}{Theorem}
\newtheorem{lemma}{Lemma}
\newtheorem{assump}{Assumption}
\newtheorem{corollary}{Corollary}
\newtheorem{fact}{Fact}
\begin{document}
\title{Communication-Efficient Stochastic Zeroth-Order Optimization for Federated Learning}
\author{Wenzhi Fang, Ziyi Yu, Yuning Jiang,~\textit{Member, IEEE}, Yuanming Shi,~\textit{Senior Member, IEEE}, \\ Colin N. Jones,~\textit{Senior Member, IEEE}, and Yong Zhou,~\textit{Senior Member, IEEE}
%\author{Wenzhi Fang, Ziyi Yu, Yuning Jiang, Yuanming Shi, and Yong Zhou  
\thanks{Manuscript received Jan. 23, 2022; revised Jun. 3, 2022 and Sept. 22, 2022; accepted Oct. 06, 2022. The associate editor coordinating the review of this manuscript and approving it for publication was Dr. Mingyi Hong. The work of Yong Zhou was supported by the National Natural Science Foundation of China (NSFC) under Grants U20A20159 and 62001294. The work of Yuning Jiang was supported by the Swiss National Science Foundation under the RISK project (Risk Aware Data-Driven Demand Response),
	grant number 200021175627. (Corresponding author: Yong Zhou.)
}
\thanks{
	Wenzhi Fang, Ziyi Yu, Yuanming Shi, and Yong Zhou are with the School of Information Science and Technology, ShanghaiTech University, Shanghai 201210, China (e-mail: \{fangwzh1, yuzy, shiym, zhouyong\}@shanghaitech.edu.cn).
}
\thanks{
	Yuning Jiang and Colin N. Jones are with Automatic Control Laboratory, EPFL, Switzerland (e-mail: \{yuning.jiang, colin.jones\}@epfl.ch).
}
}
%\{yuning.jiang, colin.jones\}

\maketitle

\begin{abstract}
Federated learning (FL), as an emerging edge artificial intelligence paradigm, enables many edge devices to collaboratively train a global model without sharing their private data. 
To enhance the training efficiency of FL, various algorithms have been proposed, ranging from first-order to second-order methods.
However, these algorithms cannot be applied in scenarios where the gradient information is not available, e.g., federated black-box attack and federated hyperparameter tuning. To address this issue, in this paper we propose a derivative-free federated zeroth-order optimization (FedZO) algorithm featured by performing multiple local updates based on stochastic gradient estimators in each communication round and enabling partial device participation. 
Under non-convex settings, we derive the convergence performance of the FedZO algorithm on non-independent and identically distributed data and characterize the impact of the numbers of local iterates and participating edge devices on the convergence.
To enable communication-efficient FedZO over wireless networks, we further propose an over-the-air computation (AirComp) assisted FedZO algorithm.
With an appropriate transceiver design, we show that the convergence of AirComp-assisted FedZO can still be preserved under certain signal-to-noise ratio conditions.
Simulation results demonstrate the effectiveness of the FedZO algorithm and validate the theoretical observations. 
	
\end{abstract}

\begin{IEEEkeywords}
Federated learning, zeroth-order optimization, convergence, over-the-air computation.
\end{IEEEkeywords}

%The simulation also showed the effectiveness of the FedZO by comparing it with FedAvg on the problem with gradient information. 

\section{Introduction}

With the rapid advancement of the Internet of Things (IoT), a massive amount of data is generated and collected by various edge devices (e.g., sensors, smart phones). Because of the limited radio spectrum resource and increasing privacy concerns, gathering geographically distributed data from a large number of edge devices into a cloud server to enable cloud artificial intelligence (AI) may not be practical. To this end, edge AI has recently been envisioned as a promising AI paradigm \cite{yuanming_jsac}. 
Unlike cloud AI that relies on a cloud server to conduct centralized training, edge AI exploits the computing power of multiple edge devices to perform model training with their own local data in a distributed manner. 
Federated learning (FL) \cite{mcmahan17a}, as a representative edge AI framework, enables multiple edge devices to collaboratively train a shared model without exchanging their local data, which effectively alleviates the communication burden and privacy concerns.  Nowadays, FL has found application in various fields, including autonomous driving \cite{shi2021mobile}, recommendation systems \cite{recommend}, healthcare informatics \cite{Fedyang_network}, etc.

As a result of the popularity of FL, the federated optimization problem for model training has attracted a growing body of attention from both academia and industry in recent years. Various algorithms have been proposed to attain a fast convergence rate and reduce the  communication load, including both first- (e.g., FedAvg \cite{mcmahan17a}, FedPD \cite{fedpd}, FedNova \cite{jianyu}) and second-order algorithms (e.g., FedDANE \cite{fedDANE}). 
Most existing algorithms rely on gradient and/or Hessian information to solve the federated optimization problem. 
%However, the gradient cannot be obtained in scenarios where the analytic expressions of the loss functions are unavailable, or are expensive.... The former includes applications such as federated hyperparameter tuning \cite{bayesian} and distributed black-box attack of deep neural network \cite{Gubin}. The latter cover a application of the online sensor selection where the calculation of gradients involves a operation of matrix inverse \cite{liu2020primer}. In other words, the existing algorithms can not efficiently tackle the federated optimization problems where the gradient information is not available. 
However, such information cannot be obtained in scenarios where the analytic expressions of the loss functions are unavailable, such as federated hyperparameter tuning \cite{bayesian} or distributed black-box attack of deep neural networks (DNN) \cite{yi2021zeroth}. 
%Besides, the existing distributed zeroth-order algorithm \cite{tang2020distributed,yi2021zeroth} does not account for the unique characters of FL, in which all the edge devices participate in the optimization and perform one local update per communication round. While in the FL system with a massive amount of edge devices and high-dimension model parameters, it is impractical due to the communication overhead.
In other words, existing algorithms cannot tackle federated optimization problems when gradient information is not available. 
This motivates us to develop a communication-efficient federated zeroth-order optimization algorithm that does not require gradient or Hessian information.

%More recently, the federated Bayesian optimization, as a derivative-free algorithm, is proposed in \cite{bayesian},  which, however, is not suitable for solving problems of high-dimension \cite{bayes_tutorial}. 

%To tackle this issue, the federated Bayesian optimization is proposed in \cite{bayesian}, which, however, is not suitable for solving problems of high-dimension (generally no more than $20$) \cite{bayes_tutorial}. On the other hand, results in \cite{guanghui,duchi2015optimal} showed that the zeroth-order optimization algorithms based on two-point gradient estimator perform at most $\mathcal{O}(\sqrt{d})$ times slower than the gradient descent method where $d$ denotes the dimension of the model parameter. This thus motivates us to develop a distributed zeroth-order optimization method to tackle such federated optimization problems.

Parallel with the research on algorithm design for FL, the implementation of FL over wireless networks is also an emerging research topic. Random channel fading and receiver noise raise unique challenges for the training of FL over wireless networks. 
Guaranteeing the learning performance with limited radio resource is a challenging task, which requires the joint design of the learning algorithm and communication strategy. Along this line of research,  
the authors in
%\cite{mingzhe_twc,guanding_schedule}
 \cite{guanding_schedule} studied the joint resource allocation and edge device selection to enhance learning performance.
Both studies adopted the orthogonal multiple access (OMA) scheme, where the number of edge devices that can participate in each communication round is restricted by the number of available time/frequency resource blocks. 
%The limited communication resource is the main bottleneck to support FL over wireless networks with a large number of edge devices. 
The limited radio resource turns out to be the main performance bottleneck of wireless FL. 
Fortunately, over-the-air computation (AirComp), as a non-orthogonal multiple access scheme, allows concurrent transmissions over the same radio channel to enable low-latency and spectrum-efficient wireless data aggregation \cite{zhibin_air,kaibin_WC,uav_min}, thereby mitigating the communication bottleneck \cite{me_t}. 
Motivated by this observation, various AirComp-assisted FL algorithms were proposed in \cite{yangkai_air,amiri2020machine,Broadband} to achieve fast model aggregation, wherein all of them adopted first-order optimization algorithm.
There still lacks a thorough investigation on AirComp-assisted FL with zeroth-order optimization.

\subsection{Main Contributions}

In this paper, we consider a federated optimization problem, where the gradient of the loss function is not available. We propose a derivative-free federated zeroth-order optimization algorithm, named FedZO, whose key features are performing multiple local updates based on a stochastic gradient estimator in each communication round and enabling partial device participation. 
We establish a convergence guarantee for the proposed FedZO algorithm and study its implementation over wireless networks with the assistance of AirComp, which is challenging for the following reasons. %First, though the local iterates reduce the communication overhead, excessive local iterates will increase the discrepancies between local models due to the data heterogeneity so that the FedZO algorithm not to converge. 
First, although executing multiple local iterates in each communication round reduces the communication overhead, it also increases the discrepancies among local models due to the data heterogeneity and may even lead to algorithmic divergence. To reduce the communication overhead while preserving the convergence, the relationship between the convergence behavior and the number of local iterates needs to be characterized.
%How to balance the tradeoff between the communication savings and the convergence is not clear. 
%First, due to the data heterogeneity, the data available locally cannot capture the overall data distribution, thereby making the existing conclusion analysis for the FedZO with multiple local updates and partial device participation harder.
Second, the stochastic gradient estimator adopted in FedZO is not an unbiased estimate of the actual gradient, as demonstrated in\cite{liu2018zeroth, GaoJZ18}.
%, and the variance of the stochastic gradient estimator may not be bounded. 
These unique features together with multiple local iterates and partial device participation per communication round make the existing convergence analysis framework for FedAvg not applicable to the proposed FedZO algorithm. 
%Second, to analysis the convergence of the FedZO,  we need to bound the squared norm of local updates. The existing literature assume that the stochastic gradients own bounded variance. These local updates are bounded directly in the former,  while the latter further combined with the actual gradients can bounded the  stochastic gradients. However, the variance of the gradient estimator is unbounded. Besides, our While the multiple local iterates and partial participation and the variance of the gradient estimator may not be bounded. which make the existing analysis can not be extended to the FedZO algorithm. 
%Third, the random channel fading and receiver noise in the local model aggregation further make the convergence analysis of the AirComp-based FedZO algorithm harder. 
Third, to characterize the convergence of the AirComp-assisted FedZO algorithm, the impact of the random channel fading and receiver noise in global model aggregation needs to be further taken into account. This not only complicates the convergence analysis but also poses a new problem for the communication strategy design. 
In this paper, we develop a unified convergence analysis framework to address the aforementioned challenges. 
The main contributions of this paper are summarized as follows:

\begin{itemize}
\item We develop the derivative-free FedZO algorithm, which inherits the framework of the FedAvg algorithm but only queries the values of the objective function, to handle federated optimization problems without using gradient or Hessian information. To cater for the FL system with a large number of edge devices and to reduce the communication overhead, the proposed FedZO algorithm enables partial device participation and performs multiple local iterates in each communication round.  
\item We establish a convergence guarantee for the proposed FedZO algorithm under non-convex settings and on non-independent and identically distributed (non-i.i.d.) data, and then derive the maximum number of local iterates required for preserving convergence.
We demonstrate that the proposed FedZO algorithm can attain linear speedup in the number of local iterates and the number of participating edge devices. 
\item We study the implementation of the FedZO algorithm over wireless networks with the assistance of AirComp for the aggregation of local model updates in the uplink. With an appropriate transceiver design that can mitigate the impact of the fading and noise perturbation, we study the convergence behavior of the AirComp-assisted FedZO algorithm and characterize the impact of the signal-to-noise ratio (SNR) on the convergence performance.
%In particular, instead of solving an optimization problem as in the existing literature \cite{yangkai_air,zhibin_fl,zhangyingjun_FL,xiaowen_jsac}, the transceiver here can be obtained directly with a negligible cost of communication, thereby reducing the consumption of computation.
\end{itemize}
We conduct extensive simulations to evaluate the performance of the proposed FedZO and AirComp-assisted FedZO algorithms.
Simulation results show that the proposed FedZO algorithm is convergent under various parameter settings and outperforms existing distributed zeroth-order methods.
Moreover, simulations illustrate that the performance of the proposed FedZO algorithm is comparable to that of the FedAvg algorithm, which indicates that our proposed algorithm can serve as a satisfactory alternative for FedAvg when first-order information is not available. 
Results also confirm that, with an appropriate SNR setting,  the AirComp-assisted FedZO algorithm preserves convergence.
%with an appropriate transceiver design. 

\subsection{Related Works}

The study of FL started from the seminal work \cite{mcmahan17a}, where the authors proposed a communication-efficient federated optimization algorithm known as FedAvg. Subsequently, various articles established convergence guarantees for the FedAvg algorithm \cite{yu2019parallel,LiHYWZ20,wang2021field}.
Following the FedAvg algorithm, many other first-order methods have been proposed, e.g., FedPD \cite{fedpd}, FedNova \cite{jianyu}, FedProx \cite{fed_prox}, SCAFFOLD \cite{scaffold}, and FedSplit \cite{fedsplit}.
To further reduce the communication overhead, several second-order optimization algorithms were proposed, such as FedDANE \cite{fedDANE} and GIANT \cite{giant}. Although the aforementioned first- and second-order algorithms have broad applications, there are still many FL tasks where the gradient and Hessian information are unavailable and thus require zeroth-order optimization.

Recently, several works \cite{yi2021zeroth,zone_hong,tang2020distributed,DSZO,Anit_frank_wolfe,wenjie,Anit_random} focused on studying distributed zeroth-order optimization.
 Specifically, the authors in \cite{zone_hong} developed a so-called ZONE-S algorithm based on the primal-dual technique. 
 In \cite{tang2020distributed}, the authors employed the gradient tracking technique to develop a fast distributed zeroth-order algorithm. 
However, ZONE-S requires $\mathcal{O}\bracket{T}$ ($T$ denotes the number of total iterations) sampling complexity per iteration while the algorithm proposed in \cite{tang2020distributed} considered the deterministic setting. 
More recently, the authors in \cite{yi2021zeroth} proposed an algorithm with $\mathcal{O}\bracket{1}$ sampling complexity per iteration, which attains linear speedup in the number of edge devices. 
\cite{DSZO} proposed a decentralized zeroth-order algorithm that allows multiple local updates. However, the theoretic analysis in \cite{DSZO} focused on the strongly convex scenario and relied on the Lipschitzness of local functions, which is relatively restrictive  \cite{tighter_theory}. 
%In addition, both \cite{yi2021zeroth} and \cite{DSZO} adopted the gradient estimator based on single direction, which inevitably induce a large variance. 
The authors in \cite{Anit_frank_wolfe} proposed and analyzed a distributed zeroth-order Frank-Wolfe algorithm for constrained optimization. Based on single-point and Kiefer-Wolfowitz type gradient estimators, the authors in \cite{wenjie,Anit_random} proposed two distributed zeroth-order algorithms over time-varying graphs. 
	It is worth noting that the aforementioned works mainly consider the peer-to-peer architecture while the studies on the central-server-based architecture are very limited.
Moreover, most of the existing distributed zeroth-order algorithms focused on full device participation, which may not be practical for FL systems with limited radio resources and a large number of edge devices \cite{LiHYWZ20}. 

AirComp has recently been adopted to support the implementation of FL over wireless networks \cite{yangkai_air,zhangyingjun_FL,xiaowen_jsac,zhibin_fl,yonina_fl,Accelerated_cobin}, where channel fading and receiver noise inevitably distort the model aggregation, and in turn introduce a detrimental impact on the learning performance \cite{yangkai_air}.
%To alleviate this issue, the authors in \cite{yangkai_air} proposed to suppress the aggregation error of AirComp via appropriate transceiver design. 
%The authors in \cite{zhibin_fl,zhangyingjun_FL,xiaowen_jsac} characterized the analytic relationship between the aggregation error and the convergence behavior. 
The convergence behavior of various FL algorithms, e.g., vanilla gradient method \cite{zhangyingjun_FL} and stochastic gradient method \cite{xiaowen_jsac}, showed that the channel fading and noise perturbation typically introduce a non-diminishing optimality gap, which can be mitigated by transmit power control \cite{xiaowen_jsac}, beamforming design \cite{zhangyingjun_FL}, and device scheduling \cite{zhibin_fl}.
 %The authors in \cite{zhibin_fl} further proposed to mitigate the optimality gap by optimize the transceiver and user selection.
%Such schemes result in significant computation cost as an extra optimization problem, generally non-convex, needs to be solved per communication round, 
%% 01, 01, 2022
%%The emphasis of \cite{yangkai_air,zhangyingjun_FL,xiaowen_jsac,zhibin_fl} is to suppress the aggregation error of AirComp by optimizing the communication resource allocation to reduce the optimality gap, while the gap cannot be eliminated. 
%In \cite{yonina_fl}, the authors proposed a precoding strategy accounting for the fact that the expected local model updates are gradually decreasing over iteration and showed that the FedAvg algorithm converges to the optimal solution with such a precoding strategy under strongly convex settings and the assumption of bounded gradients. 
In \cite{yonina_fl}, the authors proposed a  joint learning and transmission scheme to ensure global convergence for strongly convex problems. By utilizing the communication strategy in \cite{yonina_fl}, the authors in \cite{Accelerated_cobin} developed an AirComp-assisted accelerated gradient descent algorithm.
Despite the above progress, the existing works focused on the first-order method, while there is no relevant literature studying the AirComp-assisted zeroth-order optimization algorithm.

%To deploy FL in practical systems, there is a growing body of work studing the implementation of federated optimization  algorithms in the wireless environment. Over-the-air computation (AirComp), a spectrum-efficient analog transmission scheme, was widely adopted to accelerate the model aggregation \cite{yangkai_air}. Specifically, AirComp, which integrates the communication and computation processes, can achieve ultra-fast wireless data aggregation in IoT networks \cite{zhibin_air}. This is achieved by enabling the concurrent data transmissions from all edge devices over the same radio channel and exploiting the waveform superposition property of multiple-access channels (MACs) \cite{me_t}. To accelerate the convergence of FL, the authors in \cite{yangkai_air,zhibin_fl,zhangyingjun_FL,yonina_fl} exploited the advantages of AirComp to develop a fast model aggregation scheme. Besides, \cite{zhibin_fl,zhangyingjun_FL,yonina_fl} theoretically studied the impact of AirComp on the convergence performance of the federated optimization. Especially, the authors in \cite{yonina_fl} proposed a novel precoding strategy to enable the algorithm to converge to the optimal solution for strongly convex problems.
%Continuing ...

\subsection{Organization}
The remainder of this paper is organized as follows.
We present the problem formulation and propose a federated zeroth-order optimization algorithm in Section \ref{fedrated_zero}. 
Section \ref{theory} provides the convergence analysis.
Section \ref{aircomp} studies the implementation of the proposed FedZO algorithm over wireless networks using AirComp. 
The simulation results are provided in Section \ref{simulation}. 
Finally, we conclude this paper in Section \ref{conclusion}.

\noindent \textbf{Notation:}	
We denote the $\ell_2$ norm of vectors by $\|\cdot\|$. $[T]$ denotes the set $\{1,2,\ldots,T-1\}$. $\mathbb{S}^{d} = \{ \bm v \in \mathbb{R}^d \mid \|\bm v\| =1 \}$ denotes a $d$-dimensional unit sphere. $\mathbb{B}^{d} = \{ \bm v \in \mathbb{R}^d \mid \|\bm v\| \leq1 \}$ denotes a $d$-dimensional unit ball.  We denote uniform distributions over $\mathbb{S}^{d}$ and $\mathbb{B}^d$ by $\mathcal{U}(\mathbb{S}^{d})$ and $\mathcal{U}(\mathbb{B}^d)$, respectively. We denote $\sim$ as the uniform sampling.
For a function $F$, $\nabla F$ and $\widetilde{\nabla}F$ denote the gradient and gradient estimator, respectively.

\section{Federated Zeroth-Order Optimization} \label{fedrated_zero}

In this section, we first introduce the federated optimization problem and then propose a federated zeroth-order optimization algorithm.
%sketch some preliminaries on zeroth-order optimization then

\subsection{Problem Formulation}

%Consider a FL task over a network consisting of a central server and $N$ edge devices indexed by $\{1,2,\ldots,N\}$. The local data distribution of edge device $i$ is denoted as $\mathcal{D}_i$, which can be different for different edge devices.   %loss function that depends on its regional data distribution. 
%We assume that each edge device contributes a uniform share of the global loss.
%Mathematically, the above federated optimization problem can be stated as follows
%\equa{
%	\label{global_FL}
%	\min_{\bm x \in \mathbb{R}^d}  f(\bm x)  & \triangleq \frac{1}{N}\sum_{i=1}^{N} f_i\left(\bm x\right), \\
%	 f_i\left(\bm x\right) & \triangleq \mathbb{E}_{ \xi_i \sim \mathcal{D}_i}[F_i(\bm x, \xi_i)],
%}
%where $\bm x \in \mathbb{R}^d$ denotes the model parameter, $F_i\left(\bm x, \xi_i\right)$ represents the loss of a realization of $\xi_i$ incurred by a particular model parameter $\bm x$, $f_i(\bm x)$ is the local loss of edge device $i$, and $f(\bm x)$ denotes the global loss. The goal of the central server is to coordinate edge devices to minimize the global loss as in \eqref{global_FL} while keeping their local data private.

Consider an FL task over a network consisting of a central server and $N$ edge devices indexed by $\{1,2,\ldots,N\}$. The goal of the central server is to coordinate all edge devices to collaboratively solve the following federated optimization problem 
\equa{
	\label{global_FL}
	\min_{\bm x \in \mathbb{R}^d}  f(\bm x)  \triangleq \frac{1}{N} \sum_{i=1}^{N} f_i\left(\bm x\right),
}
where $\bm x \in \mathbb{R}^d$ denotes the model parameter of dimension $d$, and $f_i(\bm x)$ and $f(\bm x)$ denote the local loss function of edge device $i$ and the global loss function at the central server evaluated at model parameter $\bm x$, respectively. We assume that each edge device with a local dataset is equally important for the global model \cite{wang2021field}. 
In \eqref{global_FL}, $f_i (\bm x)$ measures the expected risk over the local data distribution denoted as $\mathcal{D}_i$ at edge device $i$, given by
% $f_i\left(\bm x\right) \triangleq \mathbb{E}_{ \xi_i \sim \mathcal{D}_i}[F_i(\bm x, \xi_i)],$ 
\[
 f_i\left(\bm x\right) \triangleq \mathbb{E}_{ \xi_i \sim \mathcal{D}_i}[F_i(\bm x, \xi_i)],
\]
where $F_i\left(\bm x, \xi_i\right)$ represents the loss with respect to $\xi_i$ evaluated at model parameter $\bm x$ and $\xi_i$ denotes a random variable uniformly distributed over $\mathcal{D}_i$. In particular, a realization of $\xi_i$ is a single data sample.
With sampling information of $\xi_i$ and $\bm x$, edge device $i$ can query the function value of $F_i$,
which serves as a stochastic approximation of the expected loss $f_i(\bm x)$. Note that the analytic expression and gradient information of $F_i$ are not available. 
%In this work, we consider the data heterogeneity which means distribution $\mathcal{D}_i$ of all the edge devices are not identical.  

%The FedAvg algorithm can be applied to tackle the federated optimization problem \eqref{global_FL} with low communication overhead, which is achieved by that edge devices run multiple local stochastic gradient descent steps during one communication round. 

\begin{remark}
The scenarios where the gradient information is not available arise in many practical applications \cite{liu2020primer}, including but not limited to federated black-box attacks of DNN \cite{yi2021zeroth} and federated hyperparameter tuning in model training \cite{bayesian}. 
To be specific, in federated black-box attacks, the gradient information cannot be acquired as the deep model is hidden. In the federated hyperparameter tuning task, there does not exist an analytic relationship between the training loss and the hyperparameters. 	
\end{remark}	

\subsection{Preliminaries on Stochastic Gradient Estimator}	
%To facilitate our algorithm development, w
We adopt a mini-batch-type stochastic gradient estimator \cite{ZOOADMM}. Specifically, for function $F_i$,  the mini-batch-type stochastic gradient estimator is given by
	\begin{comment}
	\begin{align}\label{stochastic_estimator}
	&\widetilde{\nabla}_v^{\mu} F_i \left (\bm x,  \{\xi_{i,m}\}_{m=1}^{b_1}, \{\bm v_{i,n}\}_{n=1}^{b_2} \right) = \frac{d}{b_1 b_2}\sum_{m=1}^{b_1} \sum_{n=1}^{b_2} \frac{\bm v_{i,n}}{\mu} \nonumber \\
	&
	\cdot\big(
	F_i (\bm x \!+\! \mu \bm v_{i,n},   \xi_{i,m} ) \!-\! F_i (\bm x,  \xi_{i,m})
	\big),
	\end{align}
	\end{comment}
	\begin{align}\label{stochastic_estimator_ori}
	&\widetilde{\nabla} F_i \left (\bm x,  \{\xi_{i,m}\}_{m=1}^{b_1}, \{\bm v_{i,n}\}_{n=1}^{b_2}, \mu \right)  \nonumber \\
	=&
	\frac{1}{b_1 b_2}\!\sum_{m=1}^{b_1}\! \sum_{n=1}^{b_2}\! \frac{d \bm v_{i,n}}{\mu} \!
	\Big(\!
	F_i (\bm x \!+\! \mu \bm v_{i,n},   \xi_{i,m} ) \!-\! F_i (\bm x,  \xi_{i,m})
	\! \Big),
	\end{align}
	where $\{\xi_{i,m}\}_{m=1}^{b_1}$ is a sequence of independent and identically distributed (i.i.d.) random variables with the same distribution as $\xi_i$, $\{\bm v_{i,n}\}_{n=1}^{b_2}$ is a sequence of i.i.d. random vectors with distribution $\mathcal{U}(\mathbb{S}^{d})$, and $\mu$ is a positive step size. 
It has been shown in \cite{GaoJZ18} that
	%\begin{small}
	\begin{align}
	\mathbb{E}\!\left[\!\frac{d \bm v_{i,n}}{\mu} \!\Big(\!
	F_i (\bm x \!+\! \mu \bm v_{i,n},   \xi_{i,m} ) \!-\! F_i (\bm x,  \xi_{i,m})
	\! \Big)\!\right] \!=\!\nabla f_i^{\mu} (\bm x), \label{single_equal}
	\end{align} 
	%\end{small}
	%$\forall m, n,$ 
	where the expectation is taken over $\{\xi_{i,m}, \bm v_{i,n}\}$ and $f_i^{\mu}(\bm x) =  \mathbb{E}_{\bm u\sim\mathcal{U}(\mathbb{B}^d)}\left[
	f_i\left(\bm x + \mu \bm u \right)\right]$ is a locally averaged version of $f_i(\bm x)$. 
	Furthermore, we have 
	%\begin{small}
		\begin{align}
		\mathbb{E} \left[\widetilde{\nabla} F_i \left(\bm x,  \{\xi_{i,m}\}_{m=1}^{b_1}, \{\bm v_{i,n}\}_{n=1}^{b_2}, \mu \right)
		\right] = \nabla f_i^{\mu} (\bm x). \label{mini_batch_equal}
		\end{align}
	%\end{small}
	where the expectation is taken over $\{\xi_{i,m}\}_{m=1}^{b_1}$ and $\{\bm v_{i,n}\}_{n=1}^{b_2}$.
As $\nabla f_i^{\mu} (\bm x)$ is a biased approximation of $\nabla f_i(\bm x)$ \cite{liu2020primer}, \eqref{stochastic_estimator_ori} is a biased estimate of the actual gradient.

	 %It is worth noting that \eqref{stochastic_estimator_ori} will converge to the combination of directional derivatives as $\mu \rightarrow 0$ \cite{nesterov2017random}.
	\begin{comment}
	where $\{\xi_{i,m}\}_{m=1}^{b_1}$ are $b_1$ independent random variables and their realizations can be treated as
	 mini-batch data sampled from $\mathcal{D}_i$, $\{\bm v_{i,n}\}_{n=1}^{b_2}$ are $b_2$ random vectors that are independently sampled from $\mathcal{U}(\mathbb{S}^{d})$, and $\mu$ is a positive step size.
	It is worth noting that \eqref{stochastic_estimator_ori} will converge to the combination of directional derivatives as $\mu \rightarrow 0$ \cite{nesterov2017random}.
	\end{comment}

	 The mini-batch-type stochastic gradient estimator enjoys a low variance than the two-point stochastic gradient estimator \cite{zone_hong, Anit_frank_wolfe, ZOOADMM}. 
	 Besides, the mini-batch sizes $b_1$ and $b_2$ are independent of $d$, and hence the computation complexity of \eqref{stochastic_estimator_ori} does not scale with the  dimension of variable $\bm x$.

\subsection{FedZO Algorithm}  
\begin{comment}

Distributed zeroth-order optimization methods have been studied in \cite{tang2020distributed,zone_hong,yi2021zeroth}. \cite{tang2020distributed} discusses distributed zeroth-order optimization under non-convex setting, but it focuses on deterministic setup. \cite{zone_hong} presents a so-called ZONE algorithm that can handle stochastic setting, but its sampling complexity  $\mathcal{O}(T)$ per iteration is relatively large, which is thus not favorable in practice, as pointed out in \cite{yi2021zeroth}. The algorithm proposed in \cite{yi2021zeroth} attains the rate of $\mathcal{O}(\frac{\sqrt{d}}{\sqrt{NT}})$ with only $\mathcal{O}(1)$ sampling complexity. However, it concentrates on the decentralized setup and needs the edge devices to exchange model information only after one local update, thereby posing a significant burden on communication. 
More recently, the authors in \cite{DSZO} study a zeroth-order algorithm that allows multiple local updates. Their analysis focuses on the strongly convex setting and relies on the uniformly bounded gradients assumption, which is relatively restrictive \cite{tighter_theory}.
Besides, all the aforementioned algorithms assume that all the edge devices participate in the training at each iteration, which is not favourable to the FL system with a massive amount of edge devices. \cite{LiHYWZ20}.
\end{comment}

Inspired by the FedAvg algorithm, we develop a federated zeroth-order optimization algorithm summarized in Algorithm \ref{algo:fed_zero}.
The main idea of the FedZO algorithm is to get rid of the dependence on the gradient and reduce the frequency of model exchanges, which are achieved by employing a gradient estimator \eqref{stochastic_estimator_ori} and performing $H$ steps of stochastic zeroth-order updates per communication round, respectively.
The FedZO algorithm consists of the following four phases in each round.
\begin{itemize}  
\item 
\emph{Global Model Dissemination:} At the beginning of the $t$-th round, the central server uniformly samples $M$ edge devices to participate in the local training. The set of scheduled edge devices in round $t$ is denoted as $\mathcal{M}_t$. Then, the central server disseminates its current global model parameter $\bm x^{t}$ to the selected edge devices.
\item 
\emph{Local Model Update:} After receiving the model parameter $\bm x^t $ from the central server, each edge device $i\in \mathcal{M}_t$ initializes its local model $\bm x_i^{(t,0)}$ with the received global model from the central server, i.e., $\bm x_i^{(t,0)} = \bm x^{t}$, and then takes a total of $H$ iterates of stochastic zeroth-order updates. %In particular, at the $k$-th iterate of the $t$-th round, after generating $\xi_{i,m}^{(t,k)} \sim \mathcal{D}_i$ and $\bm v_{i,n}^{(t,k)} \sim \mathcal{U}(\mathbb{S}^{d})$ independently for $b_1$ times and $b_2$ times, respectively, edge device $i$ computes a stochastic gradient estimator according to \eqref{stochastic_estimator_ori}. 
		In particular, at the $k$-th iteration of the $t$-th round, %based on $\{\xi_{i,m}^{(t,k)}\}_{m=1}^{b_1}$ independently sampled from $\mathcal{D}_i$ and multiple random directions $\{\bm v_{i,n}^{(t,k)}\}_{n=1}^{b_2}$ independently sampled from $\mathcal{U}(\mathbb{S}^{d})$, 
		edge device $i$ computes a stochastic gradient estimator according to \eqref{stochastic_estimator_ori}. 
For notational ease, we denote it as
\begin{align}\label{stochastic_estimator}
\bm e_i^{(t,k)} \!= \!\widetilde{\nabla} F_i \left (\bm x_i^{(t,k)}\!,\!  \{\xi_{i,m}^{(t,k)}\}_{m=1}^{b_1}\!,\! \{\bm v_{i,n}^{(t,k)}\}_{n=1}^{b_2}, \mu \right), 
\end{align}
where $\bm x_i^{(t,k)}$ represents the local model of edge device $i$ at the $k$-th iteration of the $t$-th round. 
%For notations ease, we denote the gradient estimator in \eqref{stochastic_estimator} by $\bm g(i,t,k)$.
Subsequently, the sampled edge devices update their local models by performing the following  stochastic zeroth-order update
\begin{align}\label{updata_equation}
\bm x_i^{(t,k\!+\!1)} \!=\! \bm x_i^{(t,k)}\! - \eta \bm e_i^{(t,k)},   k= 0,\! 1,\!\ldots \!, H\!-\!1,
\end{align}
where $\eta$ denotes the learning rate. 
After $H$ local iterates, edge device $i$ obtains an updated local model $\bm x_i^{(t,H)}$.
\item 
\emph{Local Model Uploading:} All edge devices in set $\mathcal{M}_t$ calculate the updates of their local models in this round, i.e., $\bm \Delta_i^{t} = \bm x_i^{(t,H)} - \bm x_i^{(t,0)}, ~ i \in \mathcal{M}_t$, and then upload these updates to the central server. 
\item 
\emph{Global Model Update:} After receiving local model updates from the sampled edge devices,  the central server aggregates these updates, i.e., $\bm \Delta^{t} = \frac{1}{M} \sum_{i \in \mathcal{M}_t} \bm \Delta_i^{t} $, and then updates the global model, i.e., $\bm x^{t+1} = \bm x^{t} + \bm \Delta^{t}$.
\end{itemize} 

\begin{algorithm}[t!]
    \DontPrintSemicolon
    \SetKwInput{Input}{Input}
    \SetAlgoLined
    %\LinesNumbered
    \Input{Initial model $\bm x^{0}$, learning rate $\eta$, step size $\mu$, mini-batch sizes $b_1$, $b_2$,} number of participating edge devices $M$
     \For{$t \in \{0,1,\dots,T-1\}$ }{ 
     	Uniformly sample a subset $\mathcal{M}_t$ of $M$ edge devices\;    
     Disseminate global model $\bm x^{t}$ to all edge devices in set $\mathcal{M}_t$\;
      \For{\textbf{edge device} $i\in \mathcal{M}_t$  \textbf{in parallel}}{
        Initialize local model $\bm x_i^{(t,0)}=\bm x^{t}$\;
        \For {$k =0,\dots,H-1$}{
            %Compute local stochastic gradient estimator according to \eqref{stochastic_estimator}\;
             Generate $\{\xi_{i,m}^{(t,k)}\}_{m=1}^{b_1} \sim \mathcal{D}_i$ independently \;
            Generate $\{\!\bm v_{i,n}^{(t,k)}\!\}_{n=1}^{b_2} \!\sim\! \mathcal{U}(\mathbb{S}^{d})$ independently \;
            \vspace{-4mm}
            %Compute a stochastic gradient estimator by \eqref{stochastic_estimator}
            Update $\bm x_i^{(t,k\!+\!1)} $ by \eqref{updata_equation}\;
        }
        Compute local model updates $\bm \Delta_i^{t} = \bm x_i^{(t,H)} - \bm x_i^{(t,0)}$\;
        Upload local model updates to central server\;
      }
      Aggregate local changes $\bm \Delta^{t} = \frac{1}{M} \sum_{i \in \mathcal{M}_t} \bm \Delta_i^{t} $\;
      Update global model $\bm x^{t+1} = \bm x^{t} + \bm \Delta^{t}$\;
     }
     \caption{FedZO Algorithm}
     \label{algo:fed_zero}
\end{algorithm}

Although the proposed FedZO algorithm adopts a similar framework as the FedAvg algorithm, the convergence analysis of the latter cannot be directly extended to that of the FedZO algorithm. The key factor hindering the extension is that the gradient estimator does not preserve specific properties of the stochastic gradient. For instance, the gradient estimator \eqref{stochastic_estimator} is not an unbiased estimate of the true gradient. 
%Moreover, the variance of the gradient estimator may not be bounded. However, the stochastic gradient with a bounded variance is a basic assumption in existing stochastic first-order optimization algorithms \cite{fedpd,jianyu}.  
Besides, the existing theoretical analysis framework for the distributed zeroth-order optimization method cannot be applied to the FedZO algorithm as existing zeroth-order algorithms \cite{zone_hong,tang2020distributed,yi2021zeroth} do not consider multiple steps of local model updates and partial device participation. 
A larger number of local iterates reduces the communication overhead,  but also increases the local model discrepancies and may even lead to algorithmic divergence. 
To preserve convergence for the developed FedZO algorithm, it is necessary to bound these discrepancies by appropriately choosing the number of local updates, i.e., $H$. 
%In addition, the impact of learning rate $\eta^t$, and step size $\mu_{t}$ for estimating the gradient also needs to be considered. 
In Section \ref{theory}, we will provide the convergence analysis for the FedZO algorithm.

\section{Convergence Analysis for FedZO}\label{theory}
In this section, we present the convergence analysis of the FedZO algorithm with full and partial device participation. 
%with partial device participation. Then, we show that with full participation, the FedZO own a better convergence result. 
To make our analysis applicable for more practical scenarios, we focus on the settings of non-convex loss functions and the non-i.i.d. data.
We make the following assumptions for the tractability of convergence analysis. 
\begin{assump}\label{bounded}
The global loss in \eqref{global_FL}, i.e., $f(\bm x)$, is lower bounded by $f_*$, i.e., $f(\bm x) \geq f_* >  -\infty$.
\end{assump}
\begin{assump}
\label{assump_smooth} 
$F_i \left(\bm x,  \xi_{i} \right)$, $f_i(\bm x)$, and $f(\bm x)$ are $L$-smooth. 
Mathematically, for any $\bm x \in \mathbb{R}^d$ and $\bm y\in \mathbb{R}^d$, we have
	\equa{
	\| & \nabla f_i(\bm y) - \nabla f_i(\bm x) \| \leq L \| \bm y - \bm x \|, ~ \forall i ,\\
%	f_i\left(\bm y \right)  &\leq f_i\left(\bm x \right) + \left\langle \nabla f_i(\bm x), \bm y - \bm x \right\rangle  + \frac{L}{2} \snorm{\bm y - \bm x} , ~ \forall i ,\\
	f\left(\bm y \right)  &\leq f\left(\bm x \right) + \left\langle \nabla f(\bm x), \bm y - \bm x \right\rangle  + \frac{L}{2} \snorm{\bm y - \bm x}. \nonumber %\\
		%f_i\left(\bm y \right) &\leq f_i\left(\bm x \right) + \left\langle \nabla f_i(\bm x), \bm y - \bm x \right\rangle  + \frac{L}{2} \snorm{\bm y - \bm x}, \forall i.
	}
\end{assump}
\begin{comment}
\begin{assump}
\label{assump_unbiased}
	The sampling at each edge device is uniform and stochastic gradient $\nabla F_i\left(\bm x , \xi_i \right)$ satisfies $\mathbb{E}_{\xi_i} \left[\nabla F_i\left(\bm x , \xi_i \right) \right] = \nabla f_i(\bm x),~ \forall \bm x \in \mathbb{R}^d, ~ \forall i$.
\end{assump}
\end{comment}
%\begin{assump}
%\label{bound_variance}
%The variance of stochastic gradient evaluated on a mini-batch of size $b_1$ from $\mathcal{D}_i$ is bounded as $\mathbb{E}_{\xi_i} \|\frac{1}{b_1}\sum_{m=1}^{b_1}\nabla F_i\left(\bm x, \xi_{i,m} \right) - \nabla f_i(\bm x)\|^2 \leq c \|\nabla f_i(\bm x)\|^2 + \frac{\sigma_g^2}{b_1}, ~ \forall \bm x \in \mathbb{R}^d, ~ \forall i $.	
%\end{assump}
\begin{assump}
\label{bound_variance}
The second-order moment of stochastic gradient $\nabla F_i\left(\bm x, \xi_{i} \right)$ satisfies $\mathbb{E}_{\xi_{i}} \|\nabla F_i\left(\bm x, \xi_{i} \right)\|^2 \leq c_g \|\nabla f_i(\bm x)\|^2 +\sigma_g^2, ~ \forall \bm x \in \mathbb{R}^d, ~ \forall i$, where $c_g \geq 1$.	
\end{assump}
\begin{assump}
\label{bound_heterogeneity}
The gradient dissimilarity between each local loss function and the global loss function is bounded as $\|\nabla f\left(\bm x \right) - \nabla f_i(\bm x)\|^2 \leq c_h\|\nabla f(\bm x)\|^2 + \sigma_h^2 , ~ \forall \bm x \in \mathbb{R}^d, ~ \forall i$, where $c_h$ is a positive constant.
\end{assump}
Assumptions  \ref{bounded}-\ref{bound_variance} are commonly used in stochastic optimization \cite{bottou2018optimization}. Assumption \ref{bound_heterogeneity}, also known as the bounded gradient dissimilarity assumption \cite{yi2021zeroth}, is adopted to characterize the non-i.i.d. extent of the local data distribution. Similar assumptions have also been made in the literature \cite{jianyu,yu2019parallel,LiHYWZ20,wang2021field,fed_prox,scaffold} for the convergence analysis under the non-i.i.d. setting. 
Note that these assumptions are only required for convergence analysis which are standard in zeroth-order optimization \cite{yi2021zeroth,GaoJZ18}.
%In this section, we consider that learning rate $\eta_t$ and step size $\mu_t$ are invariant during the training process, as in  \cite{wang2021field,jianyu,tighter_theory}. We thus omit their subscripts and denote them as $\eta$ and $\mu$. 

In the following, we first present the convergence analysis for full device participation and then extend the analysis to partial device participation.

\subsection{Full Device Participation} 
We first characterize the convergence of the FedZO algorithm with full device participation in Theorem \ref{theorem_full}. 
We take the squared gradient $\|\nabla f(\bm x^t) \|^2$ to evaluate the suboptimality of the iterates. 
The speed of approaching a stationary point is an important metric to evaluate the algorithmic effectiveness for non-convex problems \cite{nesterov2017random}.

\begin{theorem} \label{theorem_full}
	Suppose Assumptions \ref{bounded}-\ref{bound_heterogeneity} hold and the learning rate satisfies
	\equa{\label{theorem_eta_full}
\eta \leq  \min \left \{\frac{N}{72\tilde{c}_g\tilde{c}_hL}, \frac{2}{NH^2L}, \frac{1}{3\sqrt{\tilde{c}_g}HL} \right\},
} 
the FedZO algorithm with full device participation satisfies 
	\begin{align}\label{equation_full}
		\min_{t\in [T]}  
		%\frac{1}{T} \sum_{t=0}^{T-0} 
		\mathbb{E} \left\|\nabla f\left(\bm x^{t}\right)\right\|^2 
		\leq & 
		4\frac{ f\left(\bm x^{0}\right)  - f_* }{ HT \eta} + \eta \frac{24 L}{N}   \tilde{\sigma}^2 \nonumber  \\   
		& + \frac{{dL^2\mu}^2}{12}  +  5L^2 \mu^2,
	\end{align}
where  $\tilde{\sigma}^2 = 3 \left(\! 1\!+\! \frac{ c_g d}{b_1 b_2} \!  \right) \sigma_h^2 + \frac{d\sigma_g^2}{b_1 b_2}$, $\tilde{c}_g = 1+ \frac{c_g d}{b_1b_2}$, and $\tilde{c}_h = 1+c_h$.
%, and $[T]$ denotes the index set of communication rounds, i.e., $[T] = \{0,\!1,\ldots,T\!-\!1\}$.
\end{theorem}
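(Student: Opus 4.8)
The plan is to follow the standard descent-lemma route used in non-convex FedAvg-type analyses, augmented with two zeroth-order-specific ingredients: a bias-plus-second-moment bound for the estimator $\bm e_i^{(t,k)}$, and a client-drift lemma that absorbs its dimension-dependent variance. Throughout I would condition on the iterate $\bm x^t$ and take expectations over the inner randomness $\{\xi_{i,m}^{(t,k)},\bm v_{i,n}^{(t,k)}\}$.

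First I would write the global update, for full participation $\mathcal{M}_t=\{1,\dots,N\}$, as $\bm x^{t+1} = \bm x^t - \eta\,\frac1N\sum_{i=1}^N\sum_{k=0}^{H-1}\bm e_i^{(t,k)}$, and apply $L$-smoothness of $f$ (Assumption \ref{assump_smooth}):
\[
f(\bm x^{t+1}) \le f(\bm x^t) - \eta\Big\langle \nabla f(\bm x^t),\, \tfrac1N\textstyle\sum_{i,k}\bm e_i^{(t,k)}\Big\rangle + \tfrac{L\eta^2}{2}\Big\|\tfrac1N\textstyle\sum_{i,k}\bm e_i^{(t,k)}\Big\|^2 .
\]
Taking expectation and using $\mathbb{E}[\bm e_i^{(t,k)}] = \nabla f_i^\mu(\bm x_i^{(t,k)})$ from \eqref{mini_batch_equal}, the cross term equals $-\eta\langle\nabla f(\bm x^t),\frac1N\sum_{i,k}\nabla f_i^\mu(\bm x_i^{(t,k)})\rangle$. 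I would decompose $\nabla f_i^\mu(\bm x_i^{(t,k)})$ around $\nabla f(\bm x^t)$ into four pieces and control each: (i) the smoothing bias $\|\nabla f_i^\mu(\bm y)-\nabla f_i(\bm y)\|\le L\mu$, a direct consequence of $L$-smoothness of $f_i$ and the ball-averaging definition of $f_i^\mu$; (ii) $\|\nabla f_i(\bm x_i^{(t,k)})-\nabla f_i(\bm x^t)\|\le L\|\bm x_i^{(t,k)}-\bm x^t\|$ by smoothness; and (iii) the dissimilarity term $\|\nabla f_i(\bm x^t)-\nabla f(\bm x^t)\|$ handled by Assumption \ref{bound_heterogeneity}, which is where $\tilde c_h = 1+c_h$ enters. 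Combined with Young-type splittings $2\langle \bm a,\bm b\rangle \ge -\|\bm a\|^2 - \|\bm b\|^2$, this lower-bounds the negated cross term by roughly $\tfrac{\eta H}{2}\|\nabla f(\bm x^t)\|^2$ minus drift and $\mu^2$ corrections.

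Next I would establish the two key lemmas. For the quadratic term, using the mini-batch structure of \eqref{stochastic_estimator_ori}, the standard spherical-sampling variance identity, Assumption \ref{bound_variance}, and the smoothing bias, I get $\mathbb{E}\|\bm e_i^{(t,k)}\|^2 \le \tilde c_g\|\nabla f_i(\bm x_i^{(t,k)})\|^2 + O\!\big(d\sigma_g^2/(b_1b_2)\big) + O(dL^2\mu^2)$ with $\tilde c_g = 1 + c_g d/(b_1 b_2)$; averaging over $i$ and invoking Assumptions \ref{bound_variance}–\ref{bound_heterogeneity} converts $\|\nabla f_i\|^2$ into $\|\nabla f\|^2$ up to the constant $\tilde\sigma^2$. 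For the client drift, starting from $\bm x_i^{(t,k)}-\bm x^t = -\eta\sum_{j=0}^{k-1}\bm e_i^{(t,j)}$ and a Cauchy–Schwarz unrolling combined with the second-moment bound, I obtain a recursion in $\mathbb{E}\|\bm x_i^{(t,k)}-\bm x^t\|^2$ that closes precisely when $\eta \le \frac{1}{3\sqrt{\tilde c_g}HL}$, giving $\frac1N\sum_{i,k}\mathbb{E}\|\bm x_i^{(t,k)}-\bm x^t\|^2 \lesssim \eta^2 H^3\big(\|\nabla f(\bm x^t)\|^2 + \tilde\sigma^2\big) + \eta^2 H^3 \mu^2 L^2 d$.

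Finally I would substitute the drift and second-moment bounds into the descent inequality and pick the three thresholds in \eqref{theorem_eta_full} so that the coefficient of $\mathbb{E}\|\nabla f(\bm x^t)\|^2$ is at most $-\tfrac{\eta H}{4}$: $\frac{N}{72\tilde c_g\tilde c_h L}$ kills the term $\propto L\eta^2 H^2\tilde c_g\tilde c_h/N$ coming from the drift$\times$variance coupling, and $\frac{2}{NH^2L}$ the term $\propto L\eta^2 N H^2$ from the quadratic part, which yields
\[
\mathbb{E}[f(\bm x^{t+1})] \le \mathbb{E}[f(\bm x^t)] - \tfrac{\eta H}{4}\mathbb{E}\|\nabla f(\bm x^t)\|^2 + \eta H\Big(\tfrac{6L\eta}{N}\tilde\sigma^2 + \tfrac{dL^2\mu^2}{48} + \tfrac{5L^2\mu^2}{4}\Big).
\]
Telescoping over $t$, using $f(\bm x^T)\ge f_*$ (Assumption \ref{bounded}), dividing by $\tfrac{\eta H T}{4}$, and bounding the minimum by the average gives \eqref{equation_full}. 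The main obstacle is the coupled drift/second-moment recursion: since $\mathbb{E}\|\bm e_i^{(t,k)}\|^2$ involves $\|\nabla f_i(\bm x_i^{(t,k)})\|^2$, which through smoothness feeds back into the drift it is meant to bound, the two estimates must be unrolled jointly, and it is exactly this coupling that produces the dimension-dependent constants $\tilde c_g$, $\tilde\sigma^2$ and the three-way condition on $\eta$.
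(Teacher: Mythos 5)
Your proposal follows essentially the same route as the paper's proof: a descent-lemma step using the unbiasedness of $\bm e_i^{(t,k)}$ with respect to $\nabla f_i^{\mu}$, a second-moment bound for the estimator introducing $\tilde{c}_g$ and $\tilde{\sigma}^2$, a client-drift recursion that closes under $\eta \leq \frac{1}{3\sqrt{\tilde{c}_g}HL}$, and the three learning-rate thresholds used exactly as you describe to retain a $-\tfrac{\eta H}{4}\|\nabla f(\bm x^t)\|^2$ coefficient before telescoping. The only cosmetic difference is that the paper handles the cross term with the exact polarization identity $2\langle \bm a,\bm b\rangle = \|\bm a\|^2+\|\bm b\|^2-\|\bm a-\bm b\|^2$ rather than a Young-type inequality, which does not change the argument.
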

\begin{proof}
Please refer to Appendix \ref{appen_proof_theorem_full}.	
\end{proof}

According to Theorem \ref{theorem_full}, the upper bound of the minimum squared gradient among the global model sequence is composed of four terms. The first term shows that the optimality gap relies on the initial optimality.
%, i.e., $f(\bm x_0) - f_*$. 
The second term shows that the optimality gap depends on the the non-i.i.d. extent of the local data distribution. 
%stochastic gradient variance and non-i.i.d. extent.
%, i.e., $\sigma_h^2$. 
% This implies that statistical heterogeneity slows down the FedZO algorithm's convergence speed. 
The rest of the terms are related to step size $\mu$ for computing the gradient estimator that is unique in zeroth-order optimization. As pointed out in \cite{nesterov2017random}, we can select an appropriate step size to attain the desired accuracy. 
The following corollary follows by substituting a suitable learning rate $\eta$ and step size $\mu$ into Theorem \ref{theorem_full}.

\begin{corollary}\label{linear_speedup_full}
	Suppose Assumptions \ref{bounded}-\ref{bound_heterogeneity} hold and let $b_1b_2 \leq d$,
	$\mu = (db_1b_2NHT)^{-\frac{1}{4}}$,
	and $\eta = (Nb_1b_2)^{\frac{1}{2}}(dHT)^{-\frac{1}{2}}$, 
	%\equa{\label{eta_condition_full}\eta = (Nb_1b_2)^{\frac{1}{2}}(dHT)^{-\frac{1}{2}} 	\frac{\sqrt{Nb_1b_2}}{\sqrt{dHT}}\leq  \min \left \{\frac{N}{72\tilde{c}_g\tilde{c}_hL}, \frac{2}{NH^2L}, \frac{1}{3\sqrt{\tilde{c}_g}HL} \right\},} 
	which holds for \eqref{theorem_eta_full} if $T$ is large enough.
	The FedZO algorithm with full device participation satisfies
    %\begin{small}
	\begin{align}\label{rate_full}
		 \min_{t\in [T]} \!  \mathbb{E} \! \left\|\nabla f\left(\bm x^{t}\right)\right\|^2 
		 \leq & \mathcal{O}\left( d^{\frac{1}{2}}{(NHTb_1b_2)}^{-\frac{1}{2}} \right)  \nonumber  \\
 &+ \mathcal{O}\left((db_1b_2NHT)^{-\frac{1}{2}} \right)\!,
	\end{align}
   %\end{small}
	\noindent where the right hand side of \eqref{rate_full} is dominated by $\mathcal{O}\left( d^{\frac{1}{2}}{(NHTb_1b_2)}^{-\frac{1}{2}} \right)$.
\end{corollary}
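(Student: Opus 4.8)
The plan is to instantiate Theorem~\ref{theorem_full} with the prescribed step size $\mu = (db_1b_2NHT)^{-1/4}$ and learning rate $\eta = (Nb_1b_2)^{1/2}(dHT)^{-1/2}$, and then simply track the order of magnitude of each of the four summands on the right-hand side of \eqref{equation_full}, treating $L,\sigma_g,\sigma_h,c_g,c_h$ as absolute constants while keeping the explicit dependence on $d,N,H,T,b_1,b_2$.

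First I would check that $\eta$ is admissible, i.e.\ that it satisfies \eqref{theorem_eta_full} once $T$ is large. Since $b_1b_2\le d$, the quantities $\tilde c_g = 1 + c_g d/(b_1b_2)$ and $\tilde c_h = 1+c_h$ are finite positive constants not depending on $T$, so each of the three thresholds $N/(72\tilde c_g\tilde c_h L)$, $2/(NH^2L)$, and $1/(3\sqrt{\tilde c_g}HL)$ is a strictly positive constant independent of $T$. Because $\eta$ scales like $T^{-1/2}$ and hence tends to $0$, there is a finite $T_0$ beyond which $\eta$ falls below all three thresholds; this is exactly what ``$T$ is large enough'' means.

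Next I would substitute. With the stated choices, $1/\eta = (dHT)^{1/2}(Nb_1b_2)^{-1/2}$ and $\mu^2 = (db_1b_2NHT)^{-1/2}$. Term by term: the initialization term $4(f(\bm x^0)-f_*)/(HT\eta)$ becomes $\mathcal{O}\big(d^{1/2}(NHTb_1b_2)^{-1/2}\big)$; the variance term $24L\tilde\sigma^2\eta/N$ equals $24L\tilde\sigma^2(b_1b_2)^{1/2}(NdHT)^{-1/2}$, and since $b_1b_2\le d$ forces $\tilde\sigma^2 = 3(1+c_gd/(b_1b_2))\sigma_h^2 + d\sigma_g^2/(b_1b_2) = \mathcal{O}(d/(b_1b_2))$, this term is also $\mathcal{O}\big(d^{1/2}(NHTb_1b_2)^{-1/2}\big)$; the term $dL^2\mu^2/12 = (L^2/12)\,d\,(db_1b_2NHT)^{-1/2} = \mathcal{O}\big(d^{1/2}(NHTb_1b_2)^{-1/2}\big)$; and the term $5L^2\mu^2 = \mathcal{O}\big((db_1b_2NHT)^{-1/2}\big)$. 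Summing the four estimates gives precisely \eqref{rate_full}. Finally, since $d\ge 1$, the ratio of the first displayed rate to the second equals $d\ge 1$, so the first rate dominates and the right-hand side of \eqref{rate_full} is $\mathcal{O}\big(d^{1/2}(NHTb_1b_2)^{-1/2}\big)$.

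I do not expect any real obstacle: the mathematical content is entirely carried by Theorem~\ref{theorem_full}, and what remains is bookkeeping. The one spot that needs care is the variance term, where the hypothesis $b_1b_2\le d$ must be used to recognize that $\tilde\sigma^2$ grows like $d/(b_1b_2)$ rather than being $\mathcal{O}(1)$; without this observation the variance term would be under-counted. The same hypothesis is what keeps $\tilde c_g$ finite in the admissibility check and what allows the first three summands to be consolidated into a single $\mathcal{O}\big(d^{1/2}(NHTb_1b_2)^{-1/2}\big)$ rate.
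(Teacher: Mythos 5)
Your proposal is correct and follows exactly the route the paper intends: the paper gives no separate proof of Corollary~\ref{linear_speedup_full} beyond stating that it "follows by substituting a suitable learning rate $\eta$ and step size $\mu$ into Theorem~\ref{theorem_full}," and your term-by-term substitution (including the key observation that $b_1b_2\le d$ makes $\tilde\sigma^2=\mathcal{O}(d/(b_1b_2))$ so the variance term lands in the dominant rate) supplies precisely the bookkeeping that is left implicit. The admissibility argument for $\eta$ and the dominance comparison via $d\ge 1$ are also as intended.
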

We consider the case of $b_1b_2 \leq d$ in Corollary \ref{linear_speedup_full} since the dimension $d$ is generally very large in many ML tasks. Besides, when $b_1b_2 \leq d$, the computational consumption of \eqref{stochastic_estimator} is lower than that of the Kiefer-Wolfowitz type scheme \cite{Anit_random}. 
On the other hand, if $b_1b_2 > d$, $\eta = N^{\frac{1}{2}}(HT)^{-\frac{1}{2}}$, and $\mu = d^{-\frac{1}{2}}(NHT)^{-\frac{1}{4}}$, according to Theorem \ref{theorem_full}, we obtain a convergence rate $\mathcal{O}\left( {(NHT)}^{-\frac{1}{2}} \right)$ for the FedZO algorithm which is independent of dimension $d$. Such a convergence rate is the same as that of the FedAvg algorithm. 
The learning rate of the zeroth-order methods is generally $\sqrt{d}-$times smaller than that of their first-order counterparts \cite{yu2019parallel,nesterov2017random,guanghui}, as the two-point gradient estimator is less accurate than the gradient. In our work, by adopting the mini-batch-type gradient estimator, we can increase the mini-batch sizes $b_1$ and $b_2$ to enhance the accuracy of the gradient estimator and also balance the effects of $d$ and $T$ on the learning rate.

\begin{table}[!t]
	\aboverulesep = 1.2mm
	\belowrulesep = 1.2mm
	\renewcommand{\arraystretch}{1.25}
		\caption{\small Convergence rates of some typical algorithms for stochastic nonconvex unconstrained optimization. %$T$, $H$, and $N$ denote the total number of communication rounds, number of local updates, and number of edge devices, respectively. $d$ denotes the dimension of the model parameter; $b_1$ and $b_2$ denote the mini-batch sizes. 
				%Both FedZO and FedAvg are in the full participation setting
		}
		\label{tab:main_results}
		\centering\small
		\begin{adjustbox}{width=\columnwidth,center}
			\begin{tabular}{>{\centering\arraybackslash}m{1.8 cm}>{\centering\arraybackslash}m{2.7cm}cc}
				\toprule 
				& \large Algorithm &  \large Convergence rate & \large Maximum value of $H$ \\
				\midrule
				\multirow{2.5}{1.8cm}{\centering \large FL setting}
				& \large FedZO
				& \large $\quad O\!\left(\sqrt{d/NHTb_1b_2}\right)$
				& \large $\min \left\{\mathcal{O}\left((d T)^{\frac{1}{3}}\left(b_1 b_2\right)^{-\frac{1}{3}} N^{-1}\right), \mathcal{O}\left(T N^{-1}\right)\right\}$ \\
				\cmidrule{2-4}
				& \large FedAvg \cite{yu2019parallel}
				& \large $O\!\left(\sqrt{1/NHT}\right)$
				& \large $O\!\left(T^{\frac{1}{3}}N^{-1}\right)$ 
				\\
				\midrule
				\multirow{2.5}{1.8cm}{\centering \large Distributed zeroth-order}
				& \large ZONE-S \cite{zone_hong}
				& \large $O\!\left(d^3/T\right)$
				& --- \\
				\cmidrule{2-4}
				& \large DZOPA \cite{yi2021zeroth}
				& \large $O\!\left(\sqrt{d/NT}\right)$
				& ---
				\\
				\midrule
				\large Centralized zeroth-order & 
				\large ZO-SGD \cite{guanghui}  & 
				\large $O\!\left(\sqrt{d/T}\right)$
				& ---\\
				\bottomrule
				\\[-8pt]
				\multicolumn{4}{l}{\Large Note: %ZONE-S relies on Lipchitz function assumption and high per iteration complexity
					\Large $T$ denotes the number of total  communication rounds for FedZO and FedAvg, and
				} \\
				\multicolumn{4}{l}{\Large the number of total iterations for others.} \\[-6pt]
				%\multicolumn{4}{l}{\scriptsize\phantom{Note: }$d$ denotes the dimension of the model parameter; $b_1$ and $b_2$ denote the mini-batch sizes.} \\[-6pt]
			\end{tabular}
		\end{adjustbox}
	\vspace{-10pt}
\end{table}

\begin{remark}\label{maximum_H}
		In Corollary \ref{linear_speedup_full}, we set the learning rate $\eta = (Nb_1b_2)^{\frac{1}{2}}(dHT)^{-\frac{1}{2}}$, which decreases as the number of local updates (i.e., $H$) increases. In particular, the progress of one local update shrinks by $1/\sqrt{H}$. However, by performing $H$ steps of local updates, we obtain a $\sqrt{H}$-times speedup per communication round. This accords with the derived convergence rate $\mathcal{O}\left( d^{\frac{1}{2}}{(NHTb_1b_2)}^{-\frac{1}{2}} \right)$. According to Corollary \ref{linear_speedup_full}, to reach an $\epsilon$-stationary solution, the FedZO algorithm takes $\mathcal{O}\left(d(Nb_1b_2)^{-1}\epsilon^{-2}\right)$ iterations (i.e., $HT = \mathcal{O}\left(d(Nb_1b_2)^{-1}\epsilon^{-2}\right)$) with learning rate $\eta = (Nb_1b_2)^{\frac{1}{2}}(dHT)^{-\frac{1}{2}}$. If we increase $H$, then the learning rate (i.e., $\eta$) decreases, and we can attain a higher-accuracy solution with the same number of communication rounds (i.e., $T$).
		%According to Corollary 1, to reach an $\epsilon$-stationary solution, the FedZO algorithm need to take $\mathcal{O}\left(d(b_1b_2)^{-1}\epsilon^{-2}\right)$ iterations (i.e., $HT = \mathcal{O}\left(d(b_1b_2)^{-1}\epsilon^{-2}\right)$) with learning rate $\eta = (Nb_1b_2)^{\frac{1}{2}}(dHT)^{-\frac{1}{2}}$. Hence, both the number of total iterations and the learning rate depend on the desired accuracy. 
		Besides, when the total iteration number (i.e., $HT$) and the learning rate (i.e., $\eta$) are fixed, we can increase the number of local iterations (i.e., $H$) and reduce the number of communication rounds (i.e., $T$) to reach the same accuracy, which enhances the communication efficiency. It is worth noting that the number of local iterations cannot be arbitrarily large.
		According to \eqref{theorem_eta_full}, to achieve the largest reduction in communication overhead while preserving convergence, the optimal value of $H$ is $\min \left \{\mathcal{O}\left( (dT)^{\frac{1}{3}}(b_1b_2)^{-\frac{1}{3}}N^{-1} \right), \mathcal{O}\left( TN^{-1} \right) \right \}$. 
	\end{remark}

%We consider the case of $b_1b_2 \leq d$ in Corollary \ref{linear_speedup_full} since the dimension $d$ is generally very large in many ML tasks. Besides, when $b_1b_2 \leq d$, the computational consumption of \eqref{stochastic_estimator} is lower than that of the Kiefer-Wolfowitz type scheme \cite{Anit_random}. On the other hand, if $b_1b_2 > d$, $\eta = N^{\frac{1}{2}}(HT)^{-\frac{1}{2}}$, and $\mu = d^{-\frac{1}{2}}(NHT)^{-\frac{1}{4}}$, according to Theorem \ref{theorem_full}, we obtain a convergence rate $\mathcal{O}\left( {(NHT)}^{-\frac{1}{2}} \right)$ for the FedZO algorithm which is independent of dimension $d$. Such a convergence rate is the same as that of the FedAvg algorithm.

\begin{remark}
From Corollary \ref{linear_speedup_full}, we notice that the proposed FedZO algorithm can attain convergence rate $\mathcal{O}\left( d^{\frac{1}{2}}{(NHTb_1b_2)}^{-\frac{1}{2}} \right)$. In particular, FedZO achieves linear speedup in terms of the number of local iterates and the number of participating edge devices compared with the centralized zeroth-order algorithm (i.e., ZO-SGD) that achieves convergence rate $\mathcal{O}\left(d^{\frac{1}{2}}T^{-\frac{1}{2}}\right)$ \cite{guanghui}. For fair comparison, we consider $b_1=b_2=1$.
To attain the same accuracy, compared to DZOPA with convergence rate $\mathcal{O}\left( d^{\frac{1}{2}}{(NT)}^{-\frac{1}{2}} \right)$ \cite{yi2021zeroth}, the number of communication rounds required by the FedZO algorithm can be reduced by a factor of $H$. Besides, it is worth noting that the convergence rate of the FedZO algorithm depends on the dimension of the model parameter. In particular, the convergence speed of FedZO is $\sqrt{d}$ times slower than that of its first-order counterpart, i.e., FedAvg. Such a degeneration is the same as its centralized counterpart \cite{guanghui}. In addition, for FedAvg, the maximum value of $H$ is $O\!\left(T^{\frac{1}{3}}N^{-1}\right)$ \cite{yu2019parallel}, which is smaller than that of FedZO mentioned in Remark \ref{maximum_H}. This is because the learning rate of FedZO is lower than that of FedAvg, which allows more local updates while preserving convergence.
The detailed comparison between the proposed algorithm and the related algorithms is summarized in Table \ref{tab:main_results}.

\end{remark}

\begin{comment}
\begin{remark}
	%We evaluate the convergence rate in the ergodic sense, which are commonly used in the existing literature ranging from the zeroth-order to first-order methods for non-convex optimization \cite{ZhouKstep,yu2019parallel,jiang2018linear, zone_hong,tang2020distributed,yi2021zeroth,nesterov2017random}. 
	It is observed that the convergence rate of the FedZO algorithm relies on the dimension of the model parameter $\bm x$. In particular, the FedZO algorithm is $\sqrt{d/b_2}$ times slower than its first-order counterpart, i.e., FedAvg. 
	Compared with the single direction estimation scheme \cite{yi2021zeroth} which suffers from $\sqrt{d}$ degeneration, the adopted multiple directions strategy for gradient estimator \eqref{stochastic_estimator} efficiently reduce the variance of gradient estimator and thus accelerates the convergence. 
	
	Compared with the estimator based on one sample and one direction \cite{yi2021zeroth}, it enjoys a low variance \cite{ZOOADMM}. 
	Besides, the computation complexity of the mini-batch-type stochastic gradient estimator is independent of the dimension of variable $\bm x$, while the Kiefer-Wolfowitz type scheme \cite{Anit_frank_wolfe,Anit_random} entails $2d$ times function evaluation.
	Especially in machine learning tasks where the dimension of parameters need to be optimized is generally large, such an advantage is meaningful.
	
	The adopted multiple directions strategy for gradient estimator \eqref{stochastic_estimator} efficiently reduce the variance of gradient estimator, which accelerates the convergence.
	
	%When $d_2 = 1$, such degeneration is the same as its centralized counterpart \cite{guanghui}, and is the best result that can be achieved based on the gradient estimator in \eqref{stochastic_estimator} \cite{duchi2015optimal}. Besides, the adopted mini-batch and multiple directions strategy for estimation efficiently reduce the variance of gradient estimator, which accelerates the convergence.
\end{remark}
\end{comment}

\subsection{Partial Device Participation}\label{part edge device participation}

In this subsection, we show the convergence of the FedZO algorithm  with partial device participation. By bounding the minimum squared gradient among the global model sequence, we characterize the convergence of the FedZO algorithm in the following theorem.

\begin{theorem} \label{theorem_part}
	Suppose Assumptions \ref{bounded}-\ref{bound_heterogeneity} hold  and the learning rate satisfies
	\begin{align}\label{theorem_eta_part}
		\eta \leq \min &\left \{\frac{M}{192\tilde{c}_g \tilde{c}_hL}, \frac{M}{72c_hHL},  \frac{2}{MH^2L}, \frac{1}{3\sqrt{\tilde{c}_g}HL}, \right. \nonumber \\
		&~~ \left.  \frac{1}{3\sqrt{MH^3}L}\right \},
	\end{align}
	the FedZO algorithm with partial device participation satisfies 
	\begin{align}\label{equation_part}
		\min_{t \in [T]} 
		 &\mathbb{E}  \left\|\nabla f\left(\bm x^{t}\right)\right\|^2 
		\leq  
		4\frac{ f\left(\bm x^{0}\right)  - f_* }{ HT \eta} + \eta \frac{32 L}{M}   \tilde{\sigma}^2 \nonumber  \\   
		&\!+\! \eta \frac{36 H L \sigma_h^2}{M} \!+\! \frac{{dL^2\mu}^2}{24}  \!+\!  13 L^2 \mu^2,
	\end{align}
	where $\tilde{c}_g$, $\tilde{c}_h$, and $\tilde{\sigma}^2$ are defined in Theorem \ref{theorem_full}.
	%, and $[T]$ are defined in Theorem \ref{theorem_full}. 
%	and
%	\equa{\label{}
%		\frac{1}{T} \sum_{t=0}^{T-0}  
%		&
%		  \mathbb{E} \left\|\nabla f\left(\bm x^{t}\right)\right\|^2 
%		\leq 4\frac{ f\left(\bm x^{0}\right)  - f_* }{ HT \eta} 
%		\!+\! \eta \frac{128 dL}{M}  (\sigma_g^2 + \sigma_h^2)     \\
%		&
%		\!+\! \frac{36 \eta L^3 \sigma_h^2}{M} 
%		\!+\! 24 \eta L^3\mu^2   
%		\!+\! \frac{8d^2 L^3 \eta \mu^2}{M}  
%		\!+\!  4L^2 \mu^2.
%	}
\end{theorem}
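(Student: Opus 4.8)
The plan is to follow the same potential-function descent argument as in the proof of Theorem \ref{theorem_full}, but to treat carefully the extra randomness introduced by the uniform subsampling of $\mathcal{M}_t$. I would begin from the $L$-smoothness bound of Assumption \ref{assump_smooth},
\[
f(\bm x^{t+1}) \le f(\bm x^{t}) + \langle \nabla f(\bm x^{t}),\, \bm\Delta^{t}\rangle + \tfrac{L}{2}\snorm{\bm\Delta^{t}},
\]
and substitute $\bm\Delta^{t} = -\frac{\eta}{M}\sum_{i\in\mathcal{M}_t}\sum_{k=0}^{H-1}\bm e_i^{(t,k)}$ from \eqref{updata_equation}. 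Conditioned on $\bm x^{t}$, I would take expectation first over the estimator randomness $\{\xi_{i,m}^{(t,k)},\bm v_{i,n}^{(t,k)}\}$ and then over the choice of $\mathcal{M}_t$. Since the subset is sampled uniformly and the $H$-step local trajectory of device $i$ depends only on $\bm x^{t}$ and on device $i$'s own data, conditioned on $i\in\mathcal{M}_t$ the single-device analysis is identical to the full-participation case, and $\mathbb{E}_{\mathcal{M}_t}\big[\frac{1}{M}\sum_{i\in\mathcal{M}_t}(\cdot)_i\big] = \frac{1}{N}\sum_{i=1}^{N}(\cdot)_i$. Combined with \eqref{single_equal} and \eqref{mini_batch_equal}, the cross term collapses to $-\eta\sum_{k=0}^{H-1}\big\langle\nabla f(\bm x^{t}),\,\frac1N\sum_i \nabla f_i^{\mu}(\bm x_i^{(t,k)})\big\rangle$.

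Next I would split the two error sources inside the cross term. The bias $\nabla f_i^{\mu}-\nabla f_i$ is controlled by a standard ball-smoothing estimate obtained from Assumption \ref{assump_smooth}, and together with the $\mu^2$-scaled variance of the zeroth-order estimator it produces the two $\mu$-dependent terms $\tfrac{dL^2\mu^2}{24}$ and $13L^2\mu^2$ in \eqref{equation_part}. The drift $\nabla f_i(\bm x_i^{(t,k)})-\nabla f_i(\bm x^{t})$ requires a bound on the client-drift quantity $\mathbb{E}\|\bm x_i^{(t,k)}-\bm x^{t}\|^2$, which I would obtain by unrolling \eqref{updata_equation} over the $H$ local steps, using the second-moment bound $\mathbb{E}\|\bm e_i^{(t,k)}\|^2 \le \tilde{c}_g \|\nabla f_i(\bm x_i^{(t,k)})\|^2 + \tilde{\sigma}^2$ (which follows from Assumption \ref{bound_variance} and the variance properties of the mini-batch estimator), Assumption \ref{bound_heterogeneity} to pass from $\nabla f_i$ to $\nabla f$, and a geometric-series argument that closes only when $\eta H L$ is sufficiently small; this is where the constraints $\eta\le\tfrac{2}{MH^2L}$ and $\eta\le\tfrac{1}{3\sqrt{\tilde{c}_g}HL}$ are used.

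The genuinely new ingredient compared with Theorem \ref{theorem_full} is the second-moment term $\mathbb{E}\|\bm\Delta^{t}\|^2$, where a device-sampling variance appears. Setting $\bm a_i = -\eta\sum_{k}\bm e_i^{(t,k)}$ and $\bar{\bm a}=\frac1N\sum_i\bm a_i$, I would use the finite-population identity for uniform subsampling,
\[
\mathbb{E}_{\mathcal{M}_t}\snorm{\bm\Delta^{t}} = \snorm{\bar{\bm a}} + \tfrac{N-M}{M(N-1)}\cdot\tfrac1N\textstyle\sum_i\snorm{\bm a_i-\bar{\bm a}},
\]
and bound the dispersion term $\frac1N\sum_i\|\bm a_i-\bar{\bm a}\|^2$ via the estimator variance, the drift bound just derived, and Assumption \ref{bound_heterogeneity}; this is exactly the source of the extra $\tfrac{36HL\sigma_h^2}{M}$ term in \eqref{equation_part}, as well as of the two additional learning-rate restrictions $\eta\le\tfrac{M}{72c_hHL}$ and $\eta\le\tfrac{1}{3\sqrt{MH^3}L}$, which are precisely what is needed to keep this contribution absorbed. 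Finally I would assemble all the estimates, use the negative $-\Theta(\eta H)\|\nabla f(\bm x^{t})\|^2$ mass to dominate the residual drift- and dissimilarity-induced $\|\nabla f(\bm x^{t})\|^2$ terms under \eqref{theorem_eta_part}, telescope over $t=0,\dots,T-1$, divide by $HT\eta$, and lower bound the resulting average by $\min_{t\in[T]}\mathbb{E}\|\nabla f(\bm x^{t})\|^2$ to obtain \eqref{equation_part}.

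I expect the main obstacle to be the coupled bookkeeping of the drift recursion and the subsampling-variance term: both reintroduce $\|\nabla f(\bm x^{t})\|^2$ into the bound with coefficients depending on $H$, $M$, $\tilde{c}_g$, and $c_h$, so the five conditions in \eqref{theorem_eta_part} must be chosen so that every such feedback coefficient is dominated with enough slack that the net coefficient on $\min_{t\in[T]}\mathbb{E}\|\nabla f(\bm x^{t})\|^2$ is a clean constant, leaving the factor $4$ in front of $\tfrac{f(\bm x^{0})-f_*}{HT\eta}$.
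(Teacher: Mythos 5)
Your proposal is correct and follows essentially the same route as the paper's proof: a smoothness-based descent step, a client-drift recursion closed under the stated step-size conditions, and a separate treatment of the device-sampling variance (which the paper handles via the decomposition $\mathbb{E}\|z\|^2=\|\mathbb{E}[z]\|^2+\mathbb{E}\|z-\mathbb{E}[z]\|^2$ together with Assumption \ref{bound_heterogeneity}, yielding exactly the $\sigma_h^2/M$ and $c_h\|\nabla f(\bm x^t)\|^2/M$ contributions your finite-population identity produces). The only cosmetic difference is that you invoke the exact without-replacement variance formula where the paper uses an independent-sampling bound; both give the $O(1/M)$ scaling that generates the extra $\eta\,36HL\sigma_h^2/M$ term and the two additional learning-rate restrictions.
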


\begin{proof}
Please refer to Appendix \ref{appen_proof_theorem_part}.	
\end{proof}

By comparing \eqref{equation_part} with \eqref{equation_full}, we notice that the third term in \eqref{equation_part} does not appear in \eqref{equation_full}, which is induced by the randomness of device sampling, while full device participation eliminates this randomness, thereby reducing the optimality gap.

Similarly, the following corollary follows by substituting suitable learning rate $\eta$ and step size $\mu$ into Theorem \ref{theorem_part}.

\begin{corollary}\label{linear_speedup}
Suppose Assumptions \ref{bounded}-\ref{bound_heterogeneity} hold and let $b_1b_2 \leq d$, 
$\mu = (db_1b_2MHT)^{-\frac{1}{4}}$, and $\eta = (Mb_1b_2)^{\frac{1}{2}}(dHT)^{-\frac{1}{2}}$,
%\equa{\label{eta_condition_part}\eta = \frac{\sqrt{Mb_1b_2}}{\sqrt{dHT}} \leq & \min \left \{\frac{M}{192\tilde{c}_g \tilde{c}_hL}, \frac{M}{72c_hHL}, \frac{2}{MH^2L}, \right. \\ &\left. \frac{1}{3\sqrt{\tilde{c}_g}HL}, \frac{1}{3\sqrt{MH^3}L}\right \},} 
which holds for \eqref{theorem_eta_part} if $T$ is large enough.
The FedZO algorithm with partial device participation satisfies
\begin{align}\label{part_order}
	 &\min_{t \in [T]}  \mathbb{E} \left\|\nabla f\left(\bm x^{t}\right)\right\|^2 
	\leq  \mathcal{O}\left( d^{\frac{1}{2}}{(MHTb_1b_2)}^{-\frac{1}{2}} \right) \nonumber \\  
	&+\!\mathcal{O}\left( (b_1b_2H)^{\frac{1}{2}}{(dMT)}^{-\frac{1}{2}} \right)  \!+ \!\mathcal{O}\left((db_1b_2MHT)^{-\frac{1}{2}} \right).
\end{align}
\end{corollary}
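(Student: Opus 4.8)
The plan is to obtain Corollary \ref{linear_speedup} as a direct specialization of Theorem \ref{theorem_part}: substitute the prescribed $\mu = (db_1b_2MHT)^{-1/4}$ and $\eta = (Mb_1b_2)^{1/2}(dHT)^{-1/2}$ into the bound \eqref{equation_part} and track the asymptotic order of each of its five terms. No new analysis is needed beyond what Theorem \ref{theorem_part} already gives.

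First I would verify the admissibility of the proposed learning rate, i.e.\ that $\eta = (Mb_1b_2)^{1/2}(dHT)^{-1/2}$ satisfies \eqref{theorem_eta_part}. Each entry in the minimum on the right-hand side of \eqref{theorem_eta_part} is a strictly positive quantity depending only on $M$, $H$, $L$, and the problem constants $\tilde c_g$, $\tilde c_h$, $c_h$ — in particular it is bounded below by a constant that does not depend on $T$. Since the chosen $\eta = \Theta(T^{-1/2}) \to 0$, the inequality \eqref{theorem_eta_part} holds once $T$ exceeds a threshold determined by those constants; this is what justifies the clause ``which holds for \eqref{theorem_eta_part} if $T$ is large enough.''

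Next I would plug the two step sizes into \eqref{equation_part} term by term. For the first term, $\frac{1}{HT\eta} = d^{1/2}(MHTb_1b_2)^{-1/2}$, so it is $\mathcal{O}(d^{1/2}(MHTb_1b_2)^{-1/2})$. For the second term I would first use the hypothesis $b_1b_2 \le d$ to note that $\tilde\sigma^2 = 3(1 + c_g d/(b_1b_2))\sigma_h^2 + d\sigma_g^2/(b_1b_2) = \mathcal{O}(d/(b_1b_2))$, because $d/(b_1b_2) \ge 1$ dominates the constant term; multiplying by $\eta/M$ then again gives $\mathcal{O}(d^{1/2}(MHTb_1b_2)^{-1/2})$. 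The third term $\eta\,(36HL\sigma_h^2/M)$ simplifies to $\mathcal{O}((b_1b_2H)^{1/2}(dMT)^{-1/2})$. For the last two terms, with $\mu^2 = (db_1b_2MHT)^{-1/2}$ one gets $dL^2\mu^2/24 = \mathcal{O}(d^{1/2}(MHTb_1b_2)^{-1/2})$ and $13L^2\mu^2 = \mathcal{O}((db_1b_2MHT)^{-1/2})$. Collecting the three distinct orders that survive yields exactly \eqref{part_order}.

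Since the argument is a bookkeeping exercise, there is no substantial obstacle; the only points that need a little care are (i) invoking $b_1b_2 \le d$ to absorb the $\mathcal{O}(1)$ contribution in $\tilde\sigma^2$ so that $\tilde\sigma^2$ carries the clean order $d/(b_1b_2)$, and (ii) making the ``$T$ large enough'' claim precise by lower-bounding the right-hand side of \eqref{theorem_eta_part} by a $T$-independent constant so that the $\Theta(T^{-1/2})$ rate eventually satisfies it. The same template underlies Corollary \ref{linear_speedup_full}, with $N$ replaced by $M$ and the additional device-sampling term $\eta\,(36HL\sigma_h^2/M)$ retained in the partial-participation case.
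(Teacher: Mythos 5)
Your proposal is correct and matches the paper's (implicit) argument exactly: the paper states that Corollary \ref{linear_speedup} ``follows by substituting suitable learning rate $\eta$ and step size $\mu$ into Theorem \ref{theorem_part},'' and your term-by-term order computation — including the use of $b_1b_2\le d$ to get $\tilde\sigma^2=\mathcal{O}(d/(b_1b_2))$ and the observation that the right-hand side of \eqref{theorem_eta_part} is $T$-independent so that $\eta=\Theta(T^{-1/2})$ eventually satisfies it — is precisely that bookkeeping carried out. No gaps.
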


According to \eqref{part_order}, to attain a linear speedup in terms of the number of local iterates and participating edge devices, the number of local iterates cannot exceed $\mathcal{O} \!\left(  \!d(b_1b_2)^{-1}) \!\right)$. Combining it with constraint \eqref{theorem_eta_part}, we can derive the largest value of $H$ as 
$
\min \!\left \{ \!\mathcal{O} \!\left( \! (dT)^{\frac{1}{3}}(b_1b_2)^{- \!\frac{1}{3}}M^{-1} \right) \!, \! \mathcal{O} \!\left(  \!TM^{-1}  \!\right)  \!, \! \mathcal{O} \!\left(  \!d(b_1b_2)^{-1}) \!\right) \!\right \} \!.
$

\section{AirComp-Assisted FedZO Algorithm}
\label{aircomp}

In this section, we study the implementation of the proposed FedZO algorithm over wireless networks using AirComp, where the edge devices communicate with the central server via wireless fading channels. 

%The communication protocol includes two parts, downlink and uplink, i.e., the central server broadcasts its current global model to all edge devices, called downlink model dissemination, and all edge devices upload their model updates to the central server, termed uplink model uploading.
In each communication round, both the downlink model dissemination phase and the uplink model uploading phase involve wireless transmissions.   
As the central server generally has a much greater transmit power than the edge devices, the downlink model dissemination is assumed to be error-free as in most of the existing studies \cite{zhangyingjun_FL,xiaowen_jsac,zhibin_fl,yonina_fl,Accelerated_cobin} and we focus on the uplink model uploading.

%On the other hand, the resource consumption for broadcast does not scale with the number of edge devices \cite{yonina_fl,zhibin_fl,zhangyingjun_FL}. Out of these reasons, most literature on FL over wireless networks assumes that the downlink model dissemination is error-free.In this article, we follow this assumption and  focus on the uplink model uploading.

%, and follow a conventional assumption that the downlink model dissemination is error-free for the following two reason. One is that the central server generally has more transmit energy compared with the edge devices; on the other hand, the digital communication schemes can be applied to broadcast the global model to edge devices since the transmission resource does not scale to with the number of edge devices \cite{yonina_fl,zhibin_fl,zhangyingjun_FL}.  

\subsection{Over-the-Air Aggregation}\label{aircomp_schedule}

For the FedZO algorithm, a key observation is that the central server is interested in receiving an average of local model updates of scheduled edge devices rather than each individual one. 
In particular, at the $t$-th round, the central server aims to acquire 
\begin{equation}\label{com_goal}
\bm \Delta^{t} = \frac{1}{|\mathcal{M}_t|} \sum_{i \in \mathcal{M}_t} \bm \Delta_i^{t},
\end{equation}
where $|\mathcal{M}_t|$ denotes cardinality of set $\mathcal{M}_t$.
With conventional OMA schemes, the central server in the $t$-th round first receives the local model update, e.g., $ \bm \Delta_i^{t}$, from each edge device, and then takes an average to obtain the desired global model update, i.e., $\bm \Delta^{t}$. However, these schemes may not be spectrum-efficient as the number of required resource blocks or the communication latency linearly increases with the number of participating edge devices. 
AirComp, as a new non-orthogonal multiple access scheme for scalable transmission, allows all edge devices to concurrently transmit their local model updates and exploits the waveform superposition property to achieve spectrum-efficient model aggregation. The communication resource needed for model uploading using AirComp is independent of the number of participating edge devices.
Hence, we adopt AirComp for the aggregation of local model updates in this paper.

Consider a wireless FL system where all edge devices and the server are equipped with a single antenna.
Over wireless fading channels, the local model updates transmitted by edge devices suffer from detrimental channel distortion, which in turn degenerates the convergence performance of the AirComp-assisted FedZO algorithm. 
%\blue{Therefore, following the strategy in \cite{yonina_fl},}
We thus set a threshold $h_{\mathrm{min}}$ and choose a subset of edge devices $\mathcal{M}_t = \{i~\! | \!~ |h_i^t| \geq h_{\mathrm{min}}\}$ to participate in the training, where $h_i^{t} \in \mathbb{C}$ represents the channel coefficient between edge device $i$ and the central server in round $t$.  
We assume that $h_i^t$ are i.i.d. across different edge devices and communication rounds \cite{yonina_fl,Accelerated_cobin}. Note that we can treat the adopted device scheduling strategy as uniform sampling analyzed in Section \ref{part edge device participation}.  
With AirComp, the scheduled edge devices concurrently transmit their precoded model updates, e.g., $\alpha_i^t\bm \Delta_i^{t}$, to the central server, where $\alpha_i^t$ is the transmit scalar of edge device $i$ at the $t$-th round.
Note that synchronization is required among distributed edge devices as in \cite{zhangyingjun_FL,xiaowen_jsac,zhibin_fl,yonina_fl,Accelerated_cobin}, which can be realized by sharing a reference-clock across the edge devices \cite{airshare} or utilizing the timing advance technique commonly adopted in 4G long term evolution (LTE) and 5G new radio (NR) \cite{timing}.
 We assume that the model update vector $\bm \Delta_i^{t}$ of dimension $d$ can be transmitted within one transmission block while the channel coefficient is invariant during one transmission block  \cite{zhangyingjun_FL,xiaowen_jsac,zhibin_fl,yonina_fl,Accelerated_cobin}.
%whole parameter can be fully communicated in one channel block, i.e., the channel is invariant from the edge device send the first symbol to the last, as commonly assumed in FL \cite{zhibin_fl,zhangyingjun_FL,yonina_fl}. 
Thus, the aggregated signal received at the central server can be expressed as 
\equa{\label{aggregation}
	\bm s^{t} =  \sum_{i \in \mathcal{M}_t} h_i^{t} \alpha_i^t\bm \Delta_i^{t} + \bm n_t,
} 
where $\bm n_t \sim \mathcal{CN}(0,\sigma_w^2\bm I_d)$ represents the additive white Gaussian noise (AWGN) vector at the central server. 

\subsection{Transceiver Design}\label{transceiver}

The transmitted signal at each edge device is subject to an energy constraint during one communication round, i.e., $\snorm{\alpha_i^t \bm \Delta_i^{t}} \leq dP$, where $dP$ is the total energy of each edge device in one communication round. We assume that the channel state information (CSI) is available at both the central server and edge devices as in \cite{zhangyingjun_FL,xiaowen_jsac,zhibin_fl,yonina_fl,Accelerated_cobin}. To meet the energy constraint of each edge device, we set the transmit scalar of device $i$ as
\equa{ \label{precoding}
\alpha_i^t = \frac{h_{\mathrm{min}}}{h_i^{t}}\sqrt{\frac{dP}{\Delta^t_{\rm{max}} }},~ \forall i,
}
where $\Delta^t_{\rm{max}} = \max_{i \in \mathcal{M}_t} \snorm{\bm \Delta_i^{t}} $.
The received signal is thus given by
\equa{\label{com_receive}
\bm s^{t} = \sqrt{\frac{dPh_{\mathrm{min}}^2}{\Delta^t_{\rm{max}}}} \sum_{i \in \mathcal{M}_t}  \bm \Delta_i^{t} + \bm n_t.
}
To recover the desired global model update $\bm \Delta^{t}$ in \eqref{com_goal} from $\bm s^{t}$ in \eqref{com_receive}, the central server  scales $\bm s^{t} $ with a receive scalar $\frac{1}{|\mathcal{M}_t|} \sqrt{\frac{\Delta^t_{\rm{max}}}{dPh_{\mathrm{min}}^2}}$, and obtains a noisy version of the global model update as follows
%\equa{\label{receive}
%\bm y^{t} &= \frac{1}{|\mathcal{M}_t|} \sum_{i \in \mathcal{M}_t}  \bm \Delta_i^{t} + \frac{1}{|\mathcal{M}_t|} \sqrt{\frac{\Delta^t_{\rm{max}}}{dPh_{\mathrm{min}}^2}}  \bm n_t \\
%& = \bm \Delta^{t} + \tilde{\bm n}_t ,
%} 
\equa{\label{receive}
	\bm y^{t} = \bm \Delta^{t} + \tilde{\bm n}_t,
} 
where $\tilde{\bm n}_t \sim  \mathcal{CN}\left(0, \frac{\sigma_w^2 \Delta^t_{\rm{max}}}{|\mathcal{M}_t|^2dPh_{\mathrm{min}}^2}  \bm I_d \right).$ As a result, the global model at the central server is updated as $\bm x^{t+1} = \bm x^{t} + \bm \Delta^{t} + \tilde{\bm n}_t$.
To facilitate the transceiver design, each edge device needs to know the maximum of squared norm of local model updates among the participating edge devices, i.e., $\Delta^t_{\rm{max}}$, and the instantaneous channel coefficient between itself and the central server, i.e., $h_i^t$, which can be obtained via feedback from the central server.
Before uplink model aggregation, the central server collects the squared norm of local model update $\snorm{\bm \Delta_i^{t}}$ from each edge device $i \in \mathcal{M}_t$, and then broadcasts $\Delta^t_{\rm{max}}$ to all edge devices.
Besides, the central server estimates and feeds back the channel coefficients to these corresponding edge devices.
It is worth noting that the communication overhead introduced by the exchange of these scalars is negligible when compared with the transmission of high-dimensional model parameters.

\begin{remark}
Different from most existing studies \cite{yangkai_air,zhangyingjun_FL,xiaowen_jsac,zhibin_fl} that only focus on compensating for channel fading, the adopted transmitter design, i.e., \eqref{precoding}, takes the scale of the model update into account. This ensures that the distortion between the obtained signal and the desired signal, i.e., the scaled receiver noise, is proportional to the maximum of the squared norm of local updates. This distortion diminishes when the local model converges. In other words, with such a transmitter design, the detrimental effect of the noise can be eliminated as the iteration proceeds for a convergent algorithm. 
%Another critical observation is that the power of the noise is inversely proportional to $N^2P$, which reveals that the AirComp aggregation can effectively suppress the receiver noise. 
%The reason is that AirComp enables concurrent transmissions to fully exploit the transmission energy of all devices.

\end{remark}

\subsection{Convergence Analysis for AirComp-Assisted FedZO}\label{aircomp_analysis}

In the following theorem, we characterize the convergence of the AirComp-assisted FedZO algorithm described in the previous two subsections. 

%For the analysis, we denote the minimum number of the participating edge devices over all the rounds as $M$ under the scheduling strategy presented in Section \ref{aircomp_schedule}, i.e., $|\mathcal{M}_t| \geq M, \forall t.$.

	%For the convenience of notations, we denote $$\gamma = \frac{P\min_t\{h_{\mathrm{min}}^2\}}{\sigma_w^2}$$ as the minimum SNR at the receive over the $T$ rounds aggregation.

%\begin{lemma}
%\label{drift_air}
%\equa{
%\frac{1}{N}\sum_{i=1}^N \mathbb{E} \left[ \left\|\bm x_i^{(t,H)}  - \bar{\bm x}^{(t,0)}\right\|^2  \mid \mathcal{F}^{(t,0)}  \right ] 
%&\leq \frac{8d H^2 \eta^2}{\bracket{1-8d L^2 H^2 \eta^2}} \sum_{k=0}^{H-1}\mathbb{E}\left[ \left \|\nabla f(\bar{\bm x}^{(t,k)}) \right \|^2 \mid \mathcal{F}^{(t,0)} \right ] + \frac{8d H^2 \eta^2}{\bracket{1-8d L^2 H^2 \eta^2}} (\sigma_g^2 + \sigma_h^2)   \\
%&~~~ + \frac{d^2 L^2 H^2 }{2\bracket{1-8d L^2 H^2 \eta^2}} \eta^2 \mu^2 
%}	
%\end{lemma}
%
%\begin{lemma}
%\label{evolu_air}
%Denote $\frac{P}{\sigma_w^2} = \gamma$, we have
%\equa{
%\mathbb{E}\left[ f\left(\bm x^{t+1}\right) \mid \bar{\bm x}^{(t,H)} \right]   \leq f\left(\bar{\bm x}^{(t,H)}\right)  + \frac{L^2}{2\gamma} \frac{1}{N}\sum_{i=1}^N \mathbb{E}\left[ \left\|\bm x_i^{(t,H)} - \bar{\bm x}^{(t,0)}\right\|^2 \mid \mathcal{F}^{(t,0)} \right ]
%}	
%\end{lemma}
\begin{theorem} \label{theorem_noise}
	Suppose Assumptions \ref{bounded}-\ref{bound_heterogeneity} hold and the learning rate satisfies
	\begin{align}\label{theorem_eta_noise}
		& \eta \leq 
		\min \left \{\!\frac{\tilde{M}}{288 \tilde{c}_g\tilde{c}_h L}, \!\frac{\tilde{M}}{108c_hHL}, \! \frac{3}{2NH^2L},\!
		\frac{1}{3\sqrt{\tilde{c}_g}HL},
		  \right. \nonumber \\
	&~~~~~~~~	\left.  \frac{1}{2\sqrt{3NH^3}L}, \frac{\sqrt{\tilde{M}\gamma}}{L\sqrt{2\tilde{c}_gNH^3}},\! \frac{\tilde{M}^2\gamma}{36 \tilde{c}_g\tilde{c}_h NHL}
		\right \},
	\end{align} 
 where $\tilde{M} = \min \{ |\mathcal{M}_t|, t\in [T] \}$.
	The AirComp-assisted FedZO algorithm satisfies 
%	\equa{\label{}
%		\frac{1}{T} \sum_{t=0}^{T-0}  &  \mathbb{E}   \left\|\nabla f\left(\bm x^{t}\right)\right\|^2  
%		\leq 8\frac{ f\left(\bm x^{0}\right)  - f_* }{ HT \eta} 
%		\!+\! \eta \frac{320 dL}{M}  (\sigma_g^2 + \sigma_h^2)  \tilde{C} \\
%		& \!+\! \frac{72 \eta L^3 \sigma_h^2}{M} 
%		\!+\! 48 \eta L^3\mu^2   
%		\!+\! \frac{20 d^2 L^3 \eta \mu^2}{M}  
%		\!+\!  8L^2 \mu^2, 
%	}
%	and
	\begin{align}\label{}
	 \min_{t\in [T]} & \mathbb{E} \! \left\|\nabla f\left(\bm x^{t}\right)\right\|^2  \! 
\leq 4\frac{ f\left(\bm x^{0}\right)  - f_* }{ HT \eta} 
  \!+\! \eta \frac{32 L}{\tilde{M}} \hat{C}   \tilde{\sigma}^2 
   \nonumber \\
 &+\! \eta \frac{36 H L \sigma_h^2}{\tilde{M}} \!+\! \hat{C}  \frac{dL^2 \mu^2 }{36 }  
 \!+\!  \left( 12 + \frac{\hat{C}}{9} \right) L^2 \mu^2,
 \end{align}
where $\hat{C} = 1 + \frac{NH}{8 \tilde{M} \gamma}$ and $\gamma = \frac{P h_{\mathrm{min}}^2}{\sigma_w^2}$. $\tilde{c}_g$, $\tilde{c}_h$, and $\tilde{\sigma}^2$ are defined in Theorem \ref{theorem_full}.
\end{theorem}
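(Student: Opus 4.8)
The plan is to adapt the partial-device-participation analysis of Theorem \ref{theorem_part}, since the channel-threshold scheduling acts as uniform sampling (Section \ref{part edge device participation}) and the only structural difference is the extra additive perturbation $\tilde{\bm n}_t$ in the global update $\bm x^{t+1} = \bm x^t + \bm \Delta^t + \tilde{\bm n}_t$. First I would invoke $L$-smoothness (Assumption \ref{assump_smooth}) to write the one-round descent inequality
\[
f(\bm x^{t+1}) \le f(\bm x^t) + \langle \nabla f(\bm x^t),\, \bm \Delta^t + \tilde{\bm n}_t\rangle + \tfrac{L}{2}\|\bm \Delta^t + \tilde{\bm n}_t\|^2,
\]
and take conditional expectation. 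Writing $\tilde{\bm n}_t = c_t \bm g_t$ with $\bm g_t \sim \mathcal{CN}(0,\bm I_d)$ independent of the round-$t$ randomness and $c_t^2 = \sigma_w^2\Delta^t_{\mathrm{max}}/(|\mathcal{M}_t|^2 dP h_{\mathrm{min}}^2)$, the cross term $\langle \nabla f(\bm x^t), \tilde{\bm n}_t\rangle$ vanishes in expectation, and $\mathbb{E}\|\tilde{\bm n}_t\|^2 = \sigma_w^2\,\mathbb{E}[\Delta^t_{\mathrm{max}}]/(|\mathcal{M}_t|^2 P h_{\mathrm{min}}^2) \le \mathbb{E}[\Delta^t_{\mathrm{max}}]/(\tilde M^2\gamma)$, using $|\mathcal{M}_t|\ge\tilde M$ and $\gamma = P h_{\mathrm{min}}^2/\sigma_w^2$. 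After the split $\|\bm \Delta^t+\tilde{\bm n}_t\|^2 \le 2\|\bm \Delta^t\|^2 + 2\|\tilde{\bm n}_t\|^2$, the terms involving only $\bm \Delta^t$ (inner product and squared norm) are processed exactly as in the proof of Theorem \ref{theorem_part}, so the whole new content is the control of the noise contribution $\propto \mathbb{E}[\Delta^t_{\mathrm{max}}]/(\tilde M^2\gamma)$.

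The crux is bounding $\mathbb{E}[\Delta^t_{\mathrm{max}}] = \mathbb{E}[\max_{i\in\mathcal{M}_t}\|\bm \Delta_i^t\|^2]$. I would dominate the maximum by the sum, $\Delta^t_{\mathrm{max}}\le\sum_{i\in\mathcal{M}_t}\|\bm\Delta_i^t\|^2$, then use $\bm\Delta_i^t = -\eta\sum_{k=0}^{H-1}\bm e_i^{(t,k)}$ with Jensen's inequality to get $\|\bm\Delta_i^t\|^2 \le \eta^2 H\sum_{k=0}^{H-1}\|\bm e_i^{(t,k)}\|^2$. The second-moment bound on the estimator $\bm e_i^{(t,k)}$ — of the form $\mathbb{E}\|\bm e_i^{(t,k)}\|^2 \lesssim \tilde c_g\|\nabla f_i(\bm x_i^{(t,k)})\|^2 + \tilde\sigma^2 + (\text{terms in }\mu)$ — is already available from the lemmas feeding Theorems \ref{theorem_full}--\ref{theorem_part}, and the local gradients $\nabla f_i(\bm x_i^{(t,k)})$ are tied back to $\nabla f(\bm x^t)$ via the heterogeneity bound (Assumption \ref{bound_heterogeneity}) and the per-round drift bound $\sum_k\mathbb{E}\|\bm x_i^{(t,k)}-\bm x^t\|^2$, which is unchanged because no fresh noise enters within a round. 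Consequently $\mathbb{E}[\Delta^t_{\mathrm{max}}]$ is $\mathcal{O}(\eta^2 H)$ times a combination of $\mathbb{E}\|\nabla f(\bm x^t)\|^2$, $\tilde\sigma^2$, $\sigma_h^2$, and $\mu^2$; the noise term $\tfrac{L}{\tilde M^2\gamma}\mathbb{E}[\Delta^t_{\mathrm{max}}]$ therefore carries an extra $\eta^2$, so for $\eta$ small enough its $\|\nabla f(\bm x^t)\|^2$-piece is absorbed into the left-hand side and the remainder produces the factor $\hat C = 1 + \tfrac{NH}{8\tilde M\gamma}$ multiplying $\tilde\sigma^2$ and $\mu^2$.

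Finally I would collect everything into $\mathbb{E} f(\bm x^{t+1}) \le \mathbb{E} f(\bm x^t) - c\,\eta H\,\mathbb{E}\|\nabla f(\bm x^t)\|^2 + (\text{noise/variance/bias terms})$ for an absolute constant $c$, valid once $\eta$ satisfies \eqref{theorem_eta_noise}, telescope over $t=0,\dots,T-1$, apply $f(\bm x^T)\ge f_*$ (Assumption \ref{bounded}), and divide by $cHT\eta$ to bound $\min_{t\in[T]}\mathbb{E}\|\nabla f(\bm x^t)\|^2$. I expect the main obstacle to be the bookkeeping that shows the two new learning-rate restrictions in \eqref{theorem_eta_noise} — namely $\eta\le\frac{\sqrt{\tilde M\gamma}}{L\sqrt{2\tilde c_g N H^3}}$ and $\eta\le\frac{\tilde M^2\gamma}{36\tilde c_g\tilde c_h NHL}$ — are exactly what keeps, respectively, the noise-amplified drift subdominant and the noise-induced $\|\nabla f(\bm x^t)\|^2$ term absorbable; aligning those constants while simultaneously coupling the $\Delta^t_{\mathrm{max}}$ estimate with the intra-round drift recursion is the delicate part of the argument.
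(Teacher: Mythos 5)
Your proposal follows essentially the same route as the paper: the paper expands smoothness around the noise-free aggregate $\tilde{\bm x}^{t+1}=\bm x^t+\bm\Delta^t$ so that the one-round bound of Theorem \ref{theorem_part} applies verbatim and the only new term is $\frac{L}{2}\mathbb{E}\|\tilde{\bm n}_t\|^2$, then controls $\mathbb{E}[\Delta^t_{\mathrm{max}}]$ by the sum of the $H$-step local drifts (Lemma \ref{drift_lem_H}), exactly as you do, which produces the $\hat{C}$ factor and the two extra learning-rate restrictions. The only blemish is your split $\|\bm\Delta^t+\tilde{\bm n}_t\|^2\le 2\|\bm\Delta^t\|^2+2\|\tilde{\bm n}_t\|^2$, which is needlessly lossy and would double the constants: since $\tilde{\bm n}_t$ is zero-mean and independent of the round-$t$ randomness, $\mathbb{E}\|\bm\Delta^t+\tilde{\bm n}_t\|^2=\mathbb{E}\|\bm\Delta^t\|^2+\mathbb{E}\|\tilde{\bm n}_t\|^2$ holds exactly, which is what the paper's choice of expansion point delivers automatically.
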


As can be observed from Theorem \ref{theorem_noise}, the convergence rate depends on $\gamma$, which is the minimum receive SNR. Theorem \ref{theorem_noise} almost reduces to Theorem \ref{theorem_part} when $\gamma$ goes to infinity, i.e., noise-free case. 
Obviously, a smaller value of SNR leads to a  slower convergence speed, which meets our intuition. In the following corollary, we show that a same-order convergence rate as the noise-free case presented in Section \ref{part edge device participation} can be achieved with appropriate receive SNR $\gamma$, learning rate $\eta $, and step size $\mu$. 
\begin{corollary} \label{coro_ota}
Suppose Assumptions \ref{bounded}-\ref{bound_heterogeneity} hold, $8 \tilde{M}\gamma \geq NH$, i.e., the communication quality is good enough, and let $b_1b_2 \leq d$, 
$\mu = (db_1b_2\tilde{M}HT)^{-\frac{1}{4}}$ and $\eta = {(\tilde{M}b_1b_2)}^{\frac{1}{2}}(dHT)^{-\frac{1}{2}}$ that holds for \eqref{theorem_eta_noise}, we have
%\begin{align}
%&\eta \!=\! \tilde{M}^{\frac{1}{2}}(dHT)^{-\frac{1}{2}} \leq \min  \left \{\!
%\frac{\tilde{M}}{192 dL}, \! \frac{\tilde{M}^2\gamma}{24dNHL},\! \frac{3}{2\tilde{M}H^2L},  \right. \nonumber  \\
%&	~~~~~~~~~~~~~ \left. \frac{1}{3\sqrt{d}HL}, \frac{1}{2\sqrt{3\tilde{M}H^3}L}, \frac{\sqrt{\tilde{M}\gamma}}{L\sqrt{2dNH^3}} 
%\right \},
%\end{align}
\begin{align}\label{rate_noise}
	 &\min_{t \in [T]}  \mathbb{E} \left\|\nabla f\left(\bm x^{t}\right)\right\|^2 
	\leq  \mathcal{O}\left( d^{\frac{1}{2}}{(\tilde{M}HTb_1b_2)}^{-\frac{1}{2}} \right) \nonumber \\  
	&+\!\mathcal{O}\left( (b_1b_2H)^{\frac{1}{2}}{(d\tilde{M}T)}^{-\frac{1}{2}} \right)  \!+ \!\mathcal{O}\left((db_1b_2\tilde{M}HT)^{-\frac{1}{2}} \right).
\end{align}
%where the right hand side of \eqref{rate_noise} is dominated by $\mathcal{O}\big( d^{\frac{1}{2}}{(\tilde{M}HT)}^{-\frac{1}{2}} \big)$.
\end{corollary}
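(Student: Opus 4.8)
The plan is to obtain Corollary~\ref{coro_ota} as a direct specialization of Theorem~\ref{theorem_noise}: I would substitute the prescribed $\eta$ and $\mu$ into the five-term upper bound of Theorem~\ref{theorem_noise}, use the standing hypotheses to control the constant $\hat C$ and the variance parameter $\tilde\sigma^2$, and then collect the resulting terms by order. So the argument has three short parts.

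\emph{Controlling $\hat C$.} From the hypothesis $8\tilde M\gamma\ge NH$ we get $\hat C = 1+\frac{NH}{8\tilde M\gamma}\le 2$, i.e.\ $\hat C=\Theta(1)$. This is exactly the role of the ``communication quality is good enough'' assumption: it guarantees that the receiver-noise contribution inflates each of the four error terms of Theorem~\ref{theorem_noise} by at most a constant factor, so the AirComp-assisted algorithm inherits the order of the noise-free partial-participation bound \eqref{part_order}. Next, for admissibility of the step size, I would check that $\eta=(\tilde M b_1b_2)^{1/2}(dHT)^{-1/2}$ satisfies \eqref{theorem_eta_noise}: every term in the right-hand side of \eqref{theorem_eta_noise} is a strictly positive quantity depending only on $L$, $\tilde c_g$, $\tilde c_h$, $c_h$, $N$, $H$, $\tilde M$, $\gamma$ and not on $T$, whereas the prescribed $\eta$ is $\Theta(T^{-1/2})\to 0$; hence, with $b_1,b_2,d,N,H,\tilde M,\gamma$ regarded as fixed, $\eta$ falls below the minimum of those bounds once $T$ is large enough, which is the clause ``that holds for \eqref{theorem_eta_noise}'' in the statement.

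\emph{Term-by-term estimation.} Substituting $\eta$ and $\mu=(db_1b_2\tilde M HT)^{-1/4}$ into the bound of Theorem~\ref{theorem_noise}, I would estimate the five summands as follows. The initialization term $4\frac{f(\bm x^0)-f_*}{HT\eta}$ equals $\Theta\!\big(d^{1/2}(\tilde M HT b_1b_2)^{-1/2}\big)$. For the variance term $\eta\frac{32L}{\tilde M}\hat C\tilde\sigma^2$, the key is to use $b_1b_2\le d$ so that $\tilde\sigma^2 = 3\big(1+\tfrac{c_g d}{b_1b_2}\big)\sigma_h^2+\tfrac{d\sigma_g^2}{b_1b_2}=\mathcal{O}\!\big(\tfrac{d}{b_1b_2}\big)$; then this term is $\mathcal{O}\!\big(\eta\,d(\tilde M b_1b_2)^{-1}\big)=\mathcal{O}\!\big(d^{1/2}(\tilde M HT b_1b_2)^{-1/2}\big)$. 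The sampling term $\eta\frac{36HL\sigma_h^2}{\tilde M}=\mathcal{O}(\eta H/\tilde M)=\mathcal{O}\!\big((b_1b_2H)^{1/2}(d\tilde M T)^{-1/2}\big)$. The two smoothing terms $\hat C\frac{dL^2\mu^2}{36}$ and $(12+\hat C/9)L^2\mu^2$ reduce to $\mathcal{O}(d\mu^2)=\mathcal{O}\!\big(d^{1/2}(b_1b_2\tilde M HT)^{-1/2}\big)$ and $\mathcal{O}(\mu^2)=\mathcal{O}\!\big((db_1b_2\tilde M HT)^{-1/2}\big)$, respectively. Adding the five estimates and keeping one representative of each distinct order gives exactly \eqref{rate_noise}.

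I do not expect a genuine obstacle: the computation is bookkeeping in the spirit of Corollaries~\ref{linear_speedup_full} and \ref{linear_speedup}. The one place that needs care is the variance term—one must bound $\tilde\sigma^2$ by $\mathcal{O}(d/(b_1b_2))$ via $b_1b_2\le d$ rather than treating it as a constant, otherwise that term is underestimated and the stated rate is not recovered. A secondary subtlety is that the admissibility check treats $H$ as fixed while $T\to\infty$; if one instead wants $H$ to grow with $T$ (cf.\ Remark~\ref{maximum_H}), imposing $\eta\le$ each bound in \eqref{theorem_eta_noise} yields upper limits on $H$—of the form $\mathcal{O}\!\big((dT)^{1/3}(b_1b_2)^{-1/3}\tilde M^{-1}\big)$, $\mathcal{O}(T\tilde M^{-1})$, plus the two SNR-dependent ones coming from the last two arguments of the $\min$ in \eqref{theorem_eta_noise}—which is the AirComp analogue of the partial-participation discussion following Corollary~\ref{linear_speedup}.
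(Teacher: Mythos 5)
Your proposal is correct and follows exactly the route the paper intends: Corollary \ref{coro_ota} is obtained by direct substitution of the prescribed $\eta$ and $\mu$ into Theorem \ref{theorem_noise}, with $8\tilde{M}\gamma\ge NH$ giving $\hat C\le 2$ and $b_1b_2\le d$ giving $\tilde\sigma^2=\mathcal{O}(d/(b_1b_2))$, after which the five terms collapse to the three orders in \eqref{rate_noise}. Your term-by-term estimates all check out, and your remark that the variance term must be bounded via $b_1b_2\le d$ (rather than treated as a constant) is precisely the one non-trivial point in the bookkeeping.
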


\begin{remark}
From Corollary \ref{coro_ota}, it can be observed that the upper bound of the minimum squared gradient among the global model sequence approaches to zero as $T$ goes to infinity, while that of the existing algorithms with AirComp is only shown to be bounded by a non-diminishing optimality gap \cite{zhibin_fl,zhangyingjun_FL,xiaowen_jsac}. 
Moreover, the transceiver design in \cite{zhibin_fl,zhangyingjun_FL,xiaowen_jsac} 
is transformed to an optimization problem aiming to minimize this gap, which is computationally expensive. 
In contrast, our transceiver design follows the principle of COTAF \cite{yonina_fl} and mitigates the detrimental impact of channel fading and receiver noise perturbation without the need of solving optimization problems. 
Note that the analysis in \cite{yonina_fl} concentrates on the first-order algorithm under the strongly convex setup and relies on the assumption that the second-order moment of the stochastic gradient is bounded by a constant, which is restrictive \cite{tighter_theory} and not required in this paper.
\end{remark}

%\subsection{Comparison with Other Algorithms}
%
%FedAvg
%
%Distributed Zeroth-order

\section{Simulation Results}\label{simulation}

In this section, we present simulation results to evaluate the effectiveness of the proposed FedZO algorithm for applications of federated black-box attack and softmax regression.

\subsection{Federated Black-Box Attack}
\label{blackbox_results}

The robustness of machine learning (ML) models is an important performance metric for their practical application. For example, in an image classification model, the prediction results of the ML model are expected to be the same as the decision that humans make.
In other words, the same output should be generated by a robust model if the input image is perturbed by a noise imperceptible to human. 
To evaluate the robustness of ML models, black-box attacks can be adopted, where the adversary acts as a standard user that does not have access to the inner structure of ML models and can only query the outputs (label or confidence score) for different inputs. 
This situation occurs when attacking ML cloud services where the model only serves as an API. Due to the black-box property, the optimization of black-box attacks falls into the category of zeroth-order optimization.

We consider federated black-box attacks \cite{yi2021zeroth} on the image classification DNN models that are well trained on some standard datasets. Federated black-box attacks aim to collaboratively generate a common perturbation such that the perturbed images are visually imperceptible to a human but could mislead the classifier. For image $\bm z_i$, the attack loss \cite{carlini2017towards} is given by
%\begin{comment}
%\begin{small}
\begin{align}
	\psi_{i}(\bm x)= &\max \left\{\Phi_{y_{i}}\left(\frac{1}{2} \tanh \left(\tanh ^{-1} 2 \bm z_{i}+ \bm x\right)\right)  \right . \nonumber \\
	& -\left. \max_{j \neq y_{i}}\left\{\Phi_{j}\left(\frac{1}{2} \tanh \left(\tanh ^{-1} 2 \bm z_{i}+ \bm x\right)\right)\right\}, 0\right\} \nonumber \\
	&+c\left\|\frac{1}{2} \tanh \left(\tanh ^{-1} 2 \bm z_{i} + \bm x\right)- \bm z_{i}\right\|^{2}, 
\end{align}
%\end{small}
%\end{comment}
\begin{comment}
\equa{
	\begin{aligned}
		\psi_{i}(\bm x)=& \max \left\{\Phi_{y_{i}}\left(\frac{1}{2} \tanh \left(\tanh ^{-1} 2 \bm z_{i}+ \bm x\right)\right)  \right . \\
		& -\left. \max_{j \neq y_{i}}\left\{\Phi_{j}\left(\frac{1}{2} \tanh \left(\tanh ^{-1} 2 \bm z_{i}+x\right)\right)\right\}, 0\right\} \\
		&+c\left\|\frac{1}{2} \tanh \left(\tanh ^{-1} 2 \bm z_{i} + \bm x\right)- \bm z_{i}\right\|^{2}, \nonumber
	\end{aligned}
}
\end{comment}
where $y_i$ denotes the label of image $\bm z_i$, $\Phi_j(\bm z)$ represents the prediction confidence of image $\bm z$ to class $j$, $\frac{1}{2} \tanh \left(\tanh ^{-1} 2 \bm z_{i} + \bm x\right)$ is the adversarial example of $\bm z_i$, $\frac{1}{2} \tanh \left(\tanh ^{-1} 2 \bm z_{i} + \bm x\right)- \bm z_{i}$ is the distortion perturbed by $\bm x$ in the original image space. The first term of $\psi_{i}(\bm x)$ measures the probability of failing to attack. The last term of $\psi_{i}(\bm x)$ represents the distortion induced by $\bm x$ in the original image space. 
The goal of attack is to find a visually small perturbation to mislead the classifier $\Phi(\cdot)$ that can be realized by minimizing $\psi_{i}(\bm x)$. 
Parameter $c$ balances the trade-off between the adversarial success and distortion loss. We denote the dataset at edge device $n$ as $\mathcal{D}_n$. The attack loss of device $n$ can be expressed as $f_n(\bm x) =  \frac{1}{|\mathcal{D}_n|}\sum_{i\in \mathcal{D}_n} \psi_{i}(x)$. Federated black-box attacks of a DNN model can be formulated as:
$
\min_{\bm x\in \mathbb{R}^d} \frac{1}{N} \sum_{n=1}^{N} f_n(\bm x),
$ 
which can be tackled by the proposed FedZO algorithm.

In this experiment setting, all edge devices share one well-trained DNN classifier\footnote{\text{https://github.com/carlini/nn\_robust\_attacks}} that has a testing accuracy of $82.3 \%$ on CIFAR-10 dataset \cite{carlini2017towards}.
%We pick 4,992 correctly classified samples from the training set of image class ``deer" (containing 5,000 samples), and then randomly distribute $|\mathcal{D}_n| = 500$ samples to each device $n$. 
We pick $4992$ correctly classified samples from the training set of image class ``deer" (containing 5,000 samples) and then distribute these samples to edge devices without overlapping. Each edge device is assigned a random number of samples.

%The total number of edge devices is $50$.
We set the balancing parameter $c=1$.
The mini-batch sizes are set to $b_1 = 25$ and $b_2 = 20$.
The learning rate and step size are set to $\eta = 0.001$ and $\mu = 0.001$, respectively.

In Fig. \ref{fig:h}, we show the impact of the number of local updates on the convergence performance of the proposed FedZO algorithm with full device participation. 
%In particular, we pick $N=10$ edge devices from the aforementioned devices set.  
Specifically, we vary the number of local updates $H \in \{5, 10, 20, 50\}$ and present the attack loss versus the number of communication rounds. It can be observed that the FedZO algorithm can effectively reduce the attack loss for different values of $H$. Besides, as $H$ increases, the convergence speed of the FedZO algorithm tends to increase. This demonstrates the speedup in the number of the local iterates as shown in Section \ref{theory}. We further compare the performance of the proposed FedZO algorithm with DZOPA \cite{yi2021zeroth} and ZONE-S \cite{zone_hong}. For DZOPA, the learning rate (i.e., $\eta$) and step size (i.e., $\mu$) are set to $0.005$ and $0.001$, respectively. For ZONE-S, the penalty parameter (i.e., $\rho$, defined in \cite{zone_hong}) and step size (i.e., $\mu$) are set to $500$ and $0.001$, respectively.
Note that DZOPA was proposed for the peer-to-peer architecture which cannot be directly applied to our considered server-client architecture.
For comparison, we depict the performance of DZOPA under a fully-connected graph.
For fairness, we also upgrade the two-point stochastic gradient estimator of \cite{yi2021zeroth} to a mini-batch-type one as in \eqref{stochastic_estimator_ori}. Results show that the FedZO algorithm outperforms the baselines even when $H=5$.
With a larger number of local updates, the attack loss of the FedZO algorithm decreases much faster than that of the baselines.

% It is worth noting that the FedZO algorithm with different values of $H$ in Fig. \ref{fig:h} converges to different losses. 
% This is because the problem of federated black-box attacks on the DNN models is highly non-convex, there exist various saddle points, and the FedZO algorithms may be trapped in any of them.

%In TABLE \ref{tab:digit4}, we display the visual comparison of natural images and their corresponding adversarial examples generated by the FedZO algorithm when $N=50,~M=10$, and $H=60$ and the ZONE-S algorithm when $N=50$, respectively. Note that each displayed image is sampled from the local dataset of a unique edge device. Before attacking, all these images are correctly classified by the DNN model. The adversarial images generated by the FedZO algorithm are nearly perceptible to humans, but the DNN model fails to classify. In contrast, the adversarial images generated by the ZONE-S algorithm appear to contain more `noise'.

\begin{figure*}
	\centering
	\begin{minipage}{.3\textwidth}
		\begin{subfigure}{\textwidth}
			\centering
			\includegraphics[width=2.4in]{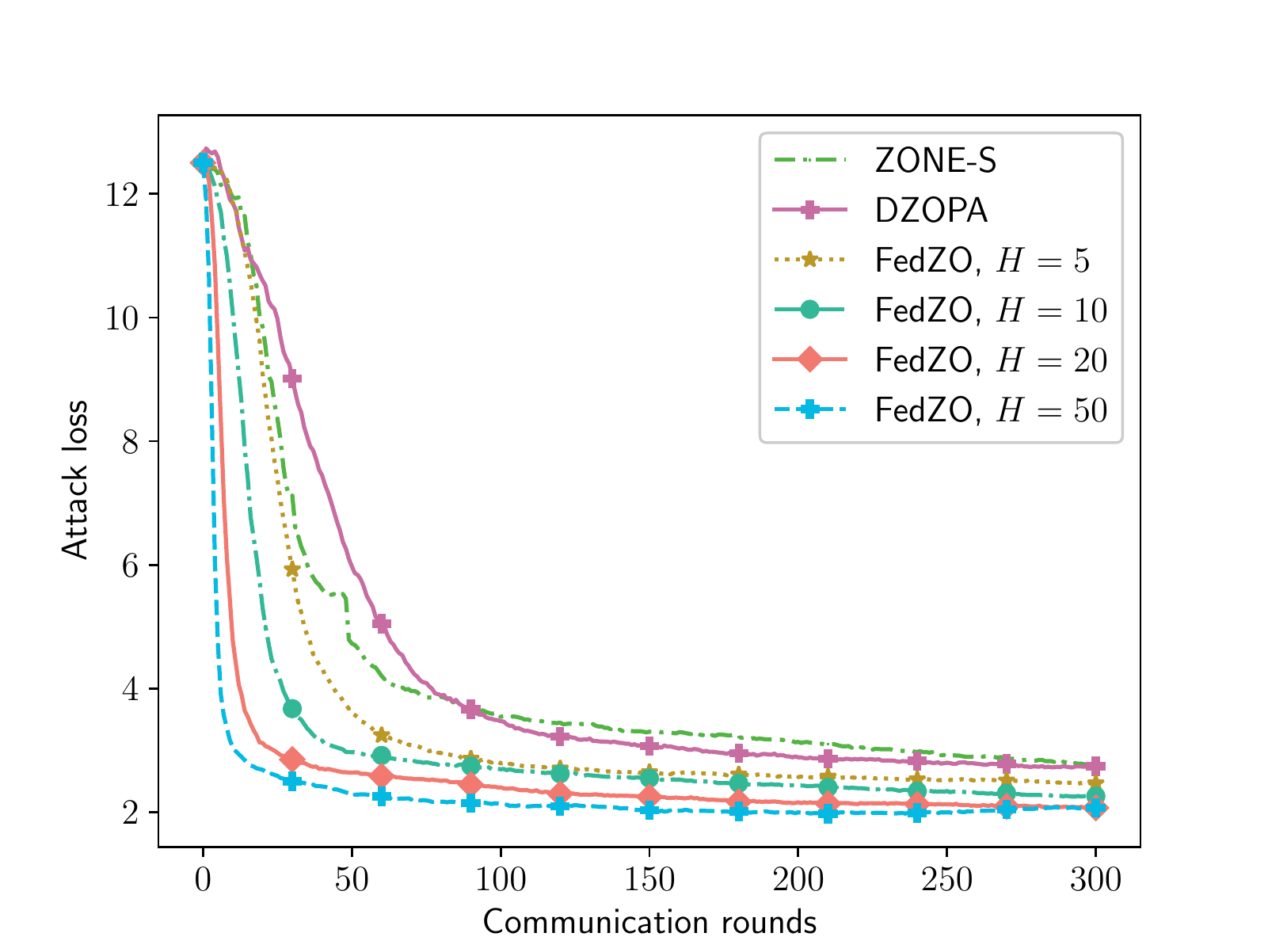}
			\caption{Impact of number of local updates when $N=10$ and $M = 10$.}\label{fig:h}
			\vspace{0.02cm}
		\end{subfigure}\\ 
	\end{minipage}
	\hspace*{\fill} 
	%\hspace{0.3cm}
	\begin{minipage}{.3\textwidth}
		\begin{subfigure}{\textwidth}
			\centering
			\includegraphics[width=2.4in]{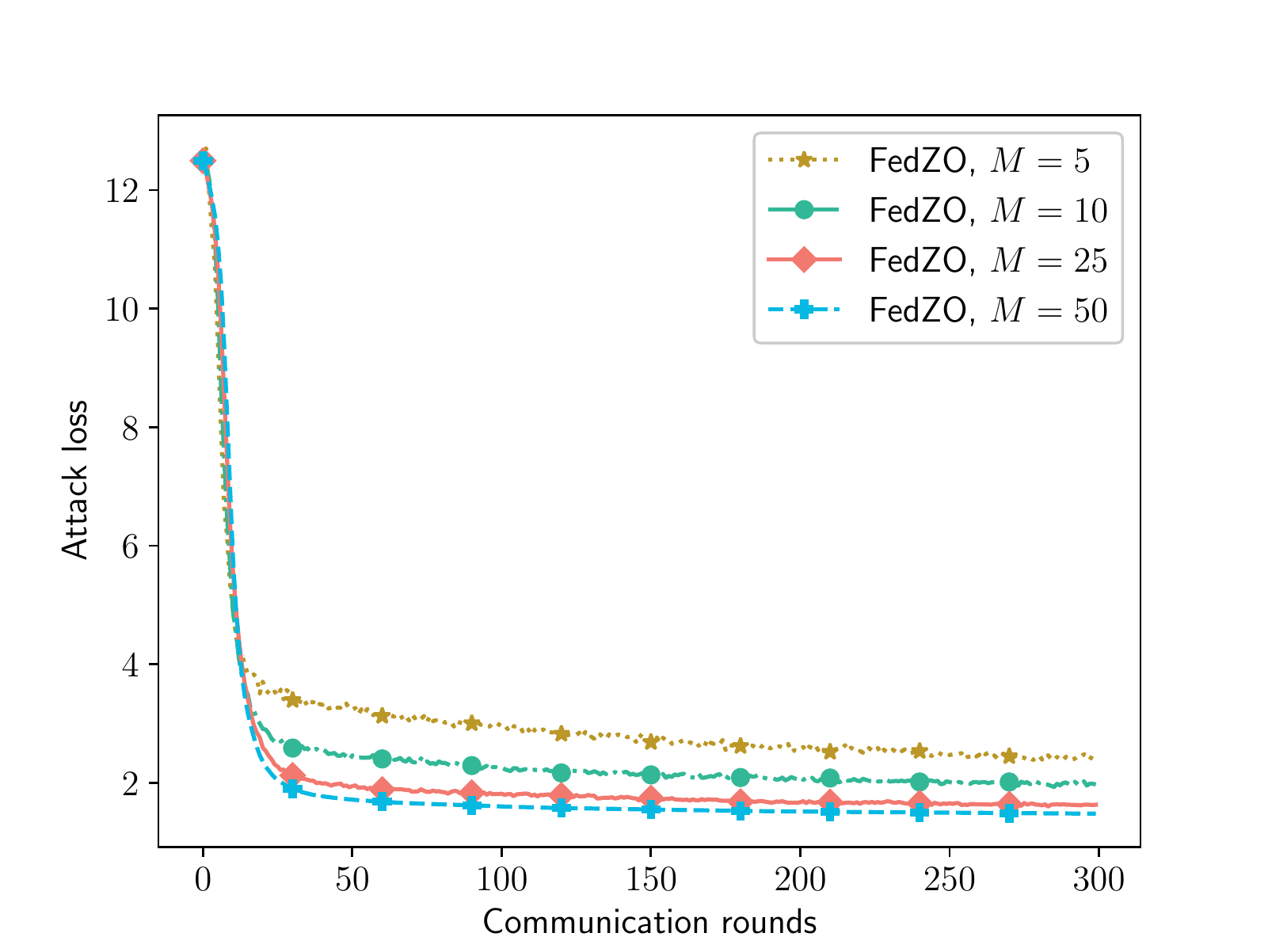}
			\caption{Impact of number of participating edge devices when $N=50$ and  $H=20$.}\label{fig:n}
			\vspace{0.03cm}
		\end{subfigure}%    
	\end{minipage}
	\hspace*{\fill}
	%\hspace{0.3cm}
	\begin{minipage}{.3\textwidth}
		\begin{subfigure}{\textwidth}
			\centering
			\includegraphics[width=2.4in]{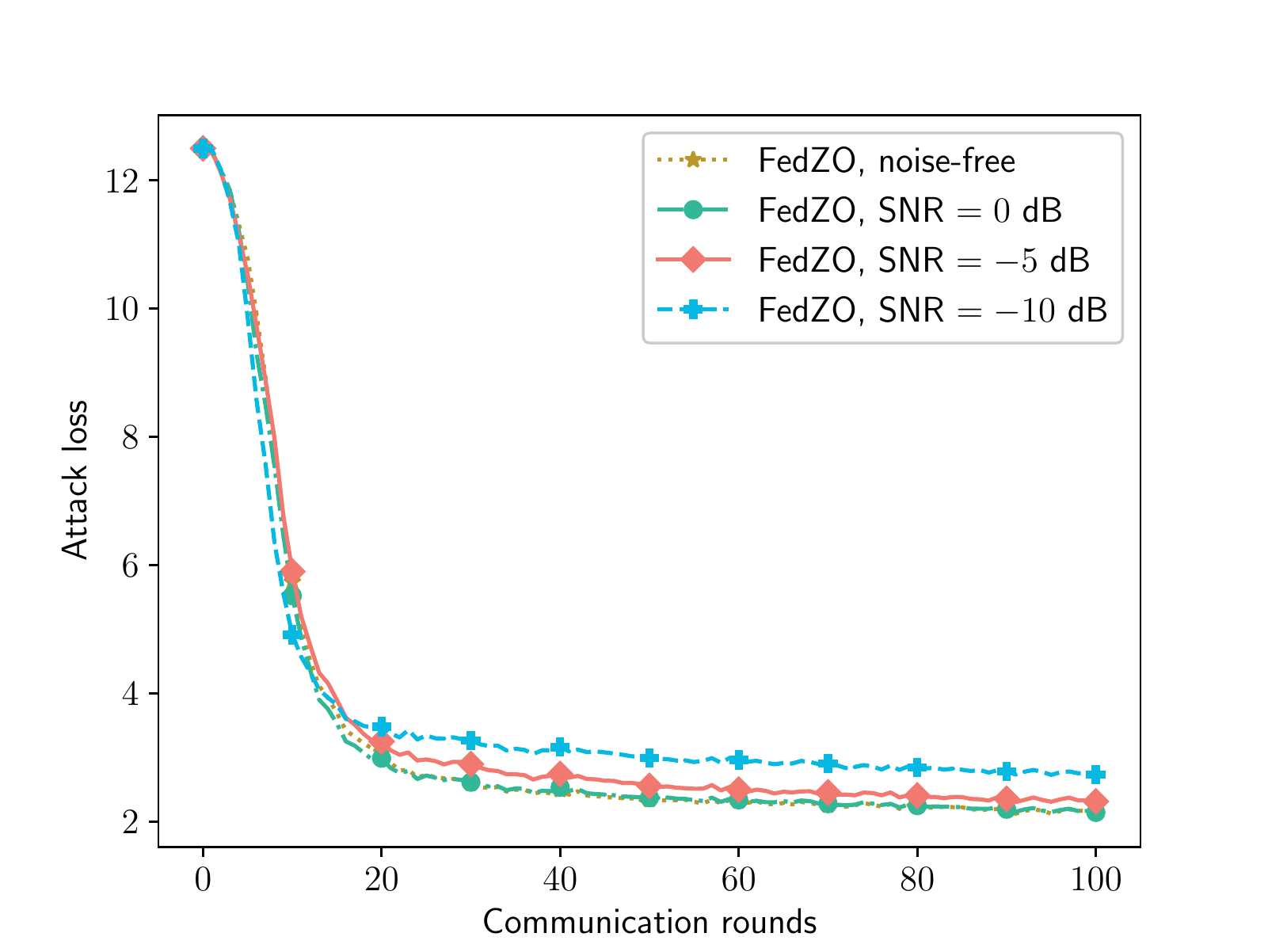}
			\caption{Impact of SNR when $N=50$ and  $H=20$.}\label{fig:snr}
			\vspace{0.02cm}
		\end{subfigure}%    
	\end{minipage}%
	\caption{Attack loss of the federated black-box attack on CIFAR-10 dataset.}
	\vspace{-0.4cm}
\end{figure*}
\begin{figure}
	\vspace{-0.4cm}
	\centering
	\includegraphics[width=3in]{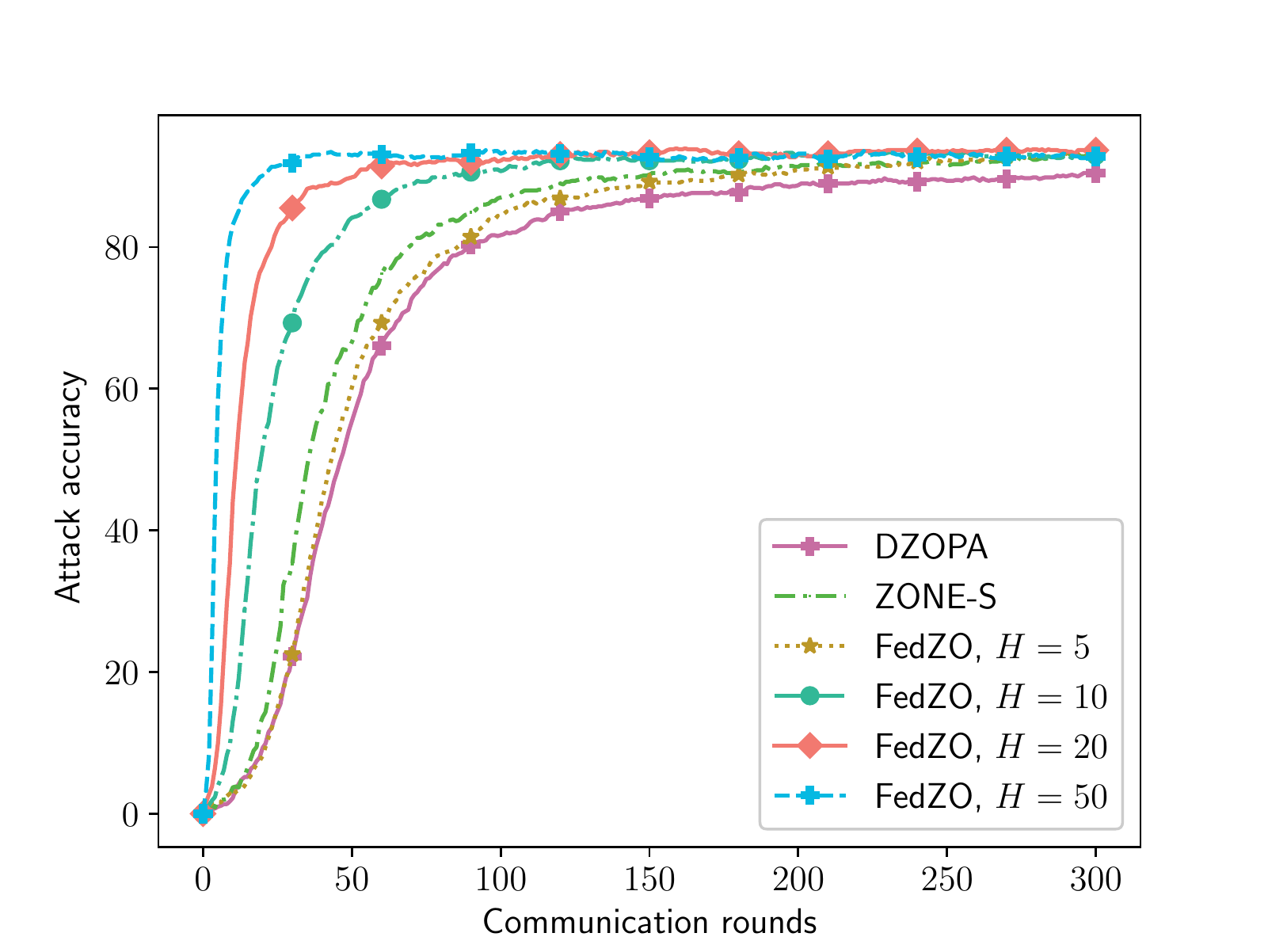}
	\caption{Attack accuracy of the federated black-box attack on CIFAR-10 dataset.}\label{fig:acc}
	\vspace{-0.5cm}
\end{figure}

\begin{figure*}
	\centering
	\begin{minipage}{.3\textwidth}
		%\vspace{-0.45cm}
		\begin{subfigure}{\textwidth}
			\centering
			\includegraphics[width=2.4in]{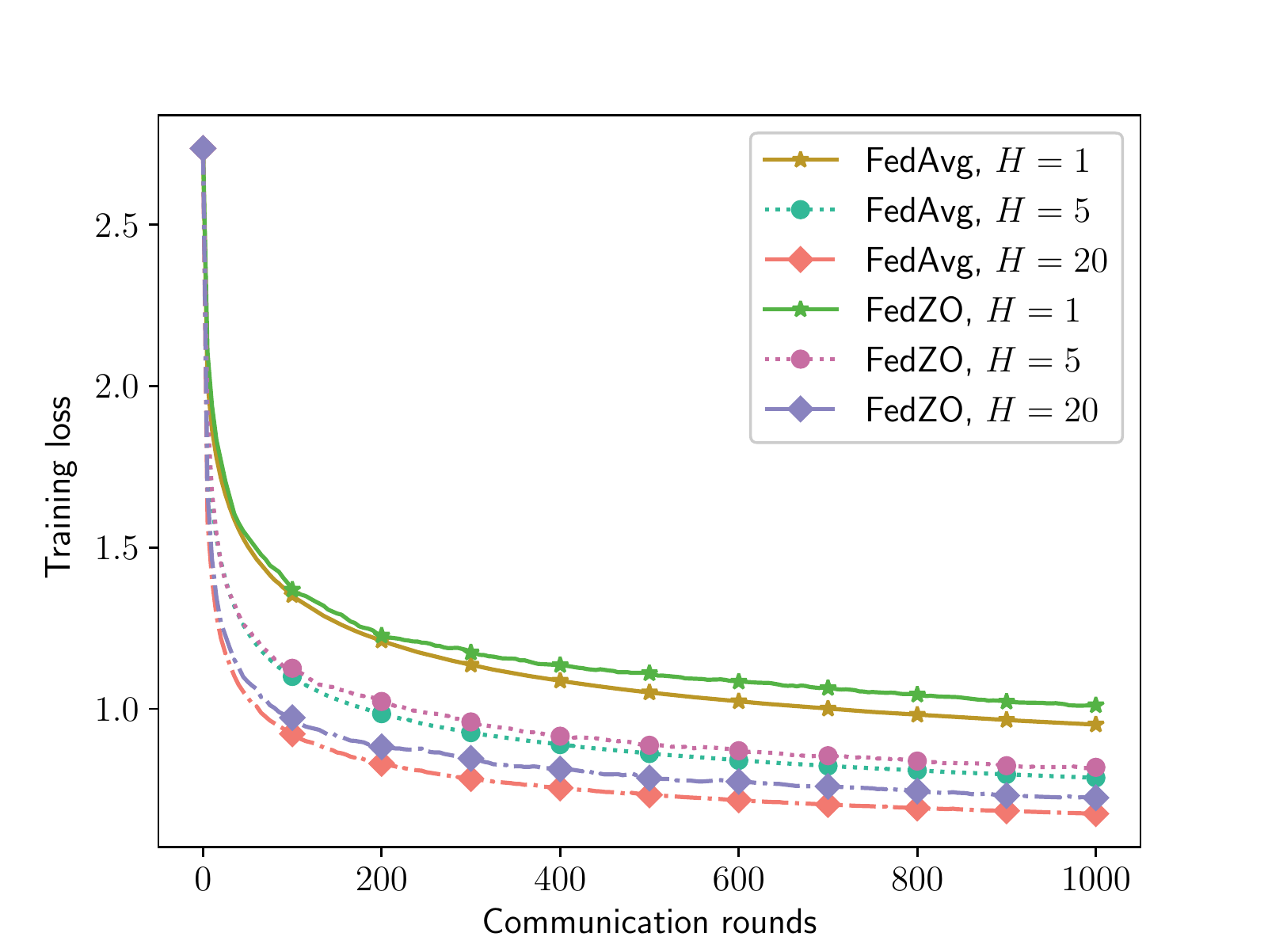}
			\caption{Impact of $H$ on the training loss.}
		\end{subfigure}\\
		\begin{subfigure}{\textwidth}
			\centering
			\includegraphics[width=2.4in]{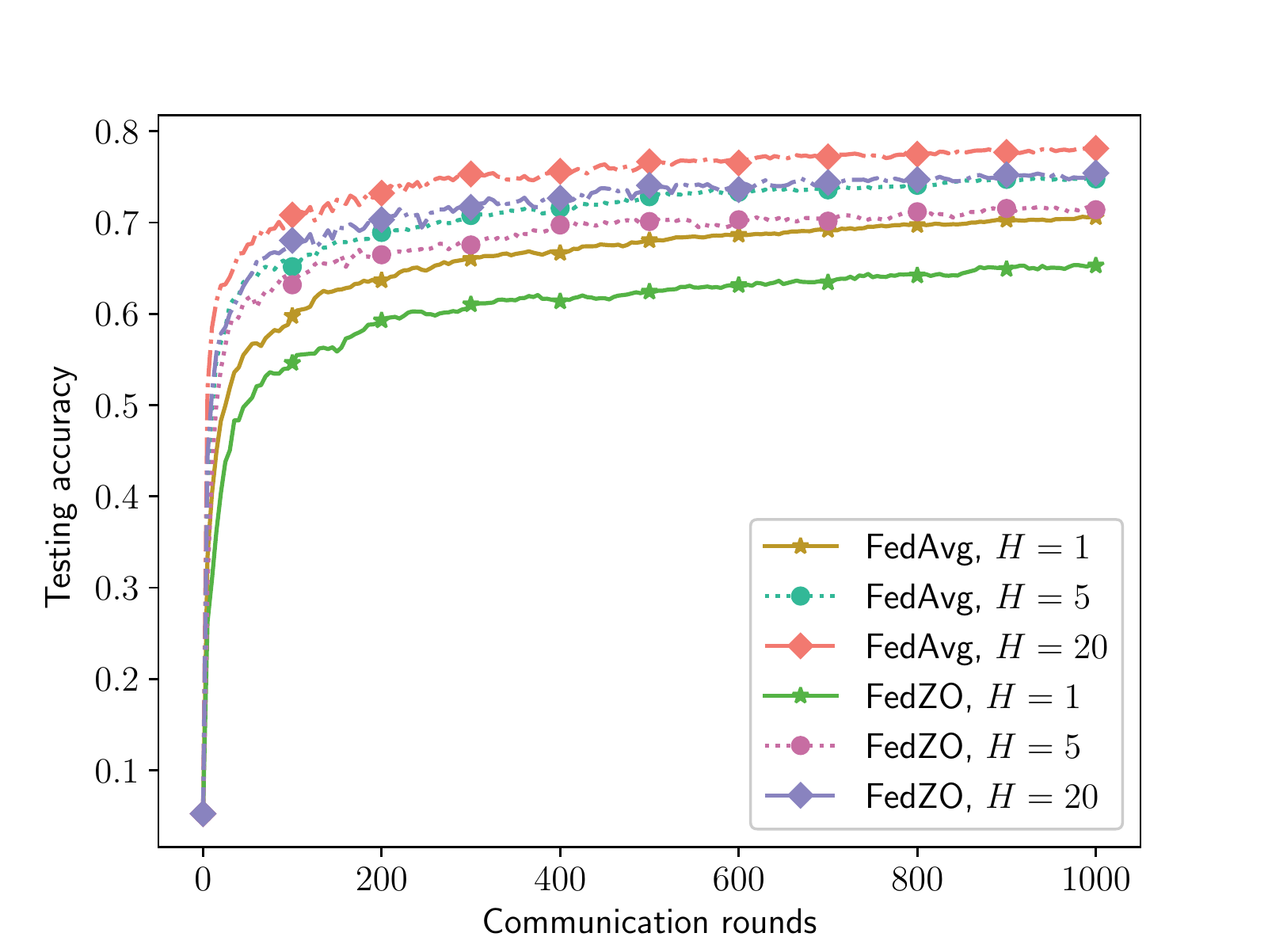}
			\caption{Impact of $H$ on the testing accuracy.}
		\end{subfigure}%    
		\caption{The convergence results
				on the softmax regression problem with Fashion-MNIST dataset when $N=50$ and $M=20$.}
		\label{mnist_H}
	\end{minipage}
	\hfill
	\begin{minipage}{.3\textwidth}
		\begin{subfigure}{\textwidth}
			\centering
			\includegraphics[width=2.4in]{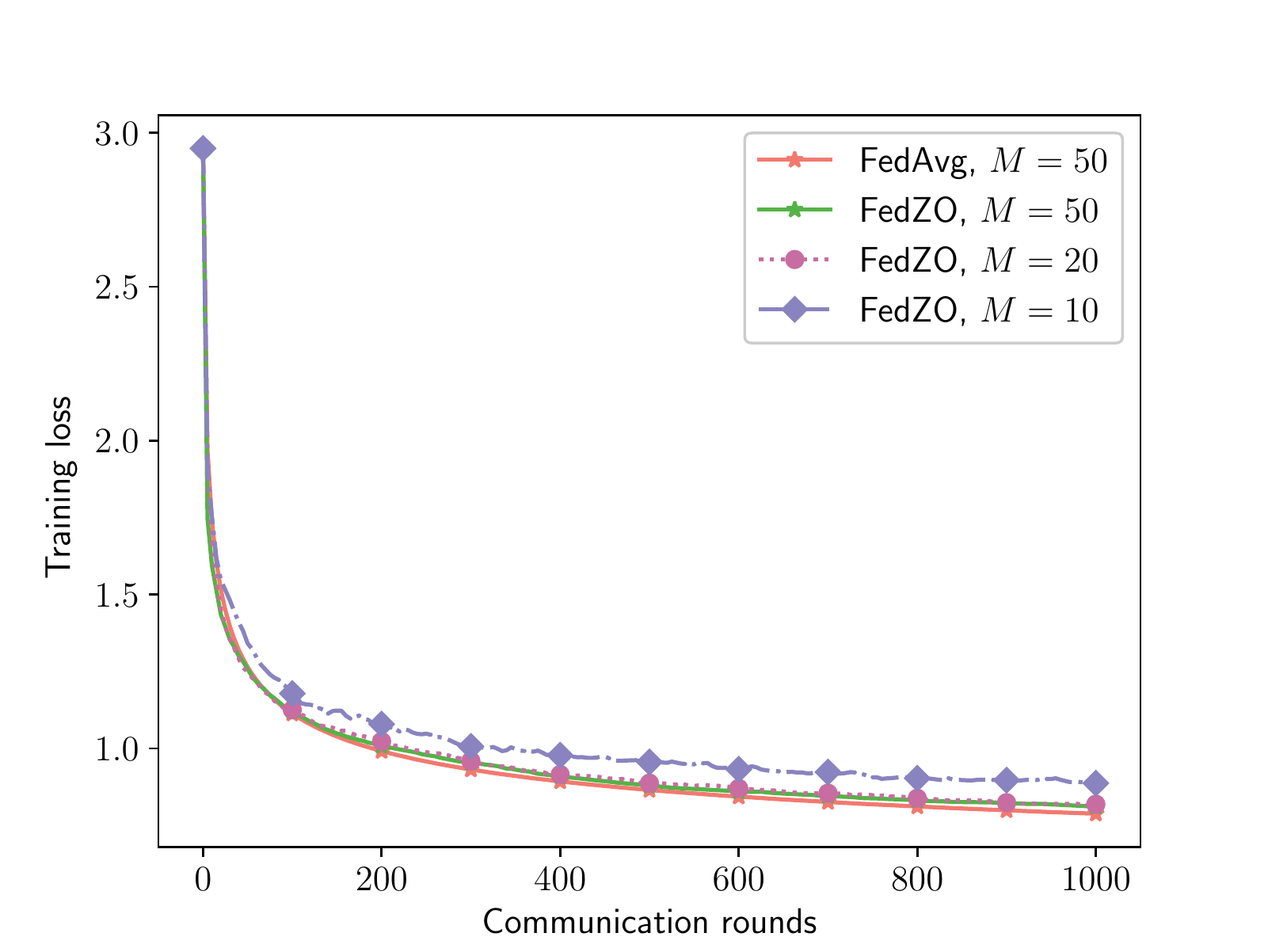}
			\caption{Impact of $M$ on the training loss.}
		\end{subfigure}\\
		\begin{subfigure}{\textwidth}
			\centering
			\includegraphics[width=2.4in]{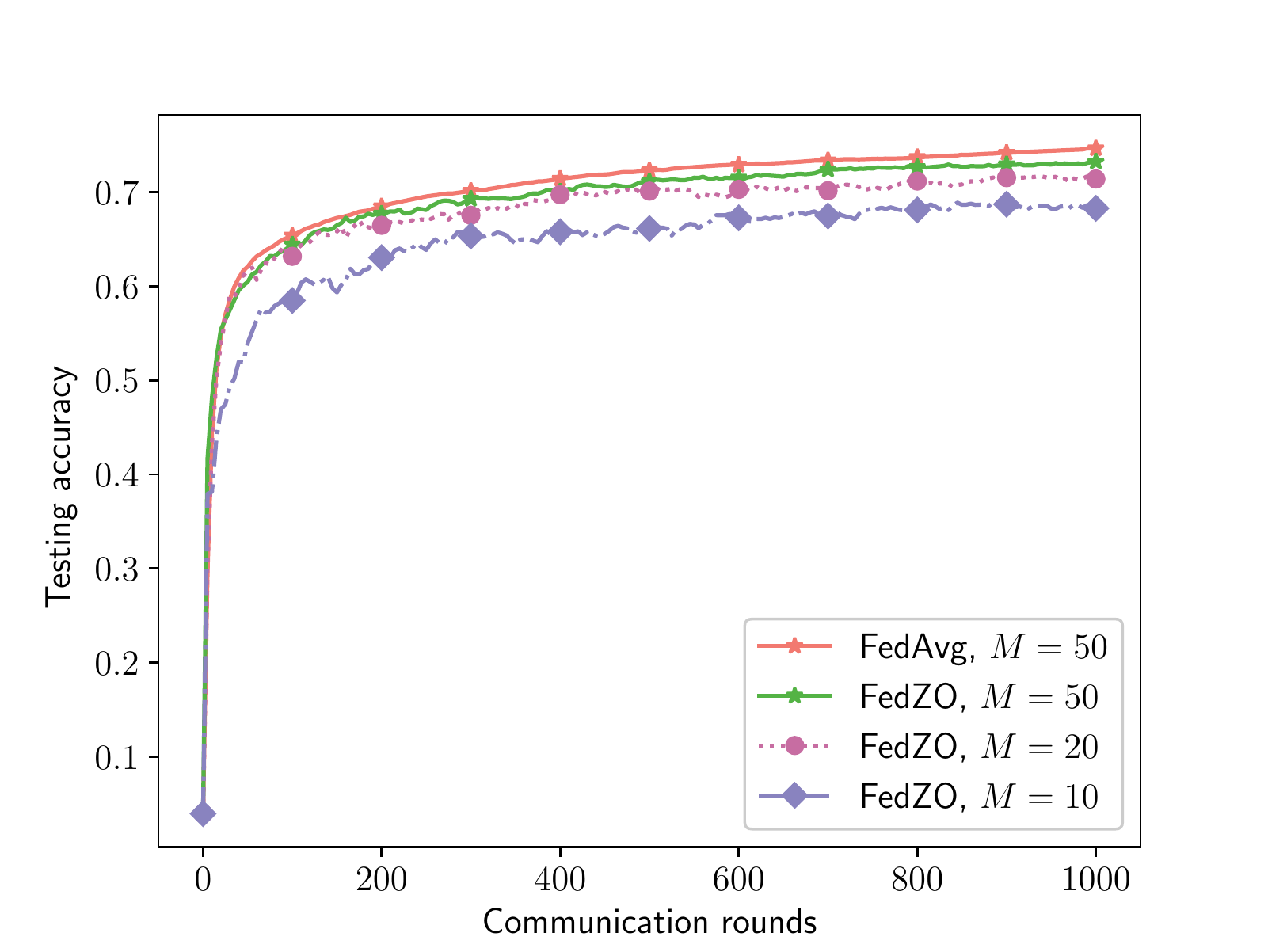}
			\caption{Impact of $M$ on the testing accuracy.}
		\end{subfigure}%    
		\caption{The convergence results
				on the softmax regression problem with Fashion-MNIST dataset when $N=50$ and $H=5$.}
		\label{mnist_N}
	\end{minipage}
	\hfill
	\begin{minipage}{.3\textwidth}
		%\vspace{-0.45cm}
		\begin{subfigure}{\textwidth}
			\centering
			\includegraphics[width=2.4in]{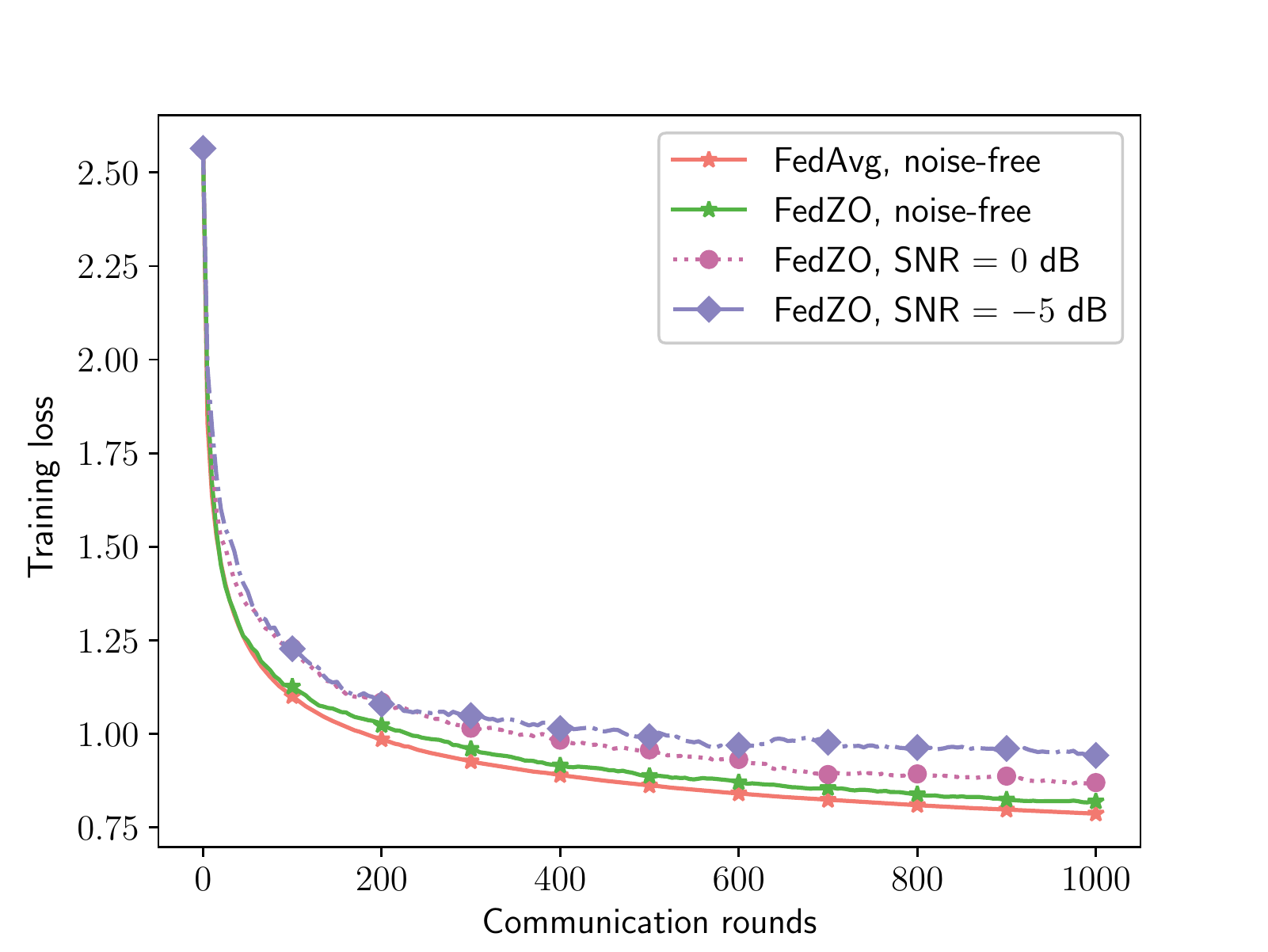}
			\caption{Impact of SNR on the training loss.}
		\end{subfigure}\\
		\begin{subfigure}{\textwidth}
			\centering
			\includegraphics[width=2.4in]{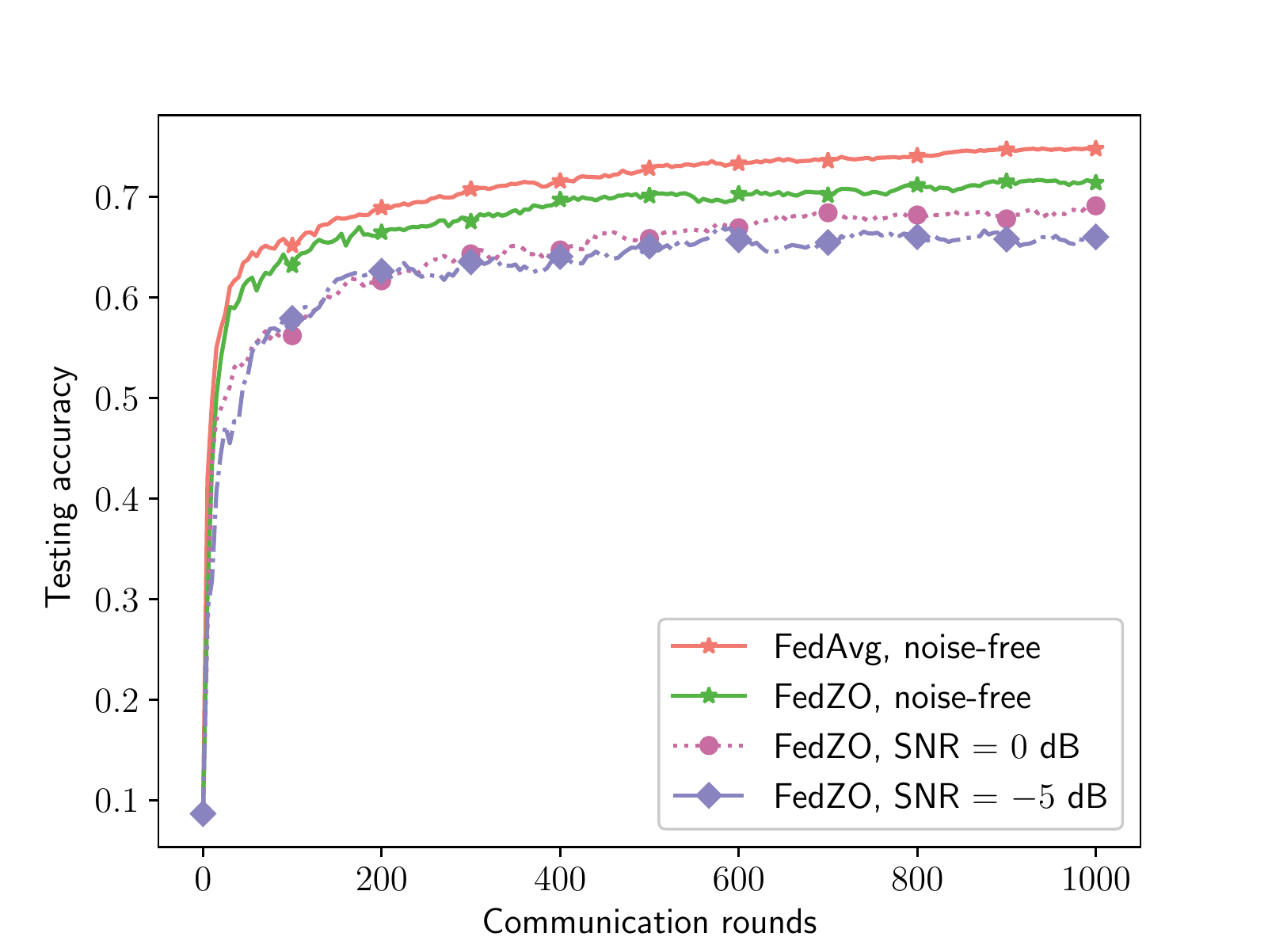}
			\caption{Impact of SNR on the testing accuracy}
		\end{subfigure}%    
		\caption{The convergence results
				on the softmax regression problem with Fashion-MNIST dataset when $N=50$ and $H=5$.}
		\label{mnist_SNR}
	\end{minipage}  
	\begin{comment}                                  % new try
	\begin{minipage}{.3\textwidth}
	\begin{subfigure}{\textwidth}
	\centering
	\includegraphics[width=2.4in]{./softreg_fmnist/H_train}
	\caption{Training loss}
	\end{subfigure}\\
	\begin{subfigure}{\textwidth}
	\centering
	\includegraphics[width=2.4in]{./softreg_fmnist/H_test}
	\caption{Testing accuracy}
	\end{subfigure}%    
	\caption{{\color{blue}Impact of the number of local updates ($M=50$)}}
	\label{fmnist_H}
	\end{minipage}
	\hfill
	\begin{minipage}{.3\textwidth}
	\begin{subfigure}{\textwidth}
	\centering
	\includegraphics[width=2.4in]{./softreg_fmnist/N_train}
	\caption{Training loss}
	\end{subfigure}\\
	\begin{subfigure}{\textwidth}
	\centering
	\includegraphics[width=2.4in]{./softreg_fmnist/N_test}
	\caption{Testing accuracy}
	\end{subfigure}%    
	\caption{Impact of the number of participating edge devices ($H=5$)}
	\label{fmnist_N}
	\end{minipage}
	\hfill
	\begin{minipage}{.3\textwidth}
	\vspace{-0.45cm}
	\begin{subfigure}{\textwidth}
	\centering
	\includegraphics[width=2.4in]{./softreg_fmnist/SNR_train}
	\caption{Training loss}
	\end{subfigure}\\
	\begin{subfigure}{\textwidth}
	\centering
	\includegraphics[width=2.4in]{./softreg_fmnist/SNR_test}
	\caption{Testing accuracy}
	\end{subfigure}%    
	\caption{Impact of SNR ($H=5$)}
	\label{fmnist_SNR}
	\end{minipage}%
	\end{comment}
	\vspace{-0.4cm}
\end{figure*}

Fig. \ref{fig:n} shows the convergence performance of the FedZO algorithm versus the number of participating edge devices when $H=20$. %when the number of local updates $H=20$.
The number of participating edge devices $M$ takes values from set $\{5, 10, 25, 50\}$. 
It is clear that our proposed algorithm works well in terms of reducing attack loss under the four different values of $M$.
As can be observed, increasing the number of edge devices gives rise to a better convergence speed. 
We can observe the speedup in the number of participating devices, which matches well with our analysis in Section \ref{theory}.

In Fig. \ref{fig:snr}, we show the performance of the AirComp-assisted FedZO algorithm presented in Section \ref{aircomp}. Without loss of generality, we model the channel as $h_i^t \sim \mathcal{CN}(0,1), ~\forall i,t$, and set the threshold $h_{\mathrm{min}} = 0.8$. Besides, we set the number of local iterates to $H=20$.
We take the FedZO algorithm with noise-free aggregation as benchmark with $H=20$, where the participating edge devices are the same as that of the case with noise.
We plot the attack loss versus the number of communication rounds under different SNR, i.e., $P/\sigma_w^2 \in \{-10 ~\text{dB}, ~-5~\text{dB}, ~0 ~\text{dB}\}$.
As can be observed, the convergence of the FedZO algorithm can be preserved under such SNR settings. 
Besides, increasing the SNR accelerates the convergence speed of the FedZO algorithm.  
Especially, the FedZO algorithm with noise $\text{SNR}=0 ~\text{dB}$ attains a comparable performance with the noise-free case. 
These observations are in line with our theoretical analysis in Section \ref{aircomp_analysis}. 

Fig. \ref{fig:acc} further demonstrates the superiority of the proposed FedZO algorithm over the baselines in terms of the attack accuracy. As can be observed, the proposed FedZO algorithm achieves a better attack accuracy than ZONE-S and DZOPA when $H$ is greater than 10. 
	When $H = 5$, ZONE-S achieves a higher attack accuracy than FedZO at the cost of incurring a higher attack loss.
	This is because the perturbation generated by ZONE-S brings a large distortion.
	%It means the FedZO algorithm attains a better trade-off between the distortion and the attack accuracy.

\begin{comment}
\begin{figure*}[!b]
	\centering
	\begin{minipage}{.3\textwidth}
		\begin{subfigure}{\textwidth}
			\centering
			\includegraphics[width=2.4in]{./softreg_fmnist/H_train}
			\caption{Training loss}
		\end{subfigure}\\
		\begin{subfigure}{\textwidth}
			\centering
			\includegraphics[width=2.4in]{./softreg_fmnist/H_test}
			\caption{Testing accuracy}
		\end{subfigure}%    
		\caption{Impact of the number of local updates}
	\end{minipage}
	\hfill
	\begin{minipage}{.3\textwidth}
		\begin{subfigure}{\textwidth}
			\centering
			\includegraphics[width=2.4in]{./softreg_fmnist/N_train}
			\caption{Training loss}
		\end{subfigure}\\
		\begin{subfigure}{\textwidth}
			\centering
			\includegraphics[width=2.4in]{./softreg_fmnist/N_test}
			\caption{Testing accuracy}
		\end{subfigure}%    
		\caption{Impact of the number of participating edge devices}
	\end{minipage}
	\hfill
	\begin{minipage}{.3\textwidth}
		\vspace{-0.45cm}
		\begin{subfigure}{\textwidth}
			\centering
			\includegraphics[width=2.4in]{./softreg_fmnist/SNR_train}
			\caption{Training loss}
		\end{subfigure}\\
		\begin{subfigure}{\textwidth}
			\centering
			\includegraphics[width=2.4in]{./softreg_fmnist/SNR_test}
			\caption{Testing accuracy}
		\end{subfigure}%    
		\caption{Impact of SNR}
	\end{minipage}%
\end{figure*}
\end{comment}

\subsection{Softmax Regression}

%We further validate our algorithm on softmax regression, an extension of logistic regression that serves the multi-class classification task, with datasets extracted from MNIST and FashionMNIST, respectively.  

We further validate our algorithm on the task of softmax regression, which corresponds to the multinomial classifier. We compare the FedZO algorithm with the FedAvg algorithm \cite{mcmahan17a}, which is the most representative first-order method. We set the learning rate (i.e., $\eta$), step size (i.e., $\mu$), and mini-batch sizes (i.e., $b_1$ and $b_2$) for the FedZO algorithm as $0.001$, $0.001$, $25$, and $20$, respectively. For the FedAvg algorithm, we set the learning rate (i.e., $\eta$) as $0.001$. 

\begin{comment}
Consider a $K$-class classification task, where classes are mutually exclusive from each other. For the $i$th data sample $(\bm{a}_i, y_i)$, where $\bm{a}_i \in \mathbb{R}^d$ is the feature vector and $y_i \in \mathbb{R}$ is its corresponding label, the softmax regression model computes activation values of $\bm{a}_i$ for each of the $K$ classes, then normalizes to obtains a set of probabilities which sums to 1. In the ideal scenario, the label with the highest probability should be identical to $y_i$.

Let $\bm{1}\left\{\cdot\right\}$ be the indicator function, so that $\bm{1}\left\{\text{a true statement}\right\} = 1$ and $\bm{1}\left\{\text{a false statement}\right\} = 0$. Then the loss function for $(\bm{a}_i, y_i)$ is 

\equa{
	\label{SoftReg_cost}
	J_i(\bm{\bm{\theta}}) = - \sum_{k=1}^K \bm{1}\left\{y_i = k\right\} \ln \frac{\exp(\bm{\theta}_k^{\sf T} \bm{a}_i )}{\sum_{j=1}^K\exp(\bm{\theta}_j^{\sf T} \bm{a}_i )},
}
where $\bm{\theta} \in \mathbb{R}^{d\times K}$ is the parameter matrix of the classifier and $\bm{\theta}_k$ is the $k$th column vector of $\bm{\theta}$, denoting the parameter vector corresponding to the $k$th class. Denoting the dataset in edge device $n$ as $\mathcal{D}_n$, the cost function of device $n$ can be formulated as $f_n(\bm \theta) =  \frac{1}{|\mathcal{N}_n|}\sum_{i\in \mathcal{D}_n} J_{i}(\bm \theta)$
\end{comment}
We apply the softmax regression model to a 10-class-classification task on Fashion-MNIST \cite{fmnist} dataset. 
Through out the experiment, we set the number of devices $N=50$. 
Our strategy for constructing the non-i.i.d. data distribution follows the seminal work \cite{mcmahan17a}.
In particular, we sort the samples in the training set according to their labels, and then divide the training set into 100 shards of size 600.  
We then assign two shards to each device, such that each device owns a dataset of $1,200$ samples. Each edge device is assigned with at most four distinctive image labels.

%We randomly choose $|\mathcal{N}_n| = 1000$ samples from the shuffled training set for local device $n$, which is non-overlapping with the possessions of device $\{1, \cdots, N\}$, then remove these samples from our training set. %Therefore the data distribution is i.i.d. Detailed hyperparameter setting is listed in Table \ref{tab:hyperpara}.

As shown in Fig. \ref{mnist_H}, the convergence speed of the FedZO algorithm is slightly slower than that of the FedAvg algorithm under the same number of local updates.
This gap is brought by the uncertainty of the gradient estimator that FedZO utilizes.  Further, we notice that the FedZO algorithm with $H=20$ achieves comparable performance as the FedAvg algorithm with $H=5$.
As the FedZO algorithm only relies on the zeroth-order information, the slightly decreased performance is reasonable, and demonstrates the effectiveness of the FedZO algorithm.
This also shows that the FedZO algorithm can serve as a satisfactory alternative for the FedAvg algorithm when the first-order information is not available.

In Fig. \ref{mnist_N}, we take the FedAvg algorithm as a benchmark when $H=5$ and $M=50$, and further investigate the impact of the number of participating edge devices, i.e., $M$, on the convergence behaviour of the FedZO algorithm.
The phenomenon of  speedup in $M$ can be witnessed in both the training loss and testing accuracy. 
We observe that the FedZO algorithm with $H=5$ and $M=50$ attains a comparable performance with the FedAvg algorithm. 

Fig. \ref{mnist_SNR} shows the performance of the AirComp-assisted FedZO algorithm over wireless networks with the same channel setting as mentioned in Section \ref{blackbox_results}. 
It can be observed that our proposed algorithm converges as the number of communication rounds increases and performs well when the SNR is not very small, e.g., $\text{SNR} \in \{-5~\text{dB}, ~0 ~\text{dB} \}$. Results also show that a greater SNR leads to a higher convergence speed. This result fits well with our analysis.

%It can be observed that a lower SNR in general slows down the convergence speed of our algorithm. If the value of SNR is too low, e.g., $-20$ dB, the performance of the  FedZO is poor in both the training loss and testing accuracy.These results fit well with analysis.

%we validate our algorithm in wireless networks with the same channel setting as mentioned in Section \ref{blackbox_results}. It can be observed that, despite the convergence of the FedZO algorithm is preserved when SNR is large, e.g. $\text{SNR} = \{-5~\text{dB}, ~0 ~\text{dB} \}$, a smaller SNR in general slows down the convergence speed of our algorithm.
%If the value of SNR is too small, e.g., $-20$ dB, the performance of the  FedZO is poor in both the training loss and testing accuracy.
%These results fit well with our intuition and analysis.

\section{Conclusion}\label{conclusion}

In this paper, we developed a derivative-free FedZO algorithm to handle federated optimization problems without using the gradient or Hessian information. Under non-convex settings, we characterized its convergence rate on non-i.i.d. data, and demonstrated the linear speedup in terms of the number of participating devices and local iterates. Subsequently, we established the convergence guarantee for the AirComp-assisted FedZO algorithm to support the implementation of the proposed algorithm over wireless networks. 
Simulation results demonstrated the effectiveness of the proposed FedZO algorithm and showed that the FedZO algorithm could serve as a satisfactory alternative for the FedAvg algorithm. It was also validated that the AirComp-assisted FedZO algorithm could attain a comparable performance with that of the noise-free case under certain SNR conditions.

%Experiments of the federated black-box attack demonstrated the ability of the FedZO to tackle the practical ML task without using gradient information. Furthermore, the effectiveness of the FedZO was evaluated by comparing it with the FedAvg on the softmax regression. Results show that the proposed FedZO algorithm can serve as a satisfactory alternative for the existing first- or second-order federated optimization algorithms. In particular, the trends of attack loss and training loss with respect to the number of local updates and edge devices in the simulation are well-fitted with the theoretic analysis. It was also shown that the AirComp based FedZO algorithm could attain a comparable performance with that of the noise-free case under appropriate SNR.

%\newpage
%a
%\newpage

\appendix
%{\color{blue} %whole appendix

%In this section, we will provide the details for proving Theorems \ref{theorem_full}-\ref{theorem_noise}. 
To prove Theorems \ref{theorem_full} and  \ref{theorem_part}, we first characterize per round progress by Lemmas \ref{progress_lem_full} and \ref{progress_lem_part}, respectively, and then bound the client drift during $H$ local iterates by Lemma \ref{drift_lem}. To prove Theorem \ref{theorem_noise}, we further bound the wireless noise by Lemma \ref{drift_lem_H}.
The proofs of these lemmas are deferred to Appendix \ref{proof_lemmas}. Before presenting the proofs, we first  introduce some notations that are frequently used in this appendix. 

%\vspace{2mm}
%\emph{Notations:} 

%\subsection{Notations}
Let $\mathcal{F}^{(t,k)}$ be a $\sigma$-field representing all the historical information of the FedZO algorithm up to the start of the $k$-th iteration of the $t$-th round. $\mathbb{E}_t$ and $\mathbb{E}_{t}^k$  denote expectations conditioning on $\mathcal{F}^{(t,0)}$ and  $\mathcal{F}^{(t,k)}$, respectively. Let 
\begin{small}
	$
	\zeta_t = \big \{\{\xi_{i,m}^{(t,k)}, \bm v_{i,n}^{(t,k)} \}_{i=1,2,\ldots,N; k=0,1,\ldots,H-1 }^{m=1,2,\ldots,b_1; n = 1,2,\ldots,b_2} \big \}~\text{and}~
	 \zeta_t^{k} = \big \{\{\xi_{i,m}^{(t,\tau)}, \bm v_{i,n}^{(t,\tau)} \}_{i=1,2,\ldots,N; \tau=0,1,\ldots,k }^{m=1,2,\ldots,b_1; n = 1,2,\ldots,b_2} \big\}.
	 $
\end{small}
%In particular, $\zeta_t$ and $\zeta_t^{k}$ represent the randomness of local updates from the first to the last step and from the first to the $k$-th step at the $t$-th round, respectively.
 $\mathbb{E}_{\zeta_t}$, $\mathbb{E}_{\zeta_t^{k}}$, and $\mathbb{E}_{\mathcal{M}_{t}}$  denote expectations over $\zeta_t$, $\zeta_t^{k}$, and $\mathcal{M}_{t}$, respectively.

\subsection{Proof of Theorem \ref{theorem_full}}\label{appen_proof_theorem_full}
For notational ease, we denote $\delta_t = \mathbb{E}_{\zeta_t} \!\left[\frac{1}{N} \! \sum_{i=1}^N \! \sum_{k=0}^{H-1} \!  \|\bm x_i^{(t,k)} \!-\! \bm x^t\|^2 \!\right]$.
%, where $\mathbb{E}_t$ represents the expectation conditioning on $\bm x^t $.
Before proving Theorem \ref{theorem_full}, we present the following two lemmas. The first lemma characterizes how the global loss, i.e., $f(\bm x^t)$, evolves as the iteration continues. 
\begin{lemma}\label{progress_lem_full}
With Assumptions \ref{bounded}-\ref{bound_heterogeneity} and full device participation, 
by letting $\eta \leq \frac{1}{2HL}$, we have
\begin{align}\label{progress_full_equation}
&\mathbb{E}_{t} \left[ f\left(\bm x^{t+1}\right) \right] 
\leq f\left(\bm x^t\right)  
\!-\! \left( \frac{\eta H}{2} \!-\! \eta^2 \frac{6\tilde{c}_g\tilde{c}_hHL}{N} \right) \left\|\nabla f\left(\bm x^t\right)\right\|^2   \nonumber \\
&~~ \!+\!  \left( \eta L^2 \!+\! \eta^2 \frac{6\tilde{c}_gL^3 }{N} \right) \delta_t \!+\! \eta^2 \frac{2HL}{N} \tilde{\sigma}^2   \nonumber \\
&~~ \!+\eta^2\frac{2HL^3\mu^2}{N}\!+\! \eta^2 \frac{d^2 H L^3\mu^2}{2Nb_1b_2} \!+\! \eta H L^2 \mu^2.
\end{align}
Please refer to Appendix \ref{appen_proof_lem_1} for the proof.
\end{lemma}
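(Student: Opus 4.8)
The plan is to start from the $L$-smoothness of $f$ applied to consecutive global iterates $\bm x^{t}$ and $\bm x^{t+1} = \bm x^{t} + \frac{1}{N}\sum_{i=1}^N \bm\Delta_i^t$, where $\bm\Delta_i^t = -\eta\sum_{k=0}^{H-1}\bm e_i^{(t,k)}$. This gives
\begin{align}\label{plan_smooth}
f(\bm x^{t+1}) \le f(\bm x^t) - \eta\Big\langle \nabla f(\bm x^t), \tfrac{1}{N}\sum_{i=1}^N\sum_{k=0}^{H-1}\bm e_i^{(t,k)}\Big\rangle + \tfrac{L\eta^2}{2}\Big\|\tfrac{1}{N}\sum_{i=1}^N\sum_{k=0}^{H-1}\bm e_i^{(t,k)}\Big\|^2. \nonumber
\end{align}
First I would take the conditional expectation $\mathbb{E}_t$. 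The key identity is \eqref{mini_batch_equal}, which says $\mathbb{E}[\bm e_i^{(t,k)} \mid \mathcal{F}^{(t,k)}] = \nabla f_i^\mu(\bm x_i^{(t,k)})$; note this is the gradient of the \emph{smoothed} local function at the \emph{drifted} local iterate, not $\nabla f_i(\bm x^t)$, so two sources of error must be tracked. To handle the inner product term I would write $\nabla f_i^\mu(\bm x_i^{(t,k)}) = \nabla f(\bm x^t) + \big(\nabla f_i^\mu(\bm x_i^{(t,k)}) - \nabla f_i(\bm x_i^{(t,k)})\big) + \big(\nabla f_i(\bm x_i^{(t,k)}) - \nabla f_i(\bm x^t)\big) + \big(\nabla f_i(\bm x^t) - \nabla f(\bm x^t)\big)$, use $\frac{1}{N}\sum_i \nabla f_i(\bm x^t) = \nabla f(\bm x^t)$ to kill the last bracket in aggregate, and bound the first bracket using the standard smoothing bias estimate $\|\nabla f_i^\mu(\bm x) - \nabla f_i(\bm x)\| \le \frac{L\mu d}{2}$ (or $\le \frac{L\mu\sqrt d}{2}$, whichever convention the paper uses — this is where the $\mu^2 d^2$ and $\mu^2$ terms originate) and the second using $L$-smoothness to get $\le L\|\bm x_i^{(t,k)}-\bm x^t\|$, which after taking $\mathbb{E}_{\zeta_t}$ produces the $\delta_t$ terms. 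I would repeatedly apply Young's/AM-GM inequalities of the form $\langle a,b\rangle \le \frac{\eta H}{c}\|a\|^2 + \frac{c}{4\eta H}\|b\|^2$ to split off a $-\frac{\eta H}{2}\|\nabla f(\bm x^t)\|^2$ main descent term while pushing residuals into $\delta_t$, $\tilde\sigma^2$, and $\mu^2$ terms.

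Next I would bound the second-moment term $\mathbb{E}_t\big\|\frac{1}{N}\sum_i\sum_k \bm e_i^{(t,k)}\big\|^2$. The natural decomposition is into the mean part $\big\|\frac{1}{N}\sum_i\sum_k \nabla f_i^\mu(\bm x_i^{(t,k)})\big\|^2$ — handled exactly as the inner product, yielding contributions proportional to $H^2\|\nabla f(\bm x^t)\|^2$ (absorbed by choosing $\eta \le \frac{1}{2HL}$, which is why that condition appears), $L^2 \delta_t$, and $\mu^2$ — plus the variance part. For the variance I need a bound on $\mathbb{E}\|\bm e_i^{(t,k)}\|^2$ or on $\mathbb{E}\|\bm e_i^{(t,k)} - \nabla f_i^\mu(\bm x_i^{(t,k)})\|^2$; here Assumption \ref{bound_variance} (second-moment bound on the true stochastic gradient, with constant $c_g$) together with standard zeroth-order estimator variance bounds (e.g.\ from \cite{GaoJZ18}) gives something like $\mathbb{E}\|\bm e_i^{(t,k)}\|^2 \le \frac{c_g d}{b_1 b_2}\|\nabla f_i(\bm x_i^{(t,k)})\|^2 + \frac{d\sigma_g^2}{b_1b_2} + (\text{small }\mu\text{ terms})$; combined with Assumption \ref{bound_heterogeneity} (heterogeneity, constant $c_h$, $\sigma_h^2$) this is where $\tilde c_g = 1 + \frac{c_g d}{b_1b_2}$, $\tilde c_h = 1+c_h$, and $\tilde\sigma^2 = 3(1+\frac{c_g d}{b_1b_2})\sigma_h^2 + \frac{d\sigma_g^2}{b_1b_2}$ crystallize. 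The $\frac{1}{N}$ factors on the variance terms in \eqref{progress_full_equation} come from independence of the estimators across the $N$ devices, which lets the cross terms vanish in expectation.

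Finally I would collect terms: the $\|\nabla f(\bm x^t)\|^2$ coefficient becomes $-(\frac{\eta H}{2} - \eta^2\frac{6\tilde c_g\tilde c_h HL}{N})$, the $\delta_t$ coefficient becomes $\eta L^2 + \eta^2\frac{6\tilde c_g L^3}{N}$ (the $\eta^2$ piece coming from the second-moment term, the $\eta$ piece from the inner product), and the remaining noise/smoothing terms assemble into $\eta^2\frac{2HL}{N}\tilde\sigma^2 + \eta^2\frac{2HL^3\mu^2}{N} + \eta^2\frac{d^2 HL^3\mu^2}{2Nb_1b_2} + \eta H L^2\mu^2$. The main obstacle is bookkeeping: choosing the Young's-inequality splitting constants consistently so that (i) the descent coefficient stays at exactly $\frac{\eta H}{2}$ minus a controllable $\eta^2$ correction, (ii) the $\delta_t$ coefficient does not blow up, and (iii) every $\mu$-dependent and variance-dependent residual lands in the claimed closed form with the stated numerical constants — all while keeping the $H^2\|\nabla f\|^2$ term from the squared-sum under control via $\eta \le \frac{1}{2HL}$. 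The subsequent step (bounding $\delta_t$ via the client-drift Lemma \ref{drift_lem}) is deferred, so here I only need \eqref{progress_full_equation} as stated.
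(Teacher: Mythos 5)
Your overall architecture matches the paper's: start from $L$-smoothness of $f$ applied to $\bm x^{t+1}=\bm x^t-\eta\frac{1}{N}\sum_i\sum_k\bm e_i^{(t,k)}$, use \eqref{mini_batch_equal} to replace $\bm e_i^{(t,k)}$ by $\nabla f_i^\mu(\bm x_i^{(t,k)})$ in the inner product, split the bias into the smoothing error ($\le\mu L$) plus the drift error ($\le L\|\bm x_i^{(t,k)}-\bm x^t\|$), and bound the variance of the estimator via the zeroth-order second-moment bound together with Assumptions \ref{bound_variance}--\ref{bound_heterogeneity}, with independence across devices supplying the $1/N$ factors. That part is sound and is exactly how $\tilde c_g$, $\tilde c_h$, $\tilde\sigma^2$ arise in the paper.

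The genuine gap is in how you handle the mean part of the second-moment term, $\eta^2 L\,\mathbb{E}\|\frac{1}{N}\sum_i\sum_k\nabla f_i^\mu(\bm x_i^{(t,k)})\|^2$. You propose to bound it "exactly as the inner product," producing a contribution proportional to $\eta^2 L H^2\|\nabla f(\bm x^t)\|^2$ and absorbing it via $\eta\le\frac{1}{2HL}$. But $\eta^2 LH^2\le\frac{\eta H}{2}$ only gives you a positive correction of order $\frac{\eta H}{2}\|\nabla f(\bm x^t)\|^2$ (times a Cauchy--Schwarz constant $\ge 1$), which cancels or exceeds the entire $-\frac{\eta H}{2}\|\nabla f(\bm x^t)\|^2$ descent term; it cannot be absorbed into the claimed correction $\eta^2\frac{6\tilde c_g\tilde c_h HL}{N}$, which carries a $1/N$ that only the variance part can produce. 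The paper avoids this by using the exact polarization identity $2\langle \bm a,\bm b\rangle=\|\bm a\|^2+\|\bm b\|^2-\|\bm a-\bm b\|^2$ on the inner-product term (see \eqref{inne_gra_app}), which generates an additional \emph{negative} term $-\frac{\eta H}{2}\mathbb{E}\|\frac{1}{NH}\sum_i\sum_k\nabla f_i^\mu(\bm x_i^{(t,k)})\|^2$; the mean part of the second moment is then cancelled against this term exactly, using $\eta^2 L\le\frac{\eta}{2H}$ — that is the true role of the condition $\eta\le\frac{1}{2HL}$, and it is the step your Young's-inequality splitting destroys. Replacing the polarization identity with one-sided Young's inequalities loses the negative quadratic in the averaged estimator mean, and the proof does not close with the stated constants. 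A secondary bookkeeping slip: the $d^2\mu^2/(b_1b_2)$ term originates from the second moment of the single-point estimator (the $\frac{1}{2}d^2L^2\mu^2$ bound of \cite{GaoJZ18}), not from the first-order smoothing bias, which in this paper is dimension-free ($\|\nabla f_i^\mu-\nabla f_i\|\le\mu L$).
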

%Please refer to Appendix \ref{appen_proof_lem_1} for the proof.
%\begin{proof}
%Please refer to Appendix \ref{appen_proof_lem_1}.	
%\end{proof}

Lemma \ref{progress_lem_full} implies that we need to bound $\delta_t$, which is tackled by the following lemma.
\begin{lemma}
\label{drift_lem}
With Assumptions \ref{bounded}-\ref{bound_heterogeneity} and $\eta \leq \frac{1}{3HL\sqrt{\tilde{c}_g}}$, we have 
\begin{small}
\equa{\label{}
\delta_t
\!\leq \! 3 \eta^2 \tilde{c}_g \tilde{c}_h H^3 \left \|\!\nabla f(\bm x^t) \!\right \|^2
\!+\! H^3 L^2 \eta^2\mu^2 \!+\! \eta^2 H^3 \tilde{\sigma}^2 
\!+\! \frac{d^2 H^3 L^2}{4b_1b_2} \eta^2 \mu^2. \nonumber 
}
\end{small}
Please refer to Appendix \ref{appen_proof_lem_2} for the proof. 
\end{lemma}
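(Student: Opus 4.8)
\textbf{Proof proposal for Lemma \ref{drift_lem}.}

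The plan is to derive a recursive inequality for the per-iterate client drift $\delta_t^{(k)} \triangleq \mathbb{E}_{\zeta_t}\!\left[\frac{1}{N}\sum_{i=1}^N \|\bm x_i^{(t,k)} - \bm x^t\|^2\right]$ in the index $k$, and then sum it over $k = 0, 1, \ldots, H-1$ to bound $\delta_t = \sum_{k=0}^{H-1} \delta_t^{(k)}$. First I would write $\bm x_i^{(t,k+1)} - \bm x^t = (\bm x_i^{(t,k)} - \bm x^t) - \eta \bm e_i^{(t,k)}$ and expand the squared norm. Taking the conditional expectation $\mathbb{E}_t^k$ and using that $\mathbb{E}_t^k[\bm e_i^{(t,k)}] = \nabla f_i^{\mu}(\bm x_i^{(t,k)})$ from \eqref{mini_batch_equal}, I would split the cross term and the second-moment term. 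For the second moment of the gradient estimator, I would invoke the variance bound on $\bm e_i^{(t,k)}$ — a standard zeroth-order estimate of the form $\mathbb{E}_t^k\|\bm e_i^{(t,k)}\|^2 \lesssim \tilde{c}_g \|\nabla f_i(\bm x_i^{(t,k)})\|^2 + \tilde{\sigma}^2/(\text{something}) + \frac{d^2 L^2 \mu^2}{b_1 b_2}$ — which combines Assumption \ref{bound_variance}, the smoothing bias $\|\nabla f_i^{\mu} - \nabla f_i\| \leq \tfrac{dL\mu}{2}$ (or a similar $\mu$-dependent term), and the mini-batch averaging over $b_1 b_2$ samples. I expect this second-moment lemma to be either a sub-result proven alongside or a known fact from \cite{GaoJZ18}; if not stated, I would establish it first.

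Next I would control $\|\nabla f_i(\bm x_i^{(t,k)})\|^2$ by adding and subtracting $\nabla f_i(\bm x^t)$ and $\nabla f(\bm x^t)$: using $L$-smoothness (Assumption \ref{assump_smooth}), $\|\nabla f_i(\bm x_i^{(t,k)}) - \nabla f_i(\bm x^t)\|^2 \leq L^2 \|\bm x_i^{(t,k)} - \bm x^t\|^2$, which feeds back into $\delta_t^{(k)}$; and using the bounded heterogeneity (Assumption \ref{bound_heterogeneity}), $\|\nabla f_i(\bm x^t) - \nabla f(\bm x^t)\|^2 \leq c_h\|\nabla f(\bm x^t)\|^2 + \sigma_h^2$, so that $\|\nabla f_i(\bm x^t)\|^2 \lesssim \tilde{c}_h \|\nabla f(\bm x^t)\|^2 + \sigma_h^2$. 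After averaging over $i$ and applying the elementary inequality $(a+b)^2 \le (1+\tfrac{1}{H})a^2 + (1+H)b^2$ to the recursion, I would obtain something of the form
\[
\delta_t^{(k+1)} \leq \left(1 + \frac{1}{H}\right)(1 + c\eta^2 L^2)\,\delta_t^{(k)} + c'\eta^2 H\Big(\tilde{c}_g\tilde{c}_h\|\nabla f(\bm x^t)\|^2 + \tilde{\sigma}^2 + L^2\mu^2 + \tfrac{d^2 L^2\mu^2}{b_1b_2}\Big).
\]

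Finally, I would unroll this linear recursion from $k=0$ (where $\delta_t^{(0)} = 0$ since $\bm x_i^{(t,0)} = \bm x^t$) and use the condition $\eta \leq \frac{1}{3HL\sqrt{\tilde{c}_g}}$ to ensure the geometric factor $\big(1+\tfrac{1}{H}\big)(1 + c\eta^2 L^2)$ raised to powers $\le H$ stays bounded by a small constant (e.g.\ $\le 3$ or so, using $(1+\tfrac1H)^H \le e$ and the smallness of $\eta^2 L^2$). Summing the resulting geometric series over $k \le H-1$ contributes an extra factor $H$, yielding the stated $H^3 \eta^2$-type bound with the constants $3\tilde{c}_g\tilde{c}_h$ on the gradient term and $1$, $1$, $\tfrac{d^2}{4b_1b_2}$ on the $\tilde{\sigma}^2$, $L^2\mu^2$, and $\tfrac{d^2 L^2\mu^2}{b_1b_2}$ terms respectively. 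The main obstacle I anticipate is getting the second-moment bound on the zeroth-order estimator $\bm e_i^{(t,k)}$ sharp enough — in particular tracking the $d$-dependence and the $\mu$-dependent smoothing bias correctly through the mini-batch averaging — since this is where the dimension factor $d^2/(b_1 b_2)$ and the $\tilde{c}_g = 1 + c_g d/(b_1 b_2)$ structure enter, and a loose treatment here would propagate into both the learning-rate condition and the final rate. Careful bookkeeping of which terms get the $(1+1/H)$ versus $(1+H)$ weight in the Young's inequality split is also needed to land exactly on $H^3$ rather than a worse power.
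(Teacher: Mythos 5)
Your proposal is sound and would deliver a bound of the stated form, but it takes a genuinely different route from the paper. You propose a one-step recursion $\delta_t^{(k+1)} \leq (1+\tfrac{1}{H})(1+c\eta^2L^2)\delta_t^{(k)} + c'\eta^2 H(\cdots)$ via Young's inequality, unrolled geometrically with $(1+\tfrac{1}{H})^H \leq e$. The paper instead never forms a per-step recursion: it unrolls the update completely, writing $\bm x_i^{(t,\tau)} - \bm x^t = -\eta\sum_{k=0}^{\tau-1}\bm e_i^{(t,k)}$, applies Cauchy--Schwarz to pull out a factor $\tau$, inserts the second-moment bound on $\bm e_i^{(t,k)}$ (which it establishes exactly as you anticipate, via \eqref{single_equal}, Assumption \ref{bound_variance}, the $\mu L$ smoothing bias, and the $1/(b_1b_2)$ variance reduction, giving the coefficient $2\tilde{c}_g$ on $\|\nabla f_i\|^2$), and then sums over $\tau$ to obtain a self-bounding inequality $\sum_\tau s^{(t,\tau)} \leq 3\tilde{c}_g H^2L^2\eta^2\sum_k s^{(t,k)} + C_0$, which it solves directly using $\eta \leq \tfrac{1}{3HL\sqrt{\tilde{c}_g}}$ to get $1 - 3\tilde{c}_g H^2L^2\eta^2 \geq \tfrac{2}{3}$. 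The practical difference is in the constants: the paper's route inflates $C_0$ only by a factor $\tfrac{3}{2}$, landing exactly on the coefficients $3\tilde{c}_g\tilde{c}_h$, $1$, $1$, $\tfrac{1}{4}$ that are consumed verbatim in the proofs of Theorems \ref{theorem_full}--\ref{theorem_noise}, whereas your geometric unrolling incurs a factor of roughly $e$ and would yield the same $H^3\eta^2$ order but looser constants. So to prove the lemma literally as stated you would either need to tighten your recursion (or simply adopt the sum-then-solve argument), or accept a rescaling of the constants that would then have to be propagated through the learning-rate conditions downstream. Everything else — the decomposition of $\|\nabla f_i(\bm x_i^{(t,k)})\|^2$ through $\nabla f_i(\bm x^t)$ and $\nabla f(\bm x^t)$ using Assumptions \ref{assump_smooth} and \ref{bound_heterogeneity}, and the role of $\delta_t^{(0)}=0$ — matches the paper.
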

%\begin{proof}
%Please refer to Appendix \ref{appen_proof_lem_2}.	
%\end{proof}
By substituting the upper bound of $\delta_t$ in Lemma \ref{drift_lem} into \eqref{progress_full_equation}, we obtain 
\begin{small}
\equa{\label{}
&\mathbb{E}_t\!\left[ \!f\left(\bm x^{t+1}\right) \!\right]
\! \leq 
f\left(\bm x^t\right)  
\!-\! \Big(\! \frac{\eta H}{2} \!-\! 3\eta^2 \tilde{c}_g \tilde{c}_h Q_1\!(\eta) 
  \Big) \! \left\|\nabla f(\bm x^t) \right\|^2   \\
&+  \eta^2 \tilde{\sigma}^2 
Q_1(\eta)   +Q_1(\eta)  \big(L^2 \eta^2\mu^2 
+ \frac{d^2 L^2}{4b_1b_2} \eta^2 \mu^2\big) + \eta H L^2 \mu^2, \nonumber
}
\vspace{-2mm}
\end{small}

\noindent where $Q_1(\eta) = \frac{2HL}{N} + (\eta L^2 + \eta^2 \frac{6\tilde{c}_g L^3 }{N})H^3$. Under condition \eqref{theorem_eta_full}, we have
\begin{small}
\[
Q_1(\eta) \leq \frac{6HL}{N}, ~\eta^2 \frac{18\tilde{c}_g \tilde{c}_hHL}{N} \leq \frac{\eta H}{4}, ~ L^2\eta^2\mu^2 \leq \frac{ NL\eta\mu^2 }{72},
\]
\vspace{-2mm}
\end{small}
%$Q_1(\eta) \leq \frac{6HL}{N}$, $\eta^2 \frac{18\tilde{c}_g \tilde{c}_hHL}{N} \leq \frac{\eta H}{4}$, $L^2\eta^2\mu^2 \leq \frac{ NL\eta\mu^2 }{72}$,

\noindent 
and $\frac{d^2 L^2 \eta^2 \mu^2}{4b_1b_2} \leq \frac{d^2NL \eta \mu^2 }{288b_1b_2\tilde{c}_g\tilde{c}_h} $. Recalling the definition of $\tilde{c}_g$ and the fact that $c_g\tilde{c}_h \geq 1$, we have $\frac{d^2}{b_1b_2\tilde{c}_g\tilde{c}_h}\leq \frac{d}{ c_g\tilde{c}_h}\leq d$. With the above inequalities, we have
\begin{small}
\begin{align}\label{befor_expectation_full}
\mathbb{E}_t \left[ f\left(\bm x^{t+1}\right) \right] 
\leq % =
&f\left(\bm x^t\right)  
- \frac{\eta H}{4} \left\|\nabla f\left(\bm x^t\right)\right\|^2 +\eta^2 \frac{6 HL}{N} \tilde{\sigma}^2 \nonumber \\
&  +\frac{\eta dHL^2 \mu^2 }{48}  + \frac{13}{12}\eta H L^2 \mu^2.
\end{align}
\end{small}
By taking expectation on both sides of \eqref{befor_expectation_full} and telescoping from $t=0$ to $T-1$, we obtain
\begin{small}
\begin{align}\label{}
 \frac{1}{T} \sum_{t=0}^{T-1}   \mathbb{E} \left\|\nabla f\left(\bm x^{t}\right)\right\|^2 
 \leq 
 & 4\frac{f\left(\bm x^{0}\right)  - \mathbb{E} \left[ f\left(\bm x^{T}\right)  \right]}{ HT \eta} + \eta \frac{24 L}{N}  \tilde{\sigma}^2   \nonumber  \\
 & + \frac{dL^2 \mu^2 }{12}  +  5L^2 \mu^2.
\end{align}
\end{small}
By Assumption \ref{bounded}, %$f(\bm x) $ is bounded below,
 i.e., $f(\bm x) \geq f_*$,
we obtain Theorem \ref{theorem_full}.

\subsection{Proof of Theorem \ref{theorem_part}}\label{appen_proof_theorem_part}
We first characterize how the global loss, i.e., $f(\bm x^t)$, evolves as the iteration continues in the following lemma. 
\begin{lemma}\label{progress_lem_part}
With Assumptions \ref{bounded}-\ref{bound_heterogeneity} and partial device participation, by letting the learning rate $\eta \leq \frac{1}{2HL}$, we have
\equa{\label{progress_part_equation}
&\mathbb{E}_t \left[ f\left(\bm x^{t+1}\right) \right] \\
&\leq f\left(\bm x^t\right)  
\!-\! \left( \frac{\eta H}{2} - \eta^2 \frac{6\tilde{c}_g\tilde{c}_hHL}{M}  - \eta^2\frac{9H^2Lc_h}{M} \right) \left\|\nabla f\left(\bm x^t\right)\right\|^2   \\
& +\! \left ( \!\eta L^2 \!+\! \eta^2 \frac{6\tilde{c}_gL^3 }{M} \!+\! \eta^2 18 H L^3\right) \delta_t \!+\! \eta^2\frac{2HL}{M} \tilde{\sigma}^2 \!+\! \frac{9 \eta^2 H^2L \sigma_h^2}{M}\\
&+\! \eta^2\frac{2HL^3\mu^2}{M} \!+\! \eta^2 \frac{d^2 H L^3\mu^2}{2Mb_1b_2} \!+\! 6 \eta^2 H^2L^3\mu^2   \!+\! \eta H L^2 \mu^2.  \nonumber
}
Please refer to Appendix \ref{appen_proof_lem_3} for the proof.
\end{lemma}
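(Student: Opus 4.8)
The plan is to follow the template behind Lemma \ref{progress_lem_full} while carefully accounting for the extra terms created by the random device subsampling. I would start from the $L$-smoothness of $f$ (Assumption \ref{assump_smooth}) applied along the global step $\bm x^{t+1}-\bm x^t=-\tfrac{\eta}{M}\sum_{i\in\mathcal{M}_t}\sum_{k=0}^{H-1}\bm e_i^{(t,k)}$, take $\mathbb{E}_t$, and split into the inner-product (drift) term and the squared-norm (variance) term,
\[
\mathbb{E}_t\!\left[f(\bm x^{t+1})\right]\le f(\bm x^t)+\mathbb{E}_t\!\left[\big\langle\nabla f(\bm x^t),\,\bm x^{t+1}-\bm x^t\big\rangle\right]+\tfrac{L}{2}\,\mathbb{E}_t\!\left\|\bm x^{t+1}-\bm x^t\right\|^2 .
\]
Throughout, the structural fact I would exploit is that $\mathcal{M}_t$ is drawn uniformly and independently of the local randomness $\zeta_t$, so that $\mathbb{E}_{\mathcal{M}_t}\!\left[\tfrac1M\sum_{i\in\mathcal{M}_t}\bm a_i\right]=\tfrac1N\sum_{i=1}^N\bm a_i$ for any device-indexed quantities $\bm a_i$, where the $N$ local trajectories can be treated as virtually computed.

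For the inner-product term, taking $\mathbb{E}_{\mathcal{M}_t}$ first collapses it to exactly the full-participation quantity; then, conditioning iteration by iteration and using $\mathbb{E}[\bm e_i^{(t,k)}\mid\mathcal{F}^{(t,k)}]=\nabla f_i^{\mu}(\bm x_i^{(t,k)})$ from \eqref{mini_batch_equal}, I would decompose $\nabla f_i^{\mu}(\bm x_i^{(t,k)})=\nabla f_i(\bm x^t)+(\nabla f_i^{\mu}(\bm x_i^{(t,k)})-\nabla f_i(\bm x_i^{(t,k)}))+(\nabla f_i(\bm x_i^{(t,k)})-\nabla f_i(\bm x^t))$, average over $i$ to recover $\nabla f(\bm x^t)$, bound the smoothing bias by $L\mu$ and the drift gradient gap by $L\|\bm x_i^{(t,k)}-\bm x^t\|$, and apply Young's inequality. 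This reproduces the $-\tfrac{\eta H}{2}\|\nabla f(\bm x^t)\|^2+\eta L^2\delta_t+\eta HL^2\mu^2$ skeleton already present in Lemma \ref{progress_lem_full}.

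The real work is the squared-norm term. Writing $\bm g_i:=\sum_{k=0}^{H-1}\bm e_i^{(t,k)}$ and $\bar{\bm g}:=\tfrac1N\sum_{i=1}^N\bm g_i$, I would use the uniform-subsampling variance identity $\mathbb{E}_{\mathcal{M}_t}\|\tfrac1M\sum_{i\in\mathcal{M}_t}\bm g_i\|^2=\|\bar{\bm g}\|^2+\tfrac{N-M}{M(N-1)}\tfrac1N\sum_i\|\bm g_i-\bar{\bm g}\|^2$ and bound the last sum crudely by $\tfrac1M\tfrac1N\sum_i\|\bm g_i\|^2$. The $\|\bar{\bm g}\|^2$ piece is handled exactly as the quadratic term of Lemma \ref{progress_lem_full} — split $\bm e_i^{(t,k)}$ into its conditional mean $\nabla f_i^{\mu}(\bm x_i^{(t,k)})$ and a zero-mean noise whose cross terms vanish after nested conditioning, and use $\|\sum_k\bm e_i^{(t,k)}\|^2\le H\sum_k\|\bm e_i^{(t,k)}\|^2$ since the local iterates are not independent — which yields the $\tfrac{2HL}{N}\tilde\sigma^2$, $\tfrac{6\tilde c_gL^3}{N}\delta_t$ and $\mu^2$ terms of that lemma (each absorbed into its $\tfrac1M$-analogue since $M\le N$). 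For the genuinely new sampling-variance piece I would bound $\|\bm g_i\|^2\le H\sum_k\|\bm e_i^{(t,k)}\|^2$, invoke the second-moment bound for the mini-batch zeroth-order estimator (derivable from \eqref{stochastic_estimator_ori}, Assumption \ref{bound_variance}, and the unit-sphere direction properties, which is where $\tilde c_g$ and $\tilde\sigma^2$ enter), and then apply Assumption \ref{bound_heterogeneity} together with $L$-smoothness to rewrite $\|\nabla f_i(\bm x_i^{(t,k)})\|^2$ in terms of $\|\nabla f(\bm x^t)\|^2$, $L^2\|\bm x_i^{(t,k)}-\bm x^t\|^2$, and $\sigma_h^2$; summing over $i$ and $k$ and multiplying by $\tfrac{L\eta^2}{2}$ then produces the extra terms $-\eta^2\tfrac{9H^2Lc_h}{M}\|\nabla f(\bm x^t)\|^2$, $18\eta^2HL^3\delta_t$, $\tfrac{9\eta^2H^2L\sigma_h^2}{M}$, and $6\eta^2H^2L^3\mu^2$. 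Collecting the two terms and using $\eta\le\tfrac1{2HL}$ only to tidy the lowest-order $\eta^2$ coefficients gives the stated inequality. The hardest part will be exactly this bookkeeping: keeping the three nested layers of randomness ($\mathcal{M}_t$, the data samples $\{\xi_{i,m}^{(t,k)}\}$, the directions $\{\bm v_{i,n}^{(t,k)}\}$) in the correct conditioning order, ensuring every drift remainder is folded into the single coefficient multiplying $\delta_t$, and getting the constants $\tilde c_g,\tilde c_h,\tilde\sigma^2$ and the powers of $H$, $M$, $L$ to line up with the statement.
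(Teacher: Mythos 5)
Your overall architecture matches the paper's: smoothness along the aggregated step, collapse of $\mathbb{E}_{\mathcal{M}_t}$ to the full average in the inner-product term, conditional unbiasedness \eqref{mini_batch_equal} plus the smoothing-bias/drift decomposition to get the $-\frac{\eta H}{2}\|\nabla f(\bm x^t)\|^2+\eta L^2\delta_t+\eta HL^2\mu^2$ skeleton, martingale cancellation for the estimator noise, and a device-sampling variance term that produces the new $H^2/M$ contributions. The place where your plan does not deliver the stated constants is the sampling-variance piece. You bound $\frac{1}{N}\sum_i\|\bm g_i-\bar{\bm g}\|^2$ by $\frac{1}{N}\sum_i\|\bm g_i\|^2$, then apply $\|\bm g_i\|^2\le H\sum_k\|\bm e_i^{(t,k)}\|^2$ and the raw second-moment bound for the estimator. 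Two things go wrong. First, by applying Cauchy--Schwarz over $k$ to the \emph{whole} local update before stripping off the conditional mean, you forfeit the martingale cancellation on the zero-mean part, so the additive estimator variance enters this term as $\mathcal{O}(\eta^2H^2L\,\tilde{\sigma}^2/M)$ rather than the single power of $H$ in the lemma's $\eta^2\frac{2HL}{M}\tilde{\sigma}^2$; after the normalization in Theorem \ref{theorem_part} this extra factor of $H$ destroys the linear speedup in $H$. Second, routing $\|\nabla f_i(\bm x_i^{(t,k)})\|^2$ through the estimator second-moment bound tags every downstream term with $\tilde{c}_g=1+c_gd/(b_1b_2)$, so you would obtain coefficients like $\tilde{c}_g\tilde{c}_hH^2L/M$ and $\tilde{c}_gH^2L\sigma_h^2/M$ in place of the $\tilde{c}_g$-free $9H^2Lc_h/M$ and $9H^2L\sigma_h^2/M$ in the statement; since $\tilde{c}_g$ scales with $d/(b_1b_2)$, this is a genuine quantitative loss, not bookkeeping slack.

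The paper avoids both issues by ordering the decompositions differently: it first splits the subsample average of $\bm e_i^{(t,k)}$ into the deviation from $\nabla f_i^{\mu}(\bm x_i^{(t,k)})$ (handled by cross-device and cross-iteration independence, giving the $H/M$ term with $\tilde{\sigma}^2$ and $\tilde{c}_g$) and the subsample average of $\nabla f_i^{\mu}(\bm x_i^{(t,k)})$ alone; only the latter is subjected to the sampling-variance argument, after being further reduced, via two more $\mp$ insertions, to $\frac{1}{M}\sum_{i\in\mathcal{M}_t}\nabla f_i(\bm x^t)-\nabla f(\bm x^t)$, whose variance is controlled directly by Assumption \ref{bound_heterogeneity} with no estimator constants attached. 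If you reorganize your sampling-variance step to act only on the $\nabla f_i^{\mu}$ part (keeping the noise part inside the independence argument), your proof goes through and recovers the stated coefficients; as written, it proves a strictly weaker inequality.
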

%\begin{proof}
%Please refer to Appendix \ref{appen_proof_lem_3}.	
%\end{proof}

\noindent By combining Lemmas \ref{drift_lem} and \ref{progress_lem_part}, we have
\begin{small}
\equa{\label{}
&\mathbb{E}_t\left[ \!f\left(\bm x^{t+1}\right) \!\right]\\
 \leq & f\left(\bm x^t\right) 
\!-\! \Big(\! \frac{\eta H}{2} \!-\! \eta^2\frac{9H^2Lc_h}{M}  \!-\! 3 \tilde{c}_g\tilde{c}_h \eta^2 Q_2\!(\eta) \! \Big ) \! \left\|\nabla f(\bm x^t) \right\|^2   \\
&+\! \eta^2 \tilde{\sigma}^2 Q_2(\eta)  \! +\! \frac{9 \eta^2 H^2L \sigma_h^2}{M} \! +\! Q_2(\eta) \Big( L^2 \eta^2\mu^2
+ \frac{d^2  L^2}{4b_1b_2} \eta^2 \mu^2 \Big)  \\
& 
+\! 6 \eta^2 H^2L^3\mu^2   
\!+\! \eta H L^2 \mu^2, \nonumber
}
\vspace{-2mm}
\end{small} 

\noindent where $Q_2(\eta) = \frac{2HL}{M} + \big(\eta L^2 + \eta^2 \frac{6\tilde{c}_gL^3 }{M} + \eta^2 18 H L^3\big)H^3$. Under condition \eqref{theorem_eta_part}, we have 
\begin{small}
\[
\eta^2\frac{9H^2Lc_h}{M} \leq \frac{\eta H}{8}, ~ Q_2(\eta)\leq \frac{8HL}{M},~ \frac{24 \eta^2 \tilde{c}_g\tilde{c}_h HL}{M}  \leq \frac{\eta H}{8},
\]
\end{small}
\begin{small}
\[
L^2\eta^2\mu^2 \leq \frac{ ML\eta\mu^2 }{192}, ~ \frac{d^2 L^2 \eta^2 \mu^2}{4b_1b_2} \leq \frac{d^2ML \eta \mu^2 }{768b_1b_2\tilde{c}_g\tilde{c}_h},
\]
\vspace{-2mm}
\end{small}

\noindent and $6 \eta^2 H^2L^3\mu^2 \leq 2\eta HL^2\mu^2$.
%$\eta^2\frac{9H^2Lc_h}{M} \leq \frac{\eta H}{8}$, $Q_2(\eta)\leq \frac{8HL}{M}$, $\frac{24 \eta^2 \tilde{c}_g\tilde{c}_h HL}{M}  \leq \frac{\eta H}{8}$, $6 \eta^2 H^2L^3\mu^2 \leq 2\eta HL^2\mu^2$, $L^2\eta^2\mu^2 \leq \frac{ ML\eta\mu^2 }{192}$, and $\frac{d^2 L^2 \eta^2 \mu^2}{4b_1b_2} \leq \frac{d^2ML \eta \mu^2 }{768b_1b_2\tilde{c}_g\tilde{c}_h} $. 
Recalling the definition of $\tilde{c}_g$ and the fact that $c_g\tilde{c}_h \geq 1$, we have $\frac{d^2}{b_1b_2\tilde{c}_g\tilde{c}_h}\leq \frac{d}{ c_g\tilde{c}_h}\leq d$. With the above inequalities, we have 
\begin{small}
\equa{
& \mathbb{E}_t \! \left[\! f\left(\bm x^{t+1}\right) \! \right] 
\! \leq \!
f\left(\bm x^t\right)  
\!-\! \frac{\eta H}{4} \left\|\nabla f\left(\bm x^t\right)\right\|^2   \!+ \!
\eta^2 \frac{8 HL}{M}  \tilde{\sigma}^2 \\
& + \frac{9 \eta^2 H^2L \sigma_h^2}{M} 
+ \frac{\eta H dL^2 {\mu}^2}{96}  
+ \Big(\!3\!+\!\frac{1}{24}\!\Big)\eta H L^2 \mu^2. \nonumber
}
\end{small}

By following the similar steps as in Appendix \ref{appen_proof_theorem_full}, we obtain Theorem \ref{theorem_part}.

%\vspace{-4mm}
\subsection{Proof of Theorem \ref{theorem_noise}}

%As $\bm x^{t+1} = \bm x^t + \frac{1}{|\mathcal{M}_t|} \sum_{i\in \mathcal{M}_t} \Delta_i^t  + \tilde{\bm n}_t,$ 

By denoting $\tilde{\bm x}^{t+1} = \bm x^t + \frac{1}{|\mathcal{M}_t|} \sum_{i\in \mathcal{M}_t} \Delta_i^t$ as the noise-free aggregated model, we have $\bm x^{t+1} = \tilde{\bm x}^{t+1}  + \tilde{\bm n}_t$. 
By applying the smoothness of $f(\bm x)$ in Assumption \ref{assump_smooth}, we obtain
\equa{\label{iter_noise}
f\left(\bm x^{t+1}\right) \leq f\left(\tilde{\bm x}^{t+1}\right) + \left \langle \nabla f\left(\tilde{\bm x}^{t+1}\right),  \tilde{\bm n}_t \right \rangle  + \frac{L}{2} \snorm{\tilde{\bm n}_t}.
}
We denote
$s^{(t,H)} = \frac{1}{N} \sum_{i=1}^N \mathbb{E}_{\zeta_t} 
\|\bm x_i^{(t,H)} \!-\! \bm x^t\|^2$.
By taking an expectation for \eqref{iter_noise} conditioning on $\mathcal{F}^{(t,0)}$ and utilizing $\mathbb{E}_{\zeta_t}\big [\max_i \|\bm x_i^{(t,H)} \!-\! \bm x^t\|^2\big] \leq N s^{(t,H)} $, we have
\equa{\label{evolu_air}
\mathbb{E}_t\! \left[ f\left(\bm x^{t+1}\right) \right]  
 \leq \mathbb{E}_t\! \left[f\left(\tilde{\bm x}^{t+1}\right)  \right]   
\!+\! \frac{L}{2\gamma} \frac{N}{|\mathcal{M}_t|^2}s^{(t,H)}.
}
The above result suggests that we need to bound $s^{(t,H)}$, which can be handled by the following lemma. 
\begin{lemma}
\label{drift_lem_H}
With Assumptions \ref{bounded}-\ref{bound_heterogeneity} hold, we have
\begin{align}\label{H_step_2_all}
 s^{(t,H)} \leq & 6\tilde{c}_g H L^2 \eta^2  \delta_t + 6\tilde{c}_g \tilde{c}_h H^2 \eta^2  \left \|\nabla f(\bm x^t) \right \|^2 \!+\! 2H^2 \eta^2 \tilde{\sigma}^2 \nonumber \\
 & +\! 2 H^2L^2 \eta^2\mu^2 \!+\! \frac{d^2 H^2 L^2 }{2b_1b_2} \eta^2 \mu^2.
 \end{align}
Please refer to Appendix \ref{appen_proof_lem_4} for the proof.	
\end{lemma}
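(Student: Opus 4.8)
The plan is to reuse the client-drift computation behind Lemma~\ref{drift_lem}, but to stop the unrolling exactly at the $H$-th local iterate instead of summing over all iterates, so that $\delta_t$ enters the right-hand side as a given quantity rather than being regenerated by a recursion; as a by-product no step-size restriction is needed, which matches the (unconditional) statement of Lemma~\ref{drift_lem_H}. First I would unroll the local update \eqref{updata_equation}: for each $i$ and each $k$,
\[
\bm x_i^{(t,k)}-\bm x^t = -\eta\sum_{j=0}^{k-1}\bm e_i^{(t,j)},
\]
so, taking $k=H$ and applying the Cauchy--Schwarz inequality $\snorm{\sum_{j=0}^{H-1}\bm a_j}\le H\sum_{j=0}^{H-1}\snorm{\bm a_j}$, one gets $\snorm{\bm x_i^{(t,H)}-\bm x^t}\le \eta^2 H\sum_{k=0}^{H-1}\snorm{\bm e_i^{(t,k)}}$. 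Averaging over $i$ and taking $\mathbb{E}_{\zeta_t}$ reduces everything to a per-iterate second-moment bound on the estimator $\bm e_i^{(t,k)}$ in \eqref{stochastic_estimator}.

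Second, I would bound $\mathbb{E}_t^k\snorm{\bm e_i^{(t,k)}}$ via the second-moment property of the mini-batch zeroth-order estimator, which follows from \eqref{mini_batch_equal}, Assumption~\ref{bound_variance}, and the $L$-smoothness of $F_i$ in Assumption~\ref{assump_smooth}, as in \cite{GaoJZ18}; note the biasedness of the estimator is harmless here since only its second moment is used. This yields a bound of the form $\mathbb{E}_t^k\snorm{\bm e_i^{(t,k)}}\le a\,\tilde c_g\snorm{\nabla f_i(\bm x_i^{(t,k)})} + b\,\tfrac{d\sigma_g^2}{b_1b_2} + c\,\tfrac{d^2L^2\mu^2}{b_1b_2}$ for absolute constants $a,b,c$, where the factor $\tilde c_g = 1+\tfrac{c_g d}{b_1b_2}$ absorbs the ``trivial'' term and the variance term from Assumption~\ref{bound_variance}. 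Then I would peel off the drift by $\snorm{\nabla f_i(\bm x_i^{(t,k)})}\le 2L^2\snorm{\bm x_i^{(t,k)}-\bm x^t}+2\snorm{\nabla f_i(\bm x^t)}$ (via $L$-smoothness), and control $\snorm{\nabla f_i(\bm x^t)}$ by Assumption~\ref{bound_heterogeneity}, i.e.\ $\snorm{\nabla f_i(\bm x^t)}\le 2\snorm{\nabla f_i(\bm x^t)-\nabla f(\bm x^t)}+2\snorm{\nabla f(\bm x^t)}\le 2\tilde c_h\snorm{\nabla f(\bm x^t)}+2\sigma_h^2$, using $\tilde c_h = 1+c_h$.

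Finally, I would substitute the per-iterate bound back, take total expectation, and sum over $k=0,\dots,H-1$. The aggregate $\frac1N\sum_{i=1}^N\sum_{k=0}^{H-1}\mathbb{E}_{\zeta_t}\snorm{\bm x_i^{(t,k)}-\bm x^t}$ is exactly $\delta_t$ (the $k=0$ summand vanishes, the rest are nonnegative), while the $\snorm{\nabla f(\bm x^t)}$, $\sigma_h^2$, $\sigma_g^2$ and $\mu^2$ contributions each occur $H$ times; multiplying through by the prefactor $\eta^2 H$, folding the $\sigma_h^2$ and $\sigma_g^2$ pieces into $\tilde\sigma^2 = 3\tilde c_g\sigma_h^2 + \tfrac{d\sigma_g^2}{b_1b_2}$, and collecting the $\mu^2$ pieces into $2H^2L^2\eta^2\mu^2 + \tfrac{d^2H^2L^2}{2b_1b_2}\eta^2\mu^2$ produces \eqref{H_step_2_all}. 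In effect this bound is the per-$k$ estimate (at general $k$) that already arises inside the proof of Lemma~\ref{drift_lem}, read off at $k=H$. I expect the main obstacle to be purely bookkeeping: pinning down the constants $a,b,c$ and distributing the factors of $2$ in the two splits above so that they collapse cleanly into $\tilde c_g$, $\tilde c_h$, and $\tilde\sigma^2$, and in particular so that the coefficient of $\delta_t$ lands at exactly $6\tilde c_g HL^2\eta^2$ with no loose slack; there is no conceptual difficulty beyond that.
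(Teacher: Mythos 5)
Your proposal is essentially the paper's own proof: the paper obtains Lemma~\ref{drift_lem_H} as a byproduct of the derivation of Lemma~\ref{drift_lem}, namely by unrolling the local updates, applying Cauchy--Schwarz, invoking the second-moment bound on $\bm e_i^{(t,k)}$ (its Eq.~\eqref{mini_batch_bound}) together with the gradient-splitting bound \eqref{T_6_end}, and reading off the resulting per-$\tau$ inequality \eqref{H_step_derive} at $\tau=H$ so that $\sum_{k=0}^{H-1}s^{(t,k)}=\delta_t$ appears directly and no recursion (hence no step-size condition) is needed. The only bookkeeping detail is that the paper uses a single three-way Cauchy--Schwarz split of $\nabla f_i(\bm x_i^{(t,k)})$ (constants $3,3,3$) rather than your two nested two-way splits (constants $2,4,4$), which is what makes the coefficients land at exactly $6\tilde c_gL^2$, $6\tilde c_g\tilde c_h$, and $2\tilde\sigma^2$.
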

%\begin{proof}
%	Please refer to Appendix \ref{appen_proof_lem_4}.	
%\end{proof}
By combining Lemmas \ref{drift_lem}, \ref{progress_lem_part}, and \ref{drift_lem_H}, we obtain 
\begin{small}
\equa{\label{}
&\mathbb{E}_t\! \left[ \!f\left(\bm x^{t+1}\right) \!\right] \\
&\leq f\left(\bm x^t\right)  
\!-\! \Big( \frac{\eta H}{2} \!-\! \eta^2\frac{9H^2Lc_h}{|\mathcal{M}_t|} \!-\! 3\tilde{c}_g\tilde{c}_h\eta^2 \tilde{Q}_3(\eta) \Big) \left\|\nabla f\left(\bm x^t\right)\right\|^2 \\ %1
&  +\! \eta^2 \tilde{\sigma}^2  \tilde{Q}_3(\eta) \! +\! \frac{9 \eta^2 H^2L \sigma_h^2}{|\mathcal{M}_t|} \!+\!  \tilde{Q}_3(\eta) \Big( L^2 \eta^2\mu^2
+ \frac{d^2  L^2}{4b_1b_2} \eta^2 \mu^2 \Big) \\
&+ 6 \eta^2 H^2L^3\mu^2   + \eta H L^2 \mu^2, \nonumber %4
}
\end{small}
\hspace{-0.2cm} where $\tilde{Q}_3(\eta) \!=\!  \frac{NH^2L }{|\mathcal{M}_t|^2\gamma} \!+\! Q_3(\eta) $ and
\begin{small}
	\[
	Q_3(\eta) \!=\! \frac{2HL}{|\mathcal{M}_t|} \!+\! \big( \eta L^2 \!+\!  \eta^2 \frac{6\tilde{c}_g L^3 }{|\mathcal{M}_t|} \!+\! \eta^2 18 H L^3 \!+\! \frac{3\tilde{c}_gNHL^3 \eta^2}{|\mathcal{M}_t|^2\gamma} \big)
	H^3.
	\]%\vspace{-3mm}
\end{small}
 %$Q_3(\eta) \!=\! \frac{2HL}{|\mathcal{M}_t|} \!+\! \big( \eta L^2 \!+\!  \eta^2 \frac{6\tilde{c}_g L^3 }{|\mathcal{M}_t|} \!+\! \eta^2 18 H L^3 \!+\! \frac{3\tilde{c}_gNHL^3 \eta^2}{|\mathcal{M}_t|^2\gamma} \big)H^3$.
Under condition \eqref{theorem_eta_noise}, we have
\begin{small}
\[
\eta^2\frac{9H^2Lc_h}{|\mathcal{M}_t|} \leq \frac{\eta H}{12},~  \frac{3\tilde{c}_g\tilde{c}_hNH^2L \eta^2}{|\mathcal{M}_t|^2\gamma}  \leq \frac{\eta H}{12},~ Q_3(\eta) \leq \frac{8HL}{|\mathcal{M}_t|},
\]%\vspace{2mm}
\end{small}
\begin{small}
\[
\eta^2 \frac{24 \tilde{c}_g\tilde{c}_hHL}{|\mathcal{M}_t|}  \leq \frac{\eta H}{12}, ~ 6 \eta^2 H^2L^3\mu^2 \leq 2\eta HL^2\mu^2, 
\]
%\vspace{-3mm}
\end{small}
\begin{small}
\[
L^2\eta^2\mu^2 \leq \frac{ |\mathcal{M}_t|L\eta\mu^2 }{288}, \text{and}~ \frac{d^2 L^2 \eta^2 \mu^2}{4b_1b_2} \leq \frac{d^2 |\mathcal{M}_t| L \eta \mu^2 }{1152 b_1b_2\tilde{c}_g\tilde{c}_h}.
\]
%\vspace{-2mm}
\end{small}
%$\eta^2\frac{9H^2Lc_h}{M} \leq \frac{\eta H}{12}$, $ \frac{3\tilde{c}_g\tilde{c}_hNH^2L \eta^2}{|\mathcal{M}_t|^2\gamma}  \leq \frac{\eta H}{12}$, $Q_3(\eta) \leq \frac{8HL}{|\mathcal{M}_t|}$, $\eta^2 \frac{24 \tilde{c}_g\tilde{c}_hHL}{|\mathcal{M}_t|}  \leq \frac{\eta H}{12}$, $6 \eta^2 H^2L^3\mu^2 \leq 2\eta HL^2\mu^2$, $L^2\eta^2\mu^2 \leq \frac{ |\mathcal{M}_t|L\eta\mu^2 }{288}$, and $\frac{d^2 L^2 \eta^2 \mu^2}{4b_1b_2} \leq \frac{d^2 |\mathcal{M}_t| L \eta \mu^2 }{1152 b_1b_2\tilde{c}_g\tilde{c}_h} $. 
\hspace{-0.15cm}Recalling the definition of $\tilde{c}_g$ and the fact that $c_g\tilde{c}_h \geq 1$, we have $\frac{d^2}{b_1b_2\tilde{c}_g\tilde{c}_h}\leq \frac{d}{ c_g\tilde{c}_h}\leq d$. With the above inequalities, we have 
\begin{small}
	\begin{align}\label{aqaa}
&\mathbb{E}_t \!\left[\!\left(\bm x^{t+1}\right) \!\right] \leq \! f\left(\bm x^t\right)  
\!-\! \!\frac{\eta H}{4}  \! \left\|\nabla f\left(\bm x^t\right)\right\|^2 \!+\! \eta^2 \frac{8HL}{|\mathcal{M}_t|} \hat{C} \tilde{\sigma}^2 \nonumber \\
&  +\! \frac{9 \eta^2 H^2L \sigma_h^2}{|\mathcal{M}_t|}\!+\! \hat{C}  \frac{\eta dHL^2 \mu^2 }{144} \!+\!  \left(3 + \frac{\hat{C}}{36} \right)\eta H L^2 \mu^2,
\end{align}
\end{small}

\noindent where $\hat{C} = 1 + \frac{NH}{8 \tilde{M} \gamma}$ and $\tilde{M} \leq |\mathcal{M}_t|, ~ \forall t$.
After reorganizing \eqref{aqaa}, we obtain
\begin{small}
\begin{align}\label{befor_expectation_noise} 
 & 
 \left\|\nabla \!f\left(\bm x^t\right)\right\|^2 
\leq % =
4\frac{f\left(\bm x^t\right)  
   \! - \! \mathbb{E}_t \left[ \!f\left(\bm x^{t+1}\right) \!\right]}{\eta H}   
  \!+\! \eta \frac{32 L}{\tilde{M}}   \hat{C} \tilde{\sigma}^2 \nonumber \\
  &\!+\! \frac{36 \eta H L \sigma_h^2}{\tilde{M}} 
 \!+\! \hat{C}  \frac{dL^2 \mu^2 }{36}  
 \!+\!  \left(12 + \frac{\hat{C}}{9} \right) L^2 \mu^2.
\end{align}
\end{small}
By following the similar derivation as in Appendix \ref{appen_proof_theorem_full}, we obtain Theorem \ref{theorem_noise}.

\subsection{Proof of Lemmas}\label{proof_lemmas} 

As Lemma \ref{progress_lem_full} is a simplified version of  Lemma \ref{progress_lem_part}, we first prove Lemma \ref{progress_lem_part} and then prove Lemma \ref{progress_lem_full}.

\subsubsection{Proof of Lemma \ref{progress_lem_part}}\label{appen_proof_lem_3}
%For notational ease, we denote $\bm e_i^{(t,k)} \!=\! \widetilde{\nabla}_v^{\mu} F_i (\bm x_i^{(t,k)}, \xi_i^{(t,k)} )$.
Based on Assumption \ref{assump_smooth} and
\begin{small}
	$\eta \frac{1}{M} \sum_{i \in \mathcal{M}_{t}} \sum_{k=0}^{H-1} \bm e_i^{(t,k)} = \bm x^{t+1} - \bm x^t$,
\end{small}
%$\eta \frac{1}{M} \sum_{i \in \mathcal{M}_{t}} \sum_{k=0}^{H-1} \bm e_i^{(t,k)} = \bm x^{t+1} - \bm x^t$ 
%and the smoothness of $f(\bm x)$, 
we have
\begin{small}
\begin{align}\label{descent_lemma}
f\left(\bm x^{t+1}\right) 
\leq & f\left(\bm x^t\right) 
- \eta  \left \langle \nabla f\left(\bm x^t\right), 
 \frac{1}{M} \sum_{i \in \mathcal{M}_{t}} \sum_{k=0}^{H-1} \bm e_i^{(t,k)} \right \rangle \nonumber \\
& + \eta^2\frac{L}{2} \left \|\frac{1}{M} \sum_{i \in \mathcal{M}_{t}} \sum_{k=0}^{H-1} \bm e_i^{(t,k)} \right \|^2.
\end{align}
\end{small}
%We denote $\zeta_t = \left\{\{\xi_{i,m}^{(t,k)}, \bm v_{i,n}^{(t,k)} \}_{i=1,2,\ldots,N; k=0,1,\ldots,H-1 }^{m=1,\ldots,b_1; n = 1,2,\ldots,b_2} \right\}$.
By taking an expectation for \eqref{descent_lemma} 
%over $\{\mathcal{M}_{t}, \zeta_t \}$ 
conditioning on $\mathcal{F}^{(t,0)}$, we obtain
\begin{small}
\begin{align}\label{descent_lemma_1}
\mathbb{E}_t \! \left[ \!f\left(\bm x^{t+1}\right) \!\right] 
 \leq & f\left(\bm x^t\right) 
\underbrace{- \eta  \mathbb{E}_{\zeta_t} \!\left[ \!\left \langle \! \nabla f\left(\bm x^t\right) \!,\! \frac{1}{M} \mathbb{E}_{\mathcal{M}_{t}} \! \sum_{i \in \mathcal{M}_{t}} \sum_{k=0}^{H-1}\! \bm e_i^{(t,k)} \!\right \rangle \!\right]}_{T_1} \nonumber \\
&+ \eta^2\frac{L}{2} \underbrace{\mathbb{E}_t \left \| \frac{1}{M} \sum_{i \in \mathcal{M}_{t}} \sum_{k=0}^{H-1} \bm e_i^{(t,k)} \right\|^2 }_{T_2}.
\end{align}
\end{small}
%where we utilize $\mathbb{E}_t [\cdot] = \mathbb{E}_{\zeta_t} [\mathbb{E}_{\mathcal{M}_{t}}[\cdot]] $. %$\mathbb{E}_{\zeta_t}$, and $\mathbb{E}_{\mathcal{M}_{t}}$ denote the expectation conditioning on $\bm x^t $, expectation over $\zeta_t$, and expectation over $\mathcal{M}_{t}$, respectively.
%We next bound $T_1$ and $T_2$. 
As $\mathcal{M}_{t}$ is uniformly sampled from $N$ edge devices, by utilizing \cite[Lemma 4]{LiHYWZ20}, we have
%the fact that $\mathbb{E}_{\mathcal{M}_t}\frac{1}{M} \sum_{i \in \mathcal{M}_{t}} z_i = \frac{1}{N}\sum_{i=1}^{N} z_i$ \cite[Lemma 4]{LiHYWZ20}, we have
\begin{small}
\[
T_1 = - \eta  \mathbb{E}_{\zeta_t} \left[  \left \langle \nabla f\left(\bm x^t\right) , \frac{1}{N} \sum_{i=1}^N \sum_{k=0}^{H-1} \bm e_i^{(t,k)} \right \rangle \right].
\]
\end{small}
\hspace{-0.15cm}According to \eqref{mini_batch_equal}, we have
%\cite[Lemma 4.2]{GaoJZ18}, we have
\begin{small}
\equa{\label{two_exp}
\mathbb{E}_{\zeta_t} \left[
\frac{1}{N} \sum_{i=1}^N \sum_{k=0}^{H-1} \left(\bm e_i^{(t,k)} 
- \nabla f_i^{\mu}\left(\bm x_i^{(t,k)} \right)\right) \right] = 0.
}
\end{small}
%\hspace{-0.15cm}where \begin{small}$f_i^{\mu}\big(\bm x_i^{(t,k)} \big) =  \mathbb{E}_{\bm u\sim\mathcal{U}(\mathbb{B}^d)}\big[f_i\big(\bm x_i^{(t,k)} + \mu \bm u \big)\big]$\end{small} is a locally averaged version of \begin{small}$f_i \big(\bm x_i^{(t,k)} \big)$\end{small}. $\mu$ is a positive step size for estimating gradient and $\mathcal{U}(\mathbb{B}^d)$ denotes a uniform distribution over a $d$-dimensional unit ball.
With the above two equalities, we have 
\begin{small}
	\[
T_1 = - \eta \mathbb{E}_{\zeta_t} \left[ \left \langle \nabla f\left(\bm x^t\right) , \frac{1}{N}\sum_{i=1}^N \sum_{k=0}^{H-1} \nabla f_i^{\mu}\left(\bm x_i^{(t,k)} \right)  \right \rangle \right].
\]
\end{small}
Because of the equality $2\langle \bm a, \bm b \rangle \!=\! \|\bm a\|^2 \!+\! \|\bm b\|^2 \!-\! \|\bm a \!-\! \bm b\|^2$, we obtain
\begin{small}
\begin{align}\label{inne_gra_app}
&T_1 \!=\! -\frac{\eta H}{2} \left\|\nabla f\left(\bm x^t\right)\right\|^2 
\!-\! \frac{\eta H}{2} \mathbb{E}_{\zeta_t}\! \left\|\!\frac{1}{NH}\sum_{i=1}^N \! \sum_{k=0}^{H-1} \nabla f_i^{\mu}\left(\bm x_i^{(t,k)} \right)\!\right\|^2  \nonumber\\
&+ \frac{\eta H}{2} \underbrace{ \mathbb{E}_{\zeta_t}\! \left\|\!\frac{1}{NH}\sum_{i=1}^N \sum_{k=0}^{H-1} \left(\nabla f_i^{\mu}\left(\bm x_i^{(t,k)} \right) - \nabla f_i\left(\bm x^t \right)\right)\!\right\|^2 }_{T_3}.
\end{align}
\end{small}
For $T_3$, we have
\begin{small}
\begin{align}\label{diff_app_grad}
&T_3\leq \!  \frac{1}{NH} \mathbb{E}_{\zeta_t}\! \left[\! \sum_{i=1}^N \sum_{k=0}^{H-1} \left \|\! \nabla f_i^{\mu}\left(\bm x_i^{(t,k)} \right) - \nabla f_i\left(\bm x^t \right) \right \|^2 \!\right] \nonumber \\
&= \!  \frac{1}{NH}\mathbb{E}_{\zeta_t}\! \left[\! \sum_{i=1}^N \! \sum_{k=0}^{H-1} \! \left \| \nabla f_i^{\mu}\left(\bm x_i^{(t,k)} \right) \!\mp \! \nabla f_i\left(\bm x_i^{(t,k)} \right) 
 \!-\! \nabla f_i\left(\bm x^t \right) \right \|^2 \! \right]\ \nonumber \\
&\leq \! \frac{2}{NH}  \mathbb{E}_{\zeta_t}\! \left[\!\sum_{i=1}^N \sum_{k=0}^{H-1}\! \left \| \nabla f_i^{\mu}\left(\bm x_i^{(t,k)} \right) - \nabla f_i\left(\bm x_i^{(t,k)} \right) \right \|^2 \!\right] \nonumber \\
&~~~+ \frac{2}{NH}\mathbb{E}_{\zeta_t}\!\left[\! \sum_{i=1}^N \sum_{k=0}^{H-1} \! \left \|\nabla f_i\left(\bm x_i^{(t,k)} \right) - \nabla f_i\left(\bm x^t \right) \right \|^2 \!\right] \nonumber \\
&\leq 2 L^2 \mu^2 +  \frac{2L^2}{NH}\mathbb{E}_{\zeta_t}\! \left[\! \sum_{i=1}^N \sum_{k=0}^{H-1} \! \left\|\bm x_i^{(t,k)} - \bm x^t\right\|^2 \!\right],\, 
\end{align}
\vspace{-0.3cm}
\end{small}

 \noindent where the first inequality follows by the Jensen's inequality, $a\mp b$ represents $a\!-\!b\!+\!b$, the second inequality holds because of the Cauchy-Schwartz inequality, and the last inequality follows by \cite[Lemma 2]{yi2021zeroth} and the smoothness of $f_i\left(\bm x \right)$ in Assumption \ref{assump_smooth}.
By substituting \eqref{diff_app_grad} into \eqref{inne_gra_app}, we have

\begin{small}
\vspace{-0.3cm}
\begin{align}\label{T_1_end}
&T_1 \leq -\frac{\eta H}{2} \left\|\nabla f\left(\bm x^t\right)\right\|^2 
- \frac{\eta H}{2} \mathbb{E}_{\zeta_t} \left \|\frac{1}{NH}\sum_{i=1}^N \sum_{k=0}^{H-1} \nabla f_i^{\mu}(\bm x_i^{(t,k)} ) \right\|^2  \nonumber \\
&+ \eta H L^2 \mu^2 + \eta L^2 \mathbb{E}_{\zeta_t}\! \left[\!\frac{1}{N}\! \sum_{i=1}^N \sum_{k}^H \!  \|\bm x_i^{(t,k)} - \bm x^t\|^2 \!\right].
\end{align}
\end{small}
For $T_2$, according to the Cauchy-Schwartz inequality, we have
\begin{small}
\begin{align}\label{T_2_mid}
T_2 \leq & 2 \underbrace{\mathbb{E}_t \left \| \frac{1}{M} \sum_{i \in \mathcal{M}_{t}} \sum_{k=0}^{H-1} 
\left( \bm e_i^{(t,k)} - \nabla f^{\mu}_i\left(\bm x_i^{(t,k)} \right)  \right) \right\|^2 }_{T_4} \nonumber \\ %T_4
&+ 2 \underbrace{ \mathbb{E}_t \left \| \frac{1}{M} \sum_{i \in \mathcal{M}_{t}} \sum_{k=0}^{H-1} 
\nabla f^{\mu}_i\left(\bm x_i^{(t,k)} \right) \right\|^2 }_{T_5}.  % T_5
\end{align}
\end{small}

\noindent By denoting $\bm h_i = \sum_{k=0}^{H-1} 
( \bm e_i^{(t,k)} \!-\! \nabla f^{\mu}_i(\bm x_i^{(t,k)} )  )$ and utilizing \eqref{mini_batch_equal},
%\cite[Lemma 4.2]{GaoJZ18}, 
we have $\mathbb E_{\zeta_t} [\bm h_i] = 0$. Due to the independence between $\bm h_i$ and $\bm h_j$, $\forall j \neq i$,
%$\{\xi_{i,m}^{(t,k)}, \bm v_{i,n}^{(t,k)}\}$ and $\{\xi_{j,m}^{(t,k)}, \bm v_{j,n}^{(t,k)}\}$, $\forall k,m,n$, $\forall j \neq i$,
 we have $\mathbb E_{\zeta_t} \left[\left \langle \bm h_i, \bm h_j  \right \rangle \right] = 0$.  We thus obtain
 
\begin{small}
\begin{align}
T_4 &=\frac{1}{M^2}  \mathbb{E}_{\mathcal{M}_t} \left[\! \sum_{i \in \mathcal{M}_{t}} \mathbb{E}_{\zeta_t} \left \|\sum_{k=0}^{H-1} 
\left( \bm e_i^{(t,k)} - \nabla f^{\mu}_i\left(\bm x_i^{(t,k)} \right)  \right) \right\|^2 \!\right] \nonumber \\
&=\frac{1}{MN} \sum_{i=1}^N \mathbb{E}_{\zeta_t} \left \|\sum_{k=0}^{H-1} 
\left( \bm e_i^{(t,k)} - \nabla f^{\mu}_i\left(\bm x_i^{(t,k)} \right)  \right) \right\|^2. 
\end{align}
\end{small}
\begin{comment}
Utilizing \eqref{mini_batch_equal}, we have
\begin{small}
\begin{align}\label{t_k}	
\mathbb{E}_{t}^k \left[ \bm e_i^{(t,k)} - \nabla f^{\mu}_i\left(\bm x_i^{(t,k)} \right)  \right]  = 0.
\end{align}
\end{small}
\end{comment} 
According to \eqref{mini_batch_equal} and \cite[Lemma 2]{jianyu}, 
it follows that
\begin{comment}
\begin{small}
\begin{align}
&\mathbb{E}_{\zeta_t} \left \|\sum_{k=0}^{H-1} 
\left( \bm e_i^{(t,k)} - \nabla f^{\mu}_i\left(\bm x_i^{(t,k)} \right)  \right) \right\|^2 \\
=&\sum_{k=0}^{H-1}  \mathbb{E}_{\zeta_t^{k}} \left \|
\bm e_i^{(t,k)} - \nabla f^{\mu}_i\left(\bm x_i^{(t,k)} \right) \right\|^2 + \sum_{k=0}^{H-1} \sum_{k^{\prime}=1\neq k}^{H-1} \\
& \mathbb{E}_{\zeta_t}  \left \langle 
\bm e_i^{(t,k)} - \nabla f^{\mu}_i\left(\bm x_i^{(t,k)} \right ), \bm e_i^{(t,k^{\prime})} - \nabla f^{\mu}_i\left(\bm x_i^{(t,k^{\prime})}\right )
\right \rangle  \nonumber \\
=&\sum_{k=0}^{H-1}  \mathbb{E}_{\zeta_t^{k}} \left \|
\bm e_i^{(t,k)} - \nabla f^{\mu}_i\left(\bm x_i^{(t,k)} \right) \right\|^2, \nonumber
\end{align}
\end{small}
Without loss of generality, we assume $k^{\prime} \leq k$, 
\begin{small}
\begin{align}
&\mathbb{E}_{\zeta_t}  \left \langle 
\bm e_i^{(t,k)} - \nabla f^{\mu}_i\left(\bm x_i^{(t,k)} \right ), \bm e_i^{(t,k^{\prime})} - \nabla f^{\mu}_i\left(\bm x_i^{(t,k^{\prime})}\right )
\right \rangle \\
=&\mathbb{E}_{\zeta_t^k} \left \langle 
\bm e_i^{(t,k)} - \nabla f^{\mu}_i\left(\bm x_i^{(t,k)} \right ), \bm e_i^{(t,k^{\prime})} - \nabla f^{\mu}_i\left(\bm x_i^{(t,k^{\prime})}\right )
\right \rangle \\
=&\mathbb{E}_{\zeta_t^{k^{\prime}}} \left \langle \mathbb{E}[
\bm e_i^{(t,k)} - \nabla f^{\mu}_i\left(\bm x_i^{(t,k)} \right )  \mid \mathcal{F}^{(t,k^{\prime})} ] , \bm e_i^{(t,k^{\prime})} - \nabla f^{\mu}_i\left(\bm x_i^{(t,k^{\prime})}\right )
\right \rangle
\end{align}
\end{small}
\end{comment}
\begin{small}
\equa{
T_4 =\frac{1}{MN} \sum_{i=1}^N \sum_{k=0}^{H-1}  \mathbb{E}_{\zeta_t^{k}} \left \|
 \bm e_i^{(t,k)} - \nabla f^{\mu}_i\left(\bm x_i^{(t,k)} \right) \right\|^2.
} 	
\end{small}
%\noindent where $\zeta_t^{k} = \left\{\{\xi_{i,m}^{(t,\tau)}, \bm v_{i,n}^{(t,\tau)} \}_{i=1,2,\ldots,N; \tau=0,1,\ldots,k }^{m=1,2,\ldots,b_1; n = 1,2,\ldots,b_2} \right\}$ and $\mathbb{E}_{\zeta_t^{k}}$ represents the expectation over $\zeta_t^{k}$ conditioning on $\bm x^t$.
As $\mathbb{E} \|\bm z -\mathbb{E}[\bm z] \|^2  \leq \mathbb{E} \|\bm z\|^2 $, we have 
\begin{small}
\begin{align}\label{T_4_elem}
T_4  \leq  \frac{1}{MN}  \sum_{i =1}^N \sum_{k=0}^{H-1}  \mathbb{E}_{\zeta_t^{k}} \left \|
 \bm e_{i}^{(t,k)} \right\|^2 .
% \\ & = \frac{1}{MN}  \sum_{i =1}^N \sum_{k=0}^{H-1}  \mathbb{E} \left \|\bm e_{i,m,n}^{(t,k)} \right\|^2,
 \end{align}
\end{small}

Recalling \eqref{stochastic_estimator_ori} and \eqref{stochastic_estimator}, 
we have $\bm e_{i}^{(t,k)} =  \frac{1}{b_1 b_2} \sum_{m =1}^{b_1} \sum_{n=1}^{b_2}  \bm e_{i,m,n}^{(t,k)}$ by denoting 
\begin{small}
\begin{align}
\bm e_{i,m,n}^{(t,k)} =  \frac{d \bm v_{i,n}^{(t,k)} }{\mu}
	\left(\!
	 F_i (\bm x_i^{(t,k)} \!+\! \mu \bm v_{i,n}^{(t,k)} ,   \xi_{i,m}^{(t,k)} ) \!-\! F_i (\bm x_i^{(t,k)},  \xi_{i,m}^{(t,k)} )
	\!\right). \nonumber
  \end{align}
\end{small} 
%\hspace{-0.23cm} Note that $\bm e_{i,m,n}^{(t,k)}$ serves as a stochastic gradient estimator based on one sample and one direction.
According to \eqref{single_equal},
%\cite[Lemma 4.2]{GaoJZ18}, 
we have 
\equa{
\mathbb{E}_{t}^k \sbracket{\bm e_{i,m,n}^{(t,k)}} =  \nabla f_i^{\mu}\left(\bm x_i^{(t,k)} \right), ~\forall m,~n.
}
%where $\mathbb{E}_{t}^k$ denotes the expectation over $\{\xi_{i,m}^{(t,k)},\bm v_{i,n}^{(t,k)}\}_{m=1,2\ldots,b_1}^{n = 1,2,\ldots,b_2}$ conditioning on $\bm x_i^{(t,k)}$, $\forall i$. 
Therefore, we can bound $\mathbb{E}_{\zeta_t^{k}} \! \left  \|
\bm e_{i}^{(t,k)} \!\right\|^2$ as follows %$\mathbb{E}_{\zeta_t^{k}}\left[ \cdot \right] = \mathbb{E}_{\zeta_t^{k-1}} \left[ \mathbb{E}_{t}^k \left[ \cdot \right] \right]$
\begin{small}
\begin{align}\label{elem1}
 &\mathbb{E}_{\zeta_t^{k}} \! \left  \|
 \bm e_{i}^{(t,k)} \!\right\|^2 = \mathbb{E}_{\zeta_t^{k\!-\!1}} \left[ \mathbb{E}_{t}^k \left  \|
 \bm e_{i}^{(t,k)} \!\right\|^2 \right] \nonumber \\
 &= \! \mathbb{E}_{\zeta_t^{k\!-\!1}}\!\! \!\left[ \! \mathbb{E}_{t}^k\left \|\! \frac{1}{b_1 b_2} \!\sum_{m =1}^{b_1} \! \sum_{n=1}^{b_2} \bm e_{i,m,n}^{(t,k)} \!-\! \nabla f^{\mu}_i(\bm x_i^{(t,k)} ) \!\right \|^2 \!\!\!+\!  \left \|\! \nabla f^{\mu}_i(\bm x_i^{(t,k)} ) \!\right \|^2 \!\right] \nonumber \\
 &= \! \frac{1}{b_1 b_2} \mathbb{E}_{\zeta_t^{k}}\!  \left \|\! \bm e_{i,1,1}^{(t,k)} \!-\! \nabla f^{\mu}_i(\bm x_i^{(t,k)} ) \!\right \|^2 \!+\! \mathbb{E}_{\zeta_t^{k\!-\!1}}\left \|\! \nabla f^{\mu}_i(\bm x_i^{(t,k)} ) \!\right \|^2 \nonumber \\
 &\leq \! \frac{1}{b_1 b_2} \mathbb{E}_{\zeta_t^{k}}\!  \left \|\! \bm e_{i,1,1}^{(t,k)}  \!\right \|^2 \!+\!\mathbb{E}_{\zeta_t^{k-1}} \left \|\! \nabla f^{\mu}_i(\bm x_i^{(t,k)} ) \!\right \|^2,
\end{align}
\end{small}
\hspace{-0.25cm} where the second equality follows by 
\begin{small}
	 $\mathbb{E} \|\bm z\|^2 = \|\mathbb{E}[\bm z]\|^2  + \mathbb{E}\|\bm z - \mathbb{E}[\bm z]\|^2$,
 \end{small}
%$\mathbb{E} [\|\bm z\|^2\!\mid \!\bm y] = \mathbb{E} [\|\bm z -\mathbb{E}[\bm z\!\mid\! \bm y] \|^2\!\mid\! \bm y] + \|\mathbb{E}[\bm z\!\mid\! \bm y] \|^2$, 
the third equality follows by the fact that $\bm e_{i,m,n}^{(t,k)}$ and $\bm e_{i,m^{\prime},n^{\prime}}^{(t,k)}$ are independent when $m \neq m^{\prime}$ or $n \neq n^{\prime}$, and the inequality holds because of $\mathbb{E} \|\bm z -\mathbb{E}[\bm z] \|^2  \leq \mathbb{E} \|\bm z\|^2 $.
\begin{comment}
Combining \eqref{elem1} and \eqref{elem2}, we obtain 
\begin{small}
	\begin{align}\label{mini_batch_bound_ori}
	\mathbb{E}_{\zeta_t^{k}} \! \left  \|
	\bm e_{i}^{(t,k)} \!\right\|^2 \!  \leq
	\frac{1}{b_1 b_2} \mathbb{E}_{\zeta_t^{k}}\!  \left \|\! \bm e_{i,1,1}^{(t,k)}  \!\right \|^2 \!+\!  2 \mathbb{E}_{\zeta_t^{k\!-\!1}} \! \left \|\!  \nabla f_i(\bm x_i^{(t,k)} )  \!\right \|^2 \! + \! 2L^2\mu^2.
	\end{align}
\end{small}
\end{comment}
We further bound $\mathbb{E}_{\zeta_t^{k}}\!  \left \|\! \bm e_{i,1,1}^{(t,k)}  \!\right \|^2$ as follows
\begin{small}
	\begin{align}\label{single_bound}
	\mathbb{E}_{\zeta_t^{k}}\!  \left \|\! \bm e_{i,1,1}^{(t,k)}  \!\right \|^2 \!& \leq 
	 \! 2d  \mathbb{E}_{\zeta_t^{k}} \! \left \|\nabla F_i(\bm x_i^{(t,k)}, \xi_{i,1}^{(t,k)}) \right \|^2 \!+\! \frac{1}{2} d^2 L^2\mu^2 \nonumber \\
	& \leq 
	2 c_g d \mathbb{E}_{\zeta_t^{k\!-\!1}}\! \left \|\!  \nabla f_i(\bm x_i^{(t,k)} )  \!\right \|^2 \!+\! 2d \sigma_g^2 \!+\! \frac{1}{2} d^2 L^2\mu^2,
	\end{align}
\end{small}
\hspace{-0.15cm}where the first inequality follows by \cite[Lemma 4.1]{GaoJZ18} and
the second inequality follows by Assumption \ref{bound_variance}.
Besides, 
\begin{small}
	\begin{align}\label{elem2}
	&~~~~\mathbb{E}_{\zeta_t^{k\!-\!1}} \left \| \nabla f^{\mu}_i(\bm x_i^{(t,k)} ) \right \|^2 \nonumber \\
	&= \mathbb{E}_{\zeta_t^{k\!-\!1}}\! \left \| \nabla f^{\mu}_i(\bm x_i^{(t,k)} ) \!-\!  \nabla f_i(\bm x_i^{(t,k)} )  \!+\!  \nabla f_i(\bm x_i^{(t,k)} )  \right \|^2 \nonumber \\
	& \leq  2\mathbb{E}_{\zeta_t^{k\!-\!1}}\left \| \nabla f^{\mu}_i(\bm x_i^{(t,k)} ) \!-\!  \nabla f_i(\bm x_i^{(t,k)} ) \right \|^2  \! + \! 2 \mathbb{E}_{\zeta_t^{k\!-\!1}} \!\left \|\!  \nabla f_i(\bm x_i^{(t,k)} )  \right \|^2 \nonumber \\
	& \leq 2\mu^2 L^2 + \! 2\mathbb{E}_{\zeta_t^{k\!-\!1}}\! \left \|  \nabla f_i(\bm x_i^{(t,k)} )  \right \|^2,
	\end{align}
\end{small}
\hspace{-0.2cm}where the last inequality follows by \cite[Lemma 2]{yi2021zeroth}.
By combining \eqref{elem1}, \eqref{single_bound}, and \eqref{elem2}, we have 
\begin{small}
	\begin{align}\label{mini_batch_bound}
	 \mathbb{E}_{\zeta_t^{k}} \! \left  \|
	\bm e_{i}^{(t,k)} \!\right\|^2  
 \leq  & \left(\! 2\!+\! \frac{2 c_g d}{b_1 b_2} \!  \right) \!  \mathbb{E}_{\zeta_t^{k\!-\!1}}  \left \|\nabla f_i(\bm x_i^{(t,k)}) \right \|^2 \nonumber \\
 &+\! \frac{2d\sigma_g^2}{b_1b_2}
	\!+\! \frac{d^2 L^2\mu^2}{2b_1b_2}\!+\! 2L^2\mu^2.
	\end{align}
\end{small}
\begin{comment}
We further bound $\mathbb{E}_{\zeta_t^{k}} \! \left  \|
\bm e_{i}^{(t,k)} \!\right\|^2 $ as follows
\begin{small}
	\begin{align}\label{T_4_T_6}
	& \mathbb{E}_{\zeta_t^{k}} \! \left  \|
	\bm e_{i}^{(t,k)} \!\right\|^2  \! \leq 
	\frac{1}{b_1 b_2} \! \left[ \! 2d  \mathbb{E}_{\zeta_t^{k}} \! \left \|\nabla F_i(\bm x_i^{(t,k)}, \xi_{i,1}^{(t,k)}) \right \|^2 \!+\! \frac{1}{2} d^2 L^2\mu^2 \! \right]  \nonumber \\
	& \quad  + \! 2 \mathbb{E}_{\zeta_t^{k\!-\!1}} \! \left \|\!  \nabla f_i(\bm x_i^{(t,k)} )  \!\right \|^2 
	 \!+\! 2L^2\mu^2
	\nonumber \\
	 & \leq 
	\frac{1}{b_1 b_2} \! \left[ \! 2 c_g d \mathbb{E}_{\zeta_t^{k\!-\!1}}\! \left \|\!  \nabla f_i(\bm x_i^{(t,k)} )  \!\right \|^2 \!+\! 2d \sigma_g^2 \!+\! \frac{1}{2} d^2 L^2\mu^2 \! \right]  \nonumber \\
	& \quad + \! 2 \mathbb{E}_{\zeta_t^{k\!-\!1}} \! \left \|\!  \nabla f_i(\bm x_i^{(t,k)} )  \!\right \|^2 
	  \!+\! 2L^2\mu^2 
	\nonumber \\
	& = \!  \left(\! 2\!+\! \frac{2 c_g d}{b_1 b_2} \!  \right) \!  \underbrace{\mathbb{E}_{\zeta_t^{k\!-\!1}}  \left \|\nabla f_i(\bm x_i^{(t,k)}) \right \|^2}_{T_6} +\! \frac{2d\sigma_g^2}{b_1b_2}
	\!+\! \frac{d^2 L^2\mu^2}{2b_1b_2}\!+\! 2L^2\mu^2,
	%& \quad +\! \frac{2d\sigma_g^2}{b_1b_2}\!+\! \frac{d^2 L^2\mu^2}{2b_1b_2}\!+\! 2L^2\mu^2,
	\end{align}
\end{small}
\hspace{-0.23cm}where the first inequality follows by \cite[Lemma 4.1]{GaoJZ18}, the second inequality follows by Assumption \ref{bound_variance}.
\end{comment}
     We next bound $\mathbb{E}_{\zeta_t^{k\!-\!1}}\! \left \|\!  \nabla f_i(\bm x_i^{(t,k)} )  \!\right \|^2$ as below
\begin{small}
\begin{align}\label{T_6_end}
&\mathbb{E}_{\zeta_t^{k\!-\!1}}\! \left \|\!  \nabla f_i(\bm x_i^{(t,k)} )  \!\right \|^2 \nonumber \\
=&  \mathbb{E}_{\zeta_t^{k\!-\!1}} \left \|\nabla f_i(\bm x_i^{(t,k)}) \mp \nabla f_i(\bm x^t) \mp \nabla f(\bm x^t) \right \|^2  \nonumber  \\
\leq & 3L^2 \mathbb{E}_{\zeta_t^{k\!-\!1}} \left\|\bm x_i^{(t,k)} - \bm x^t\right\|^2 + 3 \sigma_h^2 + (3c_h+3) \left \|\nabla f(\bm x^t) \right \|^2,
\end{align}
\vspace{-4mm}
\end{small}

\noindent where the inequality follows by the Cauchy-Schwartz inequality, Assumption \ref{assump_smooth}, and Assumption \ref{bound_heterogeneity}.
According to \eqref{T_4_elem}, \eqref{mini_batch_bound}, and \eqref{T_6_end}, we obtain 
\begin{small}
\begin{align}\label{T_4_end}
T_4 & \leq  \frac{6 \tilde{c}_g L^2 }{M} \mathbb{E}_{\zeta_t} \left[\! \frac{1}{N} \sum_{i=1}^N \sum_{k=0}^{H-1} \left\|\bm x_i^{(t,k)} - \bm x^t\right\|^2 \! \right]
\!+\! \frac{6\tilde{c}_g \tilde{c}_h H}{M} \left \|\nabla f(\bm x^t) \right \|^2 \nonumber  \\
& + \frac{2H}{M} \tilde{\sigma}^2 \!+\! \frac{2HL^2\mu^2}{M} \!+\!\frac{d^2 H L^2\mu^2}{2Mb_1b_2},
\end{align}
\end{small}
\hspace{-0.20cm}where $\tilde{\sigma}^2$, $\tilde{c}_g$, and $\tilde{c}_h$ are defined in Theorem \ref{theorem_full}.
%$\tilde{\sigma}^2 = 3 \left(\! 1\!+\! \frac{ c_g d}{b_1 b_2} \!  \right) \sigma_h^2 + \frac{d\sigma_g^2}{b_1 b_2}$, $\tilde{c}_g = 1+ \frac{c_g d}{b_1b_2}$, and $\tilde{c}_h = 1+c_h$. They are defined Theorem \ref{theorem_full}.

Next, we split $T_5$ as follows
\begin{small}
\begin{align}\label{T_5_mid}
&T_5 = \mathbb{E}_{\zeta_t} \! \! \Bigg[ \!
\mathbb{E}_{\mathcal{M}_{t}} \left \| \frac{1}{M} \sum_{i \in \mathcal{M}_{t}} \sum_{k=0}^{H-1} 
\nabla f^{\mu}_i\left(\bm x_i^{(t,k)} \right) \right\|^2 \Bigg]= \nonumber \\
&
\underbrace{\mathbb{E}_{\zeta_t} \! \! \Bigg[ \!
\mathbb{E}_{\mathcal{M}_{t}}\! \left \| \frac{1}{M} \! \sum_{i \in \mathcal{M}_{t}} \!\! \sum_{k=0}^{H-1}  \!
	\nabla f^{\mu}_i\left(\bm x_i^{(t,k)} \right) \!-\! \frac{1}{N} \! \sum_{i=1}^N \!\! \sum_{k=0}^{H-1} \! \nabla f^{\mu}_i\left(\bm x_i^{(t,k)} \right) \right \|^2}_{T_6} 
 \nonumber \\
&
+ \left\|\frac{1}{N}\sum_{i=1}^N \sum_{k=0}^{H-1}  \nabla f_i^{\mu}\left(\bm x_i^{(t,k)} \right)\right\|^2  
\Bigg],       
\end{align}
\end{small}
\hspace{-0.23cm}
where the equality follows by 
$\mathbb{E} \|\bm z\|^2 \!=\! \|\mathbb{E} [\bm z]\|^2  \!+\! \mathbb{E}\|\bm z \!-\! \mathbb{E} [\bm z]\|^2$.

For $T_6$, we provide the following upper bounds   
\begin{small}
\begin{align}\label{T_7_end}
		T_6 \!= &\mathbb{E}_{t}\left  \| \frac{1}{M} \sum_{i \in \mathcal{M}_{t}} \sum_{k=0}^{H-1} \nabla f_i^{\mu}(\bm x_i^{(t,k)}) 
\! \mp \! \frac{1}{M} \sum_{i \in \mathcal{M}_{t}} \sum_{k=0}^{H-1} \nabla f_i(\bm x_i^{(t,k)})  \right . \nonumber \\
& \left. \! \mp \! \frac{1}{N}\sum_{i=1}^N \sum_{k=0}^{H-1} \nabla f_i(\bm x_i^{(t,k)} ) 
\!-\!\frac{1}{N}\sum_{i=1}^N \sum_{k=0}^{H-1} \nabla f^{\mu}_i (\bm x_i^{(t,k)} ) \right \|^2 \nonumber\\
\leq&3  \mathbb{E}_{t}\left  \|  \frac{1}{M} \sum_{i \in \mathcal{M}_{t}} \sum_{k=0}^{H-1} \left ( \nabla f_i^{\mu}(\bm x_i^{(t,k)}) \!-\!  \nabla f_i(\bm x_i^{(t,k)}) \right) \right \|^2 \nonumber\\
&+ 3  \mathbb{E}_{t} \left \| \frac{1}{M} \sum_{i \in \mathcal{M}_{t}} \sum_{k=0}^{H-1} \nabla f_i(\bm x_i^{(t,k)}) \!-\! \frac{1}{N}\sum_{i=1}^N \sum_{k=0}^{H-1} \nabla f_i (\bm x_i^{(t,k)} ) \right \|^2 \nonumber\\ 
&+ 3 \mathbb{E}_{t} \left \|\frac{1}{N}\sum_{i=1}^N \sum_{k=0}^{H-1} \left( \nabla f_i(\bm x_i^{(t,k)} ) \!-\! \nabla f^{\mu}_i(\bm x_i^{(t,k)} ) \right)\right \|^2 \nonumber\\
\leq&\underbrace{ 3  \mathbb{E}_{t} \left \| \frac{1}{M} \sum_{i \in \mathcal{M}_{t}} \sum_{k=0}^{H-1} \nabla f_i(\bm x_i^{(t,k)}) 
	\!-\! \frac{1}{N}\sum_{i=1}^N \sum_{k=0}^{H-1} \nabla f_i(\bm x_i^{(t,k)} ) \right \|^2}_{T_7} \nonumber  \\
& + 6H^2L^2\mu^2,
\end{align}
\end{small}
\hspace{-0.2cm}where the first inequality follows from the Cauchy-Schwartz inequality and the second inequality follows by  the Jensen's inequality and \cite[Lemma 2]{yi2021zeroth}. %and \cite[Lemma 4]{LiHYWZ20}.
\begin{comment}
By the Cauchy-Schwartz inequality, we thus have
\begin{small}
	\begin{align}\label{sure}
		&T_6 \leq 
		3  \mathbb{E}_{t}\left  \|  \frac{1}{M} \sum_{i \in \mathcal{M}_{t}} \sum_{k=0}^{H-1} \left ( \nabla f_i^{\mu}(\bm x_i^{(t,k)}) \!-\!  \nabla f_i(\bm x_i^{(t,k)}) \right) \right \|^2 \nonumber\\
		&+ 3  \mathbb{E}_{t} \left \| \frac{1}{M} \sum_{i \in \mathcal{M}_{t}} \sum_{k=0}^{H-1} \nabla f_i(\bm x_i^{(t,k)}) \!-\! \frac{1}{N}\sum_{i=1}^N \sum_{k=0}^{H-1} \nabla f_i (\bm x_i^{(t,k)} ) \right \|^2 \nonumber\\ 
		&+ 3 \mathbb{E}_{t} \left \|\frac{1}{N}\sum_{i=1}^N \sum_{k=0}^{H-1} \left( \nabla f_i(\bm x_i^{(t,k)} ) \!-\! \nabla f^{\mu}_i(\bm x_i^{(t,k)} ) \right)\right \|^2. 
	\end{align}
\end{small}
Utilizing the Jensen's inequality, \cite[Lemma 2]{yi2021zeroth}, and \cite[Lemma 4]{LiHYWZ20}, 
we obtain
\begin{small}
\begin{align}\label{T_7_end}
T_6
\leq & \underbrace{ 3  \mathbb{E}_{t} \left \| \frac{1}{M} \sum_{i \in \mathcal{M}_{t}} \sum_{k=0}^{H-1} \nabla f_i(\bm x_i^{(t,k)}) 
\!-\! \frac{1}{N}\sum_{i=1}^N \sum_{k=0}^{H-1} \nabla f_i(\bm x_i^{(t,k)} ) \right \|^2}_{T_7} \nonumber  \\
& + 6H^2L^2\mu^2.
\end{align}
\end{small}
\end{comment}

By substituting $\mp \frac{1}{M} \sum_{i \in \mathcal{M}_{t}} \sum_{k=0}^{H-1}  \nabla f_i\left(\bm x^t \right)$ and $\mp \frac{1}{N}\sum_{i=1}^N \sum_{k=0}^{H-1} \nabla f_i\left(\bm x^t \right)$
%\begin{small}
%	\[
%	\mp \frac{1}{M} \sum_{i \in \mathcal{M}_{t}} \sum_{k=0}^{H-1}  \nabla f_i\left(\bm x^t \right) ~ \text{and} ~ \mp \frac{1}{N}\sum_{i=1}^N \sum_{k=0}^{H-1} \nabla f_i\left(\bm x^t \right)
%	\]
%\end{small} 
into $T_7$, and then following a similar derivation for bounding $T_6$, we can bound $T_7$ as follows
\begin{small}
\begin{align}\label{T_8_end}
 T_7  \leq & 18HL^2 \mathbb{E}_{\zeta_t} \left[\!\frac{1}{N}\sum_{i=1}^N \sum_{k=0}^{H-1} \left\|\bm x_i^{(t,k)} - \bm x^t\right\|^2 \!\right]\nonumber \\
&+ 9H^2 \underbrace{\mathbb{E}_{\mathcal{M}_t}\left \| \frac{1}{M} \sum_{i \in \mathcal{M}_{t}}  \nabla f_i\left(\bm x^t \right)  
\!-\! \nabla f\left(\bm x^t \right) \right \|^2}_{T_8}.
\end{align}
\end{small}

%\noindent where we utilize the fact that $\mathbb{E}_{\mathcal{M}_t}\frac{1}{M} \sum_{i \in \mathcal{M}_{t}} z_i = \frac{1}{N}\sum_{i=1}^{N} z_i$ \cite[Lemma 4]{LiHYWZ20}. %i.e.,
%\begin{small}
%	\[
 %\mathbb{E}_{\mathcal{M}_t}\frac{1}{M} \sum_{i \in \mathcal{M}_{t}} \sum_{k=0}^{H-1}  \left\|\bm x_i^{(t,k)} - \bm x^t\right\|^2
 %=\frac{1}{N}\sum_{i=1}^N \sum_{k=0}^{H-1} \left\|\bm x_i^{(t,k)} - \bm x^t\right\|^2,
%	\] 
%\end{small}
%which follows by \cite[Lemma 4]{LiHYWZ20}. 
%It is worth noting that inequalities \eqref{sure}-\eqref{T_8_end} hold almost surely.
We continue by bounding $T_8$ as below 
\begin{small}
	\begin{align}\label{T_9_end}
T_8 &=  \mathbb{E}_{\mathcal{M}_t} \left \|\! \frac{1}{M} \sum_{i \in \mathcal{M}_{t}} \left( \nabla f_i\left(\bm x^t \right)  
\!-\! \nabla f\left(\bm x^t \right) \right) \right \|^2 \nonumber\\
&=  \frac{1}{M^2}~\mathbb{E}_{\mathcal{M}_t} \sum_{i \in \mathcal{M}_{t}} \left \| \nabla f_i\left(\bm x^t \right)  \!-\! \nabla f\left(\bm x^t \right) \right \|^2   \nonumber\\
&\leq  \frac{\sigma_h^2}{M} + \frac{c_h}{M} \left \|\nabla f\left(\bm x^t \right) \right \|^2,
\end{align}
\end{small}
\begin{comment}
\begin{small}
	\begin{align}\label{T_9_end}
	T_9 = \frac{1}{M} \mathbb{E}_{i} \left \|\!   \nabla f_i\left(\bm x^t \right)  
	\!-\! \nabla f\left(\bm x^t \right)  \right \|^2 \leq  \frac{\sigma_h^2}{M} + \frac{c_h}{M} \left \|\nabla f\left(\bm x^t \right) \right \|^2,
	\end{align}
\end{small}
\end{comment}
\hspace{-0.15cm}where the second identity follows by the fact that devices in $\mathcal{M}_t$ are independently sampled from device set $\{1,2,\ldots,N\}$ and $\mathbb{E}_{i \sim \{1,2,\ldots,N\}} \sbracket{\nabla f_i\left(\bm x^t \right) }  = \nabla f\left(\bm x^t \right)$,
%the independence between the elements of $\mathcal{M}_t$ and $\mathbb{E}_{_{i \sim \{1,2,\ldots,N\}}} \sbracket{\nabla f_i\left(\bm x^t \right) }  = \nabla f\left(\bm x^t \right)$, 
and the last inequality comes from Assumption \ref{bound_heterogeneity}. 

By combining  \eqref{T_5_mid}, \eqref{T_7_end}, \eqref{T_8_end}, and \eqref{T_9_end},
we bound $T_5$ as
\begin{small}
\begin{align}\label{T_5_end}
	&	T_5 \leq  18 H L^2 \mathbb{E}_{\zeta_t} \! \left[\! \frac{1}{N}\! \sum_{i=1}^N \! \sum_{k=0}^{H-1} \! \left\|\bm x_i^{(t,k)} - \bm x^t\right\|^2 \!\right] \!+\! \frac{9c_hH^2}{M} \left \|\! \nabla f\left(\bm x^t \right) \! \right \|^2   \nonumber \\
		& \!+\!  \frac{9H^2 \sigma_h^2}{M} \!+ 6H^2L^2\mu^2 \!+\! \mathbb{E}_{\zeta_t}  \left\|\frac{1}{N}\sum_{i=1}^N \sum_{k=0}^{H-1} \nabla f_i^{\mu}\left(\bm x_i^{(t,k)} \right)\right\|^2.
\end{align}
\end{small}
 By combining \eqref{descent_lemma_1}, \eqref{T_1_end}, \eqref{T_2_mid}, \eqref{T_4_end}, and \eqref{T_5_end}, 
 we obtain Lemma \ref{progress_lem_part}.

 \subsubsection{Proof of Lemma \ref{progress_lem_full}}\label{appen_proof_lem_1}
%With full device participation, i.e., $M = N$, m
Most of the intermediary results for proving Lemma \ref{progress_lem_part} in Appendix \ref{appen_proof_lem_3} can be directly applied here by replacing $M$ with $N$. The main difference is that there is no randomness in $\mathcal{M}_{t}$, and we thus do not need to construct an upper bound for $T_5$ in \eqref{T_2_mid}.
%\equa{\label{T_5_full_end}
%T_5 =\mathbb{E} \left\|\frac{1}{N}\sum_{i=1}^N \sum_{k=0}^{H-1}  \nabla f_i^{\mu}\left(\bm x_i^{(t,k)} \right)\right\|^2.
%}
By combining \eqref{descent_lemma_1}, \eqref{T_1_end}, \eqref{T_2_mid}, and \eqref{T_4_end},
we obtain Lemma \ref{progress_lem_full}.

\subsubsection{Proof of Lemma \ref{drift_lem}}\label{appen_proof_lem_2}
By denoting $\frac{1}{N}\sum_{i=1}^N \mathbb{E}_{\zeta_t^{k\!-\!1}} \left\|\bm x_i^{(t,k)}  - \bm x^t\right\|^2$ as $s^{(t,k)}$, we have	
\begin{small}
\begin{align}\label{drift_tau}
s^{(t,\tau)} 
&= \eta^2 \frac{1}{N}\sum_{i=1}^N \mathbb{E}_{\zeta_t^{\tau\!-\!1}} \left\|  \sum_{k=0}^{\tau-1} \bm e_i^{(t,k)} \right\|^2   \nonumber \\
&\leq  \tau \eta^2 \sum_{k=0}^{\tau-1} \frac{1}{N}\sum_{i=1}^N 
\mathbb{E}_{\zeta_t^k} \left\|  \bm e_i^{(t,k)} \right\|^2,
\end{align}	
\end{small}
%where $\mathbb{E}_{\zeta_t}$ and $\mathbb{E}_{\zeta_t^k}$ are defined in Appendix \ref{appen_proof_lem_3}, and 
\hspace{-0.23cm} where the inequality follows by the Cauchy-Schwartz inequality. 
By combining \eqref{mini_batch_bound}, \eqref{T_6_end}, and \eqref{drift_tau}, we have
%\begin{small}
	\begin{align}\label{H_step_derive}
s^{(t,\tau)} 
\leq 
& 6\tilde{c}_gL^2 \tau \eta^2  \sum_{k=0}^{\tau-1} s^{(t,k)} + 6 \tau^2 \eta^2\tilde{c}_g \tilde{c}_h \left \|\nabla f(\bm x^t) \right \|^2  \nonumber \\
& +\! 2\tau^2 \eta^2 \tilde{\sigma}^2 \!+\! 2 L^2\tau^2 \eta^2\mu^2 \! +\! \frac{d^2 L^2 \tau^2}{2b_1b_2}\eta^2\mu^2.
\end{align}	
%\end{small}
By taking summation over $\tau$ from $1$ to $H\!-\!1$, we obtain 
%\begin{small}
\begin{align}\label{iterate_s}
\sum_{\tau=1}^{H-1} s^{(t,\tau)} 
&\leq  6\tilde{c}_g L^2 \eta^2  \sum_{\tau=1}^{H-1} \tau  \sum_{k=0}^{\tau-1} s^{(t,k)} 
+ C_0 \nonumber \\
&\leq 3 \tilde{c}_g H^2 L^2 \eta^2  \sum_{k=0}^{H-1} s^{(t,k)} 
+ C_0,
\end{align}	
%\end{small}
where we utilize the property of arithmetic sequence and
\equa{
C_0 =& 2 H^3 \eta^2\tilde{c}_g \tilde{c}_h \left \|\nabla f(\bm x^t) \right \|^2 \!+\! \frac{2}{3}H^3 \eta^2 \tilde{\sigma}^2 \!+\! \frac{2}{3} L^2 H^3 \eta^2\mu^2 \nonumber \\
& + \frac{d^2 L^2 H^3}{6b_1b_2}\eta^2\mu^2.
}
As $s^{(t,0)}\!=\!0$ and by rearranging \eqref{iterate_s}, we have
\begin{align}
&\bracket{1-3 \tilde{c}_g H^2 L^2 \eta^2} \sum_{\tau=0}^{H-1} s^{(t,\tau)} 
\leq  2 H^3 \eta^2\tilde{c}_g \tilde{c}_h \left \|\nabla f(\bm x^t) \right \|^2 \nonumber \\
& +\! \frac{2}{3} L^2 H^3 \eta^2\mu^2 \!+\! \frac{2}{3}H^3 \eta^2 \tilde{\sigma}^2 + \frac{d^2 L^2 H^3}{6b_1b_2}\eta^2\mu^2.
\end{align}	
As $\eta \leq \frac{1}{3HL\sqrt{\tilde{c}_g}}$, we have $3(1-3 \tilde{c}_g H^2 L^2 \eta^2) \geq 2$. Note that $\sum_{\tau=0}^{H-1} s^{(t,\tau)} = \delta_t$.
We thus obtain Lemma \ref{drift_lem}.

\subsubsection{Proof of Lemma \ref{drift_lem_H}}\label{appen_proof_lem_4}
Lemma \ref{drift_lem_H} is a byproduct of the derivation of Lemma \ref{drift_lem}. It follows from \eqref{H_step_derive} by setting $\tau=H$.

\ifCLASSOPTIONcaptionsoff
  \newpage
\fi

\bibliographystyle{IEEEtran}  % set style to IEEE
\bibliography{reference.bib}

\end{document}